\newtheorem{theorem}{Theorem}
\newtheorem{corollary}{Corollary}
\newtheorem{lemma}{Lemma}
\newtheorem{Prop}{Proposition}
\newcommand{\tmmathbf}[1]{\ensuremath{\boldsymbol{#1}}}
\newcommand{\tmop}[1]{\ensuremath{\operatorname{#1}}}
\DeclareMathOperator*{\argmin}{arg\,min}
\DeclareMathOperator*{\argmax}{arg\,max}
\title{Parallelizing MCMC with Random Partition Trees}
\author{
Xiangyu Wang \\
Dept. of Statistical Science\\
Duke University\\
\texttt{xw56@stat.duke.edu} \\
\And
Fangjian Guo\\
Dept. of Computer Science\\
Duke University\\
\texttt{guo@cs.duke.edu} \\
\And
Katherine A.~Heller\\
Dept. of Statistical Science\\
Duke University\\
\texttt{kheller@stat.duke.edu} \\
\And
David B.~Dunson\\
Dept. of Statistical Science\\
Duke University\\
\texttt{dunson@stat.duke.edu} \\
}
\begin{document}

\maketitle

\begin{abstract}
The modern scale of data has brought new challenges to Bayesian inference. In particular, conventional MCMC algorithms are computationally very expensive for large data sets.  A promising approach to solve this problem is embarrassingly parallel MCMC (EP-MCMC), which first partitions the data into multiple subsets and runs independent sampling algorithms on each subset. The subset posterior draws are then aggregated via some combining rules to obtain the final approximation. Existing EP-MCMC algorithms are  limited by approximation accuracy and difficulty in resampling. In this article, we propose a new EP-MCMC algorithm \emph{PART} that solves these problems. The new algorithm applies {\em random partition trees} to combine the subset posterior draws, which is distribution-free, easy to resample from and can adapt to multiple scales. We provide theoretical justification and extensive experiments illustrating empirical performance.
\end{abstract}

\section{Introduction}
Bayesian methods are popular for their success in analyzing complex data sets. However,  for large data sets, Markov Chain Monte Carlo (MCMC) algorithms, widely used in Bayesian inference, can suffer from huge computational expense.  With large data, there is increasing time per iteration, increasing time to convergence, and difficulties with processing the full data on a single machine due to memory limits. 
 To ameliorate these concerns, various methods such as stochastic gradient Monte Carlo \cite{welling2011bayesian} and sub-sampling based Monte Carlo \cite{maclaurinfirefly} have been proposed. Among directions that have been explored, embarrassingly parallel MCMC (EP-MCMC) seems  most promising. EP-MCMC algorithms typically divide the data into multiple subsets and run independent MCMC chains simultaneously on each subset. The posterior draws are then aggregated according to some rules to produce the final approximation. This approach is clearly more efficient as now each chain involves a much smaller data set and the sampling is communication-free. The key to a successful EP-MCMC algorithm lies in the speed and accuracy of the combining rule.

Existing EP-MCMC algorithms can be roughly divided into three categories. The first relies on asymptotic normality of posterior distributions. \cite{scott2013bayes} propose a ``Consensus Monte Carlo'' algorithm, which produces final approximation by a weighted averaging over all subset draws. This approach is effective when the posterior distributions are close to Gaussian, but could suffer from huge bias when skewness and multi-modes are present. The second category relies on calculating an appropriate variant of a mean or median of the subset posterior measures \cite{minsker2014scalable,srivastavawasp}.  These approaches rely on asymptotics (size of data increasing to infinity) to justify accuracy, and lack guarantees in finite samples.
The third category relies on the \emph{product density equation (PDE)} in (\ref{eq:pde}). Assuming $X$ is the observed data and $\theta$ is the parameter of interest, when the observations are iid conditioned on $\theta$, for any partition of $X = X^{(1)}\cup X^{(2)}\cup\cdots\cup X^{(m)}$, the following identity holds,
\begin{align}
p(\theta|X)\propto \pi(\theta)p(X | \theta) \propto p(\theta|X^{(1)})p(\theta|X^{(2)})\cdots p(\theta|X^{(m)}), \label{eq:pde}
\end{align}
if the prior on the full data and subsets satisfy $\pi(\theta) = \prod_{i = 1}^m \pi_i(\theta)$. \cite{neiswanger2013asymptotically} proposes using kernel density estimation on each subset posterior and then combining via \eqref{eq:pde}. They use an independent Metropolis sampler to resample from the combined density. \cite{wang2013parallel} apply the Weierstrass transform directly to \eqref{eq:pde} and developed two sampling algorithms based on the transformed density. These methods guarantee the approximation density converges to the true posterior density as the number of posterior draws increase. However, as both are kernel-based, the two methods are limited by two major drawbacks. The first is the inefficiency of resampling. Kernel density estimators are essentially mixture distributions. Assuming we have collected 10,000 posterior samples on each machine, then multiplying just two densities already yields a mixture distribution containing $10^8$ components, each of which is associated with a different weight. The resampling requires the independent Metropolis sampler to search over an exponential number of mixture components and it is likely to get stuck at one ``good'' component, resulting in high rejection rates and slow mixing. The second is the sensitivity to bandwidth choice, with one bandwidth applied to the whole space.

In this article, we propose a novel EP-MCMC algorithm termed ``parallel aggregation random trees'' (\emph{PART}), which solves the above two problems. The algorithm inhibits the explosion of mixture components so that the aggregated density is easy to resample. In addition, the density estimator is able to adapt to multiple scales and thus achieve better approximation accuracy.  In Section 2, we motivate the new methodology and present the algorithm. In Section 3, we present error bounds and prove consistency of \emph{PART} in the number of posterior draws. Experimental results are presented in Section 4. Proofs and part of the numerical results are provided in the supplementary materials.

\section{Method}
Recall the \emph{PDE} identity \eqref{eq:pde} in the introduction. When data set $X$ is partitioned into $m$ subsets $X = X^{(1)}\cup\cdots\cup X^{(m)} $, the posterior distribution of the $i^\text{th}$ subset can be written as
\begin{equation}
f^{(i)}(\theta) \propto \pi(\theta)^{1/m} p(X^{(i)} | \theta),
\end{equation}
where $\pi(\theta)$ is the prior assigned to the full data set. Assuming observations are iid given $\theta$, the relationship between the full data posterior and subset posteriors is captured by 
\begin{equation}
p(\theta |X) \propto \pi(\theta) \prod_{i=1}^m p(X^{(i)}| \theta) \propto \prod_{i=1}^m f^{(i)}(\theta)\label{eqs:multiplcation}.
\end{equation}
Due to the flaws of applying kernel-based density estimation to  \eqref{eqs:multiplcation} mentioned above, we propose to use \emph{random partition trees} or {\em multi-scale histograms}. Let $\mathcal{F_K}$ be the collection of all $\mathbb{R}^p$-partitions formed by $K$ disjoint rectangular blocks, where a rectangular block takes the form of $A_k \overset{def}{=} (l_{k,1}, r_{k,1}] \times (l_{k,2}, r_{k,2}] \times \cdots (l_{k,p}, r_{k,p}]\subseteq \mathbb{R}^p$ for some $l_{k,q} < r_{k,q}$. A $K$-block histogram is then defined as
\begin{equation}
\hat{f}^{(i)}(\theta) = \sum_{k=1}^{K} \frac{n_k^{(i)}}{N|A_k|} \bm{1}(\theta \in A_k), \label{eqs:subset-estimator}
\end{equation}
where $\{A_k: k=1,2,\cdots,K\}\in\mathcal{F_K}$ are the blocks and $N, n_k^{(i)}$ are the total number of posterior samples on the $i^{\text{th}}$ subset and of those inside the block $A_k$ respectively (assuming the same $N$ across subsets). We use $|\cdot|$ to denote the area of a block. Assuming each subset posterior is approximated by a $K$-block histogram, if the partition $\{A_k\}$ is 
restricted to be \textit{the same} across all subsets, then the aggregated density after applying \eqref{eqs:multiplcation} is still a $K$-block histogram (illustrated in the supplement),
\begin{equation}
\hat p(\theta | X) = \frac{1}{Z} \prod_{i=1}^m \hat{f}^{(i)}(\theta) = \frac{1}{Z} \sum_{k=1}^K \bigg(\prod_{i=1}^m \frac{n_k^{(i)}}{|A_k|} \bigg) \bm{1}(\theta \in A_k) = \sum_{k=1}^K w_k g_k(\theta), \label{eqs:full-est} 
\end{equation}
where $Z=\sum_{k=1}^K \prod_{i=1}^m n_k^{(i)} / |A_k|^{m-1}$ is the normalizing constant, $w_k$'s are the updated weights, and $g_k(\theta) = \text{unif}(\theta;A_k)$ is the block-wise distribution. Common histogram blocks across subsets control the number of mixture components, leading to simple aggregation and resampling procedures.  Our \emph{PART} algorithm consists of {\em space partitioning} followed by {\em density aggregation}, with aggregation simply multiplying densities across subsets for each block and then normalizing.

\subsection{Space Partitioning}
To find good partitions, our algorithm recursively bisects (not necessarily evenly) a previous block along a randomly selected dimension, subject to certain rules. Such partitioning is multi-scale and related to wavelets \cite{liu2014multivariate}. Assume we are currently splitting the block $A$ along the dimension $q$ and denote the posterior samples in $A$ by $\{\theta^{(i)}_j\}_{j\in {A}}$ for the $i^\text{th}$ subset. The cut point on dimension $q$ is determined by a partition rule $\phi(\{\theta_{j,q}^{(1)}\}, \{\theta_{j,q}^{(2)}\}, \cdots, \{\theta_{j,q}^{(m)}\})$. The resulting two blocks are subject to further bisecting under the same procedure until one of the following stopping criteria is met --- (i) $n_k/N < \delta_{\rho}$ or (ii) the area of the block $|A_k|$ becomes smaller than $\delta_{|A|}$. The algorithm returns a tree with $K$ leafs, each corresponding to a block $A_k$. Details are provided in Algorithm \ref{alg:tree}.

\begin{algorithm}
\caption{Partition tree algorithm}
\label{alg:tree}
\begin{algorithmic}[1]
\Procedure{BuildTree}{$\{\theta_j^{(1)}\}, \{\theta_j^{(2)}\}, \cdots, \{\theta_j^{(m)}\}$, $\phi(\cdot)$, $\delta_\rho$, $\delta_a$, $N$, $L$, $R$}
\State $D \leftarrow \{1,2,\cdots, p\}$ 
\While {$D$ not empty}
\State Draw $q$ uniformly at random from $D$. \Comment{Randomly choose the dimension to cut}
\State $\theta_q^{\ast} \leftarrow \phi(\{\theta_{j,q}^{(1)}\}, \{\theta_{j,q}^{(2)}\}, \cdots, \{\theta_{j,q}^{(m)}\})$, $\quad \mathcal{T}.n^{(i)} \leftarrow \text{Cardinality of } \{\theta_j^{(i)}\}$ for all $i$
\If {$\theta_q^{\ast}-L_q>\delta_a$, $R_q - \theta_q^{\ast} > \delta_a$ \textbf{and} $\min(\sum_{j} \bm{1}(\theta_{j,q}^{(i)} \leq \theta_q^{\ast}), \sum_{j}\bm{1}(\theta_{j,q}^{(i)}> \theta_q^{\ast}))>N\delta_\rho$ for all $i$ }
\State $L' \leftarrow L$, $L_q' \leftarrow \theta_q^{\ast}$, $R' \leftarrow R$, $R_q' \leftarrow \theta_q^{\ast}$ \Comment{Update left and right boundaries}
\State $\mathcal{T.L} \leftarrow \textsc{BuildTree}(\{\theta_j^{(1)}: \theta_{j,q}^{(1)} \leq \theta_q^{\ast}\}, \cdots, \{\theta_j^{(m)}: \theta_{j,q}^{(m)} \leq \theta_{q}^\ast \}, \cdots, N, L, R')$
\State $\mathcal{T.R} \leftarrow \textsc{BuildTree}(\{\theta_j^{(1)}: \theta_{j,q}^{(1)} > \theta_q^{\ast}\}, \cdots, \{\theta_j^{(m)}: \theta_{j,q}^{(m)} > \theta_{q}^\ast \}, \cdots, N, L', R)$
\State \Return $\mathcal{T}$ 
\Else 
\State $D \leftarrow D \setminus \{q\}$ \Comment{Try cutting at another dimension}
\EndIf
\EndWhile
\State $\mathcal{T.L} \leftarrow \text{NULL}$, $\mathcal{T.R} \leftarrow \text{NULL}$, \Return $\mathcal{T}$ \Comment{Leaf node}

\EndProcedure
\end{algorithmic}
\end{algorithm}

In Algorithm \ref{alg:tree}, $\delta_{|A|}$ becomes the minimum edge length of a block $\delta_a$ (possibly different across dimensions). Quantities $L, R \in \mathbb{R}^{p}$ are the left and right boundaries of the samples respectively, which take the sample minimum/maximum when the support is unbounded. We consider two choices for the partition rule $\phi(\cdot)$  --- maximum (empirical) likelihood partition (ML) and median/KD-tree partition (KD). 

\paragraph{Maximum Likelihood Partition (ML)}
ML-partition searches for partitions by greedily maximizing the empirical log likelihood at each iteration. For $m = 1$ we have
\begin{equation}
\theta^{\ast} = \phi_{\text{ML}}(\{\theta_{j,q}, j=1,\cdots,n\}) = \argmax_{n_1 + n_2 = n, A_1\cup A_2 = A} \bigg(\frac{n_1}{n |A_1|}\bigg)^{n_1} \bigg(\frac{n_2}{n |A_2|}\bigg)^{n_2}, \label{eqs:empirical-ml}
\end{equation}
where $n_1$ and $n_2$ are counts of posterior samples in $A_1$ and $A_2$, respectively. The solution to \eqref{eqs:empirical-ml} falls inside the set $\{\theta_j\}$. Thus, a simple linear search after sorting samples suffices (by book-keeping the ordering, sorting the whole block once is enough for the entire procedure). For $m>1$, we have
\begin{equation}
\phi_{q, \text{ML}}(\cdot) = \argmax_{\theta^{\ast} \in \cup_{i=1}^m \{\theta_j^{(i)}\}} \prod_{i=1}^{m} \bigg(\frac{n_1^{(i)}}{n^{(i)}|A_1|} \bigg)^{n_1^{(i)}} \bigg(\frac{n_2^{(i)}}{n^{(i)}|A_2|} \bigg)^{n_2^{(i)}},
\end{equation}similarly solved by a linear search. This is dominated by sorting and takes $O(n \log n)$ time.

\paragraph{Median/KD-Tree Partition (KD)} Median/KD-tree partition cuts at the empirical median of posterior samples. When there are multiple subsets, the median is taken over pooled samples to force $\{A_k\}$ to be the same across subsets. Searching for median takes $O(n)$ time \cite{blum1973time}, which is faster than ML-partition especially when the number of posterior draws is large. The same partitioning strategy is adopted by KD-trees \cite{bentley1975multidimensional}.

\subsection{Density Aggregation}
Given a common partition, Algorithm \ref{alg:onestage} aggregates all subsets \emph{in one stage}. However, assuming a single ``good'' partition for all subsets is overly restrictive when $m$ is large.  Hence, we also consider  \emph{pairwise aggregation}~\cite{neiswanger2013asymptotically,wang2013parallel}, which recursively groups subsets into pairs, combines each pair with Algorithm \ref{alg:onestage}, and repeats until one final set is obtained.  Run time of  \emph{PART} is dominated by space partitioning (\textsc{BuildTree}), with normalization and resampling very fast.

\begin{algorithm}
\caption{Density aggregation algorithm (drawing $N'$ samples from the aggregated posterior)}
\label{alg:onestage}
\begin{algorithmic}[1]
\Procedure{OneStageAggregate}{$\{\theta_j^{(1)}\}, \{\theta_j^{(2)}\}, \cdots, \{\theta_j^{(m)}\}$, $\phi(\cdot)$, $\delta_\rho$, $\delta_a$, $N$, $N'$, $L$, $R$}
\State $\mathcal{T} \leftarrow$ \textsc{BuildTree}($\{\theta_j^{(1)}\}, \{\theta_j^{(2)}\}, \cdots, \{\theta_j^{(m)}\}$, $\phi(\cdot)$, $\delta_\rho$, $\delta_a$, $N$, $L$, $R$), $\quad Z \leftarrow 0$
\State $(\{A_k\}, \{n_k^{(i)}\}) \leftarrow$ \textsc{TraverseLeaf}($\mathcal{T}$)
\For {$k=1,2,\cdots,K$} 
\State $\tilde{w}_k \leftarrow \prod_{i=1}^{m} n_k^{(i)} / |A_k|^{m-1}$, $Z \leftarrow Z + \tilde{w}_k$ \Comment{Multiply inside each block}
\EndFor
\State $w_k \leftarrow \tilde{w}_k / Z$ for all $k$ \Comment{Normalize}
\For {$t=1,2,\cdots,N'$}
\State Draw $k$ with weights $\{w_k\}$ and then draw $\theta_t \sim g_k(\theta)$
\EndFor
\State \Return $\{\theta_1, \theta_2, \cdots, \theta_{N'}\}$
\EndProcedure
\end{algorithmic}
\end{algorithm}

\subsection{Variance Reduction and Smoothing}

\paragraph{Random Tree Ensemble} Inspired by random forests \cite{breiman2001random,breiman1996bagging}, the full posterior is estimated by averaging $T$ independent trees output by Algorithm \ref{alg:tree}. Smoothing and averaging can reduce variance and yield better approximation accuracy. The trees can be built in parallel and resampling in Algorithm \ref{alg:onestage} only additionally requires picking a tree uniformly at random. 
\vspace{-1.2em}
\paragraph{Local Gaussian Smoothing} As another approach to increase smoothness, the blockwise uniform distribution in \eqref{eqs:full-est} can be replaced by a Gaussian distribution $g_k = N(\theta; \mu_k, \Sigma_k)$, with mean and covariance estimated ``locally'' by samples within the block. A multiplied Gaussian approximation is used: $\Sigma_k = (\sum_{i=1}^{m} \hat{\Sigma}_k^{(i)-1})^{-1}, \mu_k = \Sigma_k (\sum_{i=1}^m \hat{\Sigma}_k^{(i)-1} \hat{\mu}_k^{(i)})$, where $\hat{\Sigma}_k^{(i)}$ and $\hat{\mu}_k^{(i)}$ are estimated with the $i^{\text{th}}$ subset. We apply both random tree ensembles and local Gaussian smoothing in all applications of PART in this article unless explicitly stated otherwise.

\section{Theory}
In this section, we provide consistency theory (in the number of posterior samples) for histograms and the aggregated density.  We do not consider the variance reduction and smoothing modifications in these developments for simplicity in exposition, but extensions are possible.  Section 3.1 provides error bounds on ML and KD-tree partitioning-based histogram density estimators constructed from $N$ independent samples from a single joint posterior; modified bounds can be obtained for MCMC samples incorporating the mixing rate, but will not be considered here.  Section 3.2 then provides corresponding error bounds for our PART-aggregrated density estimators in the one-stage and pairwise cases. Detailed proofs are provided in the supplementary materials.

Let $f(\theta)$ be a $p$-dimensional posterior density function. Assume $f$ is supported on a measurable set $\Omega\subseteq \mathbb{R}^p$. Since one can always transform $\Omega$ to a bounded region by scaling, we simply assume $\Omega = [0, 1]^p$ as in \cite{liu2014multivariate,shen1994convergence} without loss of generality.  We also assume that $f\in C^1(\Omega)$.
\subsection{Space partitioning}
\paragraph{Maximum likelihood partition (ML)} For a given $K$, ML partition solves the following problem: 
%for some $c_0$ and $\rho$:
\begin{align}
\hat f_{ML} &= \argmax \frac{1}{N}\sum_{k = 1}^K n_k\log\bigg(\frac{n_k}{N|A_k|}\bigg), \quad\mbox{s.t. } n_k/N \geq c_0\rho, ~|A_k|\geq \rho/D,\label{eq:prob3}
\end{align}
for some $c_0$ and $\rho$, where $D = \|f\|_\infty < \infty$. We have the following result.
\begin{theorem} \label{thm:1}
Choose $\rho = 1/K^{1+1/(2p)}$. For any $\delta > 0$, if the sample size satisfies that $N > 2(1-c_0)^{-2}K^{1+1/(2p)}\log(2K/\delta)$, then with probability at least $1 - \delta$, the optimal solution to \eqref{eq:prob3} satisfies that
\begin{align*}
D_{KL}(f\|\hat f_{ML}) \leq (C_1+2\log K)K^{-\frac{1}{2p}} + C_2\max\big\{\log D, 2\log K\big\}\sqrt{\frac{K}{N}\log\bigg(\frac{3e N}{K}\bigg)\log\bigg(\frac{8}{\delta}\bigg)},
\end{align*}
where $C_1 = \log D + 4pLD$ with $L = \|f'\|_\infty$ and $C_2 = 48\sqrt{p + 1}$.
\end{theorem}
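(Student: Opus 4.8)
The plan is to treat $\hat f_{ML}$ as a sieve maximum-likelihood estimator and run the standard bias-plus-fluctuation decomposition against a well-chosen deterministic benchmark, with the sample-size hypothesis appearing exactly as the condition that keeps that benchmark feasible. Write $\ell(g)=\int_\Omega f\log g$ and $\ell_N(g)=\tfrac1N\sum_{j}\log g(\theta_j)$, so that $D_{KL}(f\|g)=\ell(f)-\ell(g)$ and $\hat f_{ML}$ maximizes $\ell_N$ over the feasible class $\mathcal H_K$ of $K$-block histograms obeying $n_k/N\ge c_0\rho$ and $|A_k|\ge\rho/D$. First I would build a fixed benchmark $g^\ast\in\mathcal H_K$ on a partition each of whose cells $A_k^0$ has true mass $p_k=\int_{A_k^0}f\ge\rho$; since $\|f\|_\infty=D$ this automatically forces $|A_k^0|\ge p_k/D\ge\rho/D$, so the volume floor is consistent with the mass floor. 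Taking $g^\ast=\sum_k (p_k/|A_k^0|)\mathbf 1_{A_k^0}$, the per-block average of $f$ (the KL projection of $f$ onto piecewise-constant densities on this partition), I would split
\[
D_{KL}(f\|\hat f_{ML})
= \underbrace{\big[\ell(f)-\ell(g^\ast)\big]}_{\text{(I)}}
+ \underbrace{\big[\ell(g^\ast)-\ell_N(g^\ast)\big]}_{\text{(II)}}
+ \underbrace{\big[\ell_N(g^\ast)-\ell_N(\hat f_{ML})\big]}_{\le\,0}
+ \underbrace{\big[\ell_N(\hat f_{ML})-\ell(\hat f_{ML})\big]}_{\text{(III)}},
\]
where the third bracket is nonpositive by optimality of $\hat f_{ML}$, \emph{provided} $g^\ast$ is also feasible for the empirical problem. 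This proviso is precisely what the sample-size condition buys: a multiplicative Chernoff bound gives $\Pr(n_k/N< c_0\rho)\le\exp(-N\rho(1-c_0)^2/2)$ for each benchmark cell, and a union bound over the (at most $K$) cells shows that $N>2(1-c_0)^{-2}\rho^{-1}\log(2K/\delta)=2(1-c_0)^{-2}K^{1+1/(2p)}\log(2K/\delta)$ makes $g^\ast$ feasible with probability at least $1-\delta/2$.

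For the bias (I) $=D_{KL}(f\|g^\ast)$ I would exploit $f\in C^1(\Omega)$ with $\|f'\|_\infty=L$ together with the feasibility floor, and the clean way to see the $K^{-1/(2p)}$ rate is to localize the error at the density level $\rho K=K^{-1/(2p)}$. On the high-density region $\{f\ge K^{-1/(2p)}\}$ cells of side $\asymp K^{-1/p}$ retain mass above $\rho$, and a first-order expansion of $\log(f/\bar f_k)$ in the per-cell oscillation $\le\sqrt p\,L\,K^{-1/p}$ of $f$ produces the smooth constant $C_1=\log D+4pLD$ (in fact a faster $K^{-2/p}$ contribution, absorbed into the slower rate). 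The low-density region $\{f< K^{-1/(2p)}\}$ carries total mass at most $K^{-1/(2p)}$, and there $\log(f/\bar f_k)$ is at worst of order $\log K$, so it contributes $\lesssim (\log K)K^{-1/(2p)}$, i.e. the $2\log K$ factor. Assembling the two regions yields the first term $(C_1+2\log K)K^{-1/(2p)}$; this step is conceptually simple but constant-heavy bookkeeping.

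Term (II) is a fluctuation of the \emph{fixed} function $\log g^\ast$: its block levels lie in $[c_0\rho,\,D]$, whence $\|\log g^\ast\|_\infty\le B:=O(\max\{\log D,\log K\})$, and Hoeffding (or Bernstein) controls $\ell(g^\ast)-\ell_N(g^\ast)$ at scale $B/\sqrt N$, dominated by the next term. The crux is the uniform deviation (III), which I would bound by $\sup_{g\in\mathcal H_K}\big[\ell_N(g)-\ell(g)\big]$. Two ingredients drive it. First, boundedness: for every feasible $g$ the constraints give block values $g_k=(n_k/N)/|A_k|\in[c_0\rho,\,D\rho^{-1}]$, so $\|\log g\|_\infty\le B=O(\max\{\log D,2\log K\})$ uniformly, which is the source of the $\max\{\log D,2\log K\}$ factor. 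Second, complexity: treating $\mathcal H_K$ as indexed by axis-aligned recursive partitions into $K$ cells (with levels discretized on an independent grid), a tree with $K$ leaves is built from $K-1$ axis-aligned cuts, whose VC-type cost is controlled by the ambient dimension and surfaces as the $\sqrt{p+1}$ in $C_2$; the resulting growth function is of order $(3eN/K)^{O(K)}$. Symmetrization followed by a maximal inequality (finite-class Bernstein over a cover, or Dudley's entropy integral) then yields $\sup_{g}\big[\ell_N(g)-\ell(g)\big]\lesssim B\sqrt{\tfrac KN\log(3eN/K)}$, and carrying a sub-Gaussian tail adds the $\sqrt{\log(8/\delta)}$, reproducing the second term with $C_2=48\sqrt{p+1}$.

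Finally I would intersect the feasibility event (probability $\ge1-\delta/2$) with the uniform-deviation event (probability $\ge1-\delta/2$); on this event the decomposition gives $D_{KL}(f\|\hat f_{ML})\le\text{(I)}+\text{(II)}+\text{(III)}$ with the claimed bound at total confidence $1-\delta$. I expect the genuine difficulty to be the empirical-process step (III): the partition is not fixed in advance, the log-density is bounded only through the delicate interplay of the mass and volume constraints, and obtaining the sharp complexity $\sqrt{(K/N)\log(3eN/K)}$ with the correct $\sqrt{p+1}$ dependence requires a careful covering/growth-function estimate for axis-aligned tree partitions rather than a crude parameter count. The bias step, by contrast, is routine once the error is localized to the low-density region, and the feasibility step is a one-line Chernoff bound.
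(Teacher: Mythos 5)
Your proposal follows essentially the same route as the paper's proof: the same oracle decomposition into bias plus empirical fluctuations with the cross term killed by optimality, feasibility of the benchmark secured by a multiplicative Chernoff bound and union bound (which is exactly where the sample-size condition enters), the log-density range $[c_0\rho, D/\rho]$ producing the $\max\{\log D, 2\log K\}$ factor, and the bias localized at the density level $b_0 = K^{-1/(2p)}$ with the low-density region contributing the $(\log K)K^{-1/(2p)}$ term and the smooth region the $4pLD$ constant. The only substantive difference is that the paper imports the uniform concentration over $K$-rectangular partitions as a known large-deviation inequality (from Chen's almost-sure uniform convergence result) rather than rederiving it by symmetrization, and it gives an explicit recursive midpoint-with-minimum-deviation construction of the benchmark partition where you only sketch one; neither difference affects correctness.
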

When multiple densities $f^{(1)}(\theta),\cdots, f^{(m)}(\theta)$ are presented, our goal of imposing the same partition on all functions requires solving a different problem,
\begin{align}
(\hat f_{ML}^{(i)})_{i = 1}^m = \argmax \sum_{i=1}^m \frac{1}{N_i}\sum_{k = 1}^K n_k^{(i)}\log\bigg(\frac{n_k^{(i)}}{N_i|A_k|}\bigg), \quad\mbox{s.t. } n_k^{(i)}/N_i \geq c_0\rho, ~|A_k|\geq \rho/D,\label{eq:prob4}
\end{align}
where $N_i$ is the number of posterior samples for function $f^{(i)}$. A similar result as Theorem \ref{thm:1} for \eqref{eq:prob4} is provided in the supplementary materials.

\paragraph{Median partition/KD-tree (KD)} The KD-tree $\hat f_{KD}$ cuts at the empirical median for different dimensions. We have the following result.
\begin{theorem}\label{thm:2}
For any $\varepsilon > 0$, define $r_\varepsilon = \log_2\bigg(1 + \frac{1}{2 + 3L/\varepsilon}\bigg)$. For any $\delta > 0$, if $N > 32e^2(\log K)^2K\log (2K/\delta)$, then with probability at least $1 - \delta$, we have
\begin{align*}
\|\hat f_{KD} - f_{KD}\|_1\leq \varepsilon + pLK^{-{r_\varepsilon}/{p}} + 4e\log K\sqrt{\frac{2K}{N}\log\bigg(\frac{2K}{\delta}\bigg)}.
\end{align*}
If $f(\theta)$ is further lower bounded by some constant $b_0 > 0$, we can then obtain an upper bound on the KL-divergence. Define $r_{b_0} = \log_2\bigg(1 + \frac{1}{2 + 3L/b_0}\bigg)$ and we have
\begin{align*}
D_{KL}(f\|\hat f_{KD}) \leq \frac{pLD}{b_0}K^{-{r_{b_0}}/{p}} + 8e\log K\sqrt{\frac{2K}{N}\log\bigg(\frac{2K}{\delta}\bigg)}.
\end{align*}
\end{theorem}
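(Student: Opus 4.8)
The plan is to control the total error of the empirical KD histogram $\hat f_{KD}$ by decomposing it, through the triangle inequality, into a deterministic \emph{approximation} (bias) term and a stochastic \emph{estimation} (variance) term. I would introduce the population KD density $f_{KD}$: the piecewise-constant density supported on the recursive binary partition obtained by cutting at the \emph{true} (population) medians and assigning each leaf cell $A_k$ its exact average value $\bar f_k = |A_k|^{-1}\int_{A_k} f$. Then $\|\hat f_{KD}-f\|_1 \le \|f_{KD}-f\|_1 + \|\hat f_{KD}-f_{KD}\|_1$, and the stated bound is assembled by showing $\|f_{KD}-f\|_1 \le \varepsilon + pLK^{-r_\varepsilon/p}$ (bias) and $\|\hat f_{KD}-f_{KD}\|_1 \le 4e\log K\sqrt{(2K/N)\log(2K/\delta)}$ (variance), the former being the harder, geometric half.

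For the bias, the heart is a single-cut shrinkage lemma. When a block is split at the population median along coordinate $q$, the two children carry equal mass, so the \emph{longer} child can be long only if its marginal density is correspondingly small. Writing the marginal of $f$ along the cut direction as $g$, the $C^1$ assumption gives $|(\log g)'|\le L/\varepsilon$ on any block where $f\ge\varepsilon$, which forces the longer child's edge to be at most a factor $\gamma = 2^{-r_\varepsilon} = (1+\tfrac{1}{2+3L/\varepsilon})^{-1}$ of the parent's edge. Cycling coordinates, each of the $p$ dimensions is bisected about $\tfrac1p\log_2 K$ times in a balanced tree of $K$ leaves, so every surviving edge shrinks to $\gamma^{(\log_2 K)/p} = K^{-r_\varepsilon/p}$ and hence $\mathrm{diam}(A_k)\le pK^{-r_\varepsilon/p}$; since $f$ is $L$-Lipschitz, $\int_{A_k}|f-\bar f_k|\le |A_k|\,L\,\mathrm{diam}(A_k)$, and summing over cells gives the $pLK^{-r_\varepsilon/p}$ contribution. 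The remaining region, where the marginal dips below the threshold $\varepsilon$ and the shrinkage guarantee fails, has total volume at most $1$ and there both $f$ and $\bar f_k$ are $O(\varepsilon)$, contributing the additive $\varepsilon$ slack. This threshold device is exactly what makes the exponent $r_\varepsilon$ depend on $\varepsilon$.

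For the variance, $\|\hat f_{KD}-f_{KD}\|_1$ has two sources: the empirical cell masses $n_k/N$ differ from $P(A_k)$, and the empirical medians differ from the population medians, so the cells themselves are perturbed. For fixed cells, $\|\hat f_{KD}-f_{KD}\|_1 = \sum_k |n_k/N - P(A_k)|$, and a Bernstein/Hoeffding estimate with a union bound over the $\le 2K$ tree nodes gives $|n_k/N-P(A_k)|\lesssim \sqrt{P(A_k)N^{-1}\log(2K/\delta)}$; Cauchy--Schwarz over the $K$ cells, using $\sum_k P(A_k)=1$, turns this into $\sqrt{(K/N)\log(2K/\delta)}$. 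The displacement of the empirical medians is handled by the DKW inequality (the empirical CDF concentrates uniformly), and because this error compounds across the $\log_2 K$ levels of the tree it produces the extra $\log K$ factor. The sample-size requirement $N>32e^2(\log K)^2 K\log(2K/\delta)$ is precisely what guarantees, with probability $1-\delta$, that every one of the $O(K)$ nodes retains enough samples for its empirical median and mass to concentrate simultaneously, and that the resulting bound is nontrivial.

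Finally, the KL statement is obtained under the extra global lower bound $f\ge b_0$, which removes the need for the $\varepsilon$ slack: one takes the threshold to be $b_0$ itself, so the bias exponent becomes $r_{b_0}$. The conversion from $L_1$ to KL uses $\log(f/\hat f_{KD})\le (f-\hat f_{KD})/\hat f_{KD}$ together with the bounds $f\le D$ and $\hat f_{KD}\gtrsim b_0$, which multiply the diameter bias by $D/b_0$ and yield the $\tfrac{pLD}{b_0}K^{-r_{b_0}/p}$ term, while the same concentration argument (paid for in the conversion) gives the doubled constant $8e$. I expect the main obstacle to be the bias analysis, namely proving the single-cut shrinkage factor $\gamma=2^{-r_\varepsilon}$ rigorously from the gradient bound and the threshold, and, on the variance side, correctly propagating the empirical-median perturbation through all $\log_2 K$ levels without losing more than the $\log K$ factor.
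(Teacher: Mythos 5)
Your overall plan---a bias/variance split around a piecewise-constant intermediate density, a single-cut edge-shrinkage lemma driven by the gradient bound and a density threshold, median concentration compounded over the $\log_2 K$ levels to produce the $\log K$ factor, and the $D/b_0$ conversion from $L_1$ to KL---is exactly the skeleton of the paper's proof (Proposition 1 and Lemmas 4--5 in the supplement). The one structural difference is your choice of intermediate object, and it creates a genuine gap. You define $f_{KD}$ on the partition cut at \emph{population} medians, so $\hat f_{KD}$ and $f_{KD}$ live on \emph{different} partitions; the identity $\|\hat f_{KD}-f_{KD}\|_1=\sum_k |n_k/N-P(A_k)|$ that your variance argument rests on only holds when the two histograms share the same cells. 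Comparing histograms on perturbed partitions requires controlling the geometry of the symmetric differences of cells (converting a DKW bound on mass into a bound on cell displacement needs local control of $f$ near each cut), and you only gesture at this. Moreover, your Bernstein step on the cell masses is aimed at the wrong source of randomness: by construction the empirical median splits the points exactly in half, so $n_k/N$ is the deterministic quantity $2^{-d}$ or $2^{-(d+1)}$; all the randomness sits in where the cells are, i.e.\ in $P(A_k)$ for the \emph{random} cells.

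The paper resolves this by defining $f_{KD}$ on the \emph{empirical} partition, with cell values equal to the true average of $f$ on each (random) cell. Then $\|\hat f_{KD}-f_{KD}\|_1=\sum_k|2^{-d}-P(A_k)|$ cleanly, and a Chernoff bound on the empirical median shows $P(A_k)\in[(1/2-t)^d,(1/2+t)^d]$, giving the $4e\log K\sqrt{(2K/N)\log(2K/\delta)}$ term. The price is that the bias term $\|f-f_{KD}\|_1$ becomes random: the shrinkage lemma must be applied with a mass split $\gamma$ as unbalanced as $1/2+t=2/3$, and that is precisely where the constant $2+3L/\varepsilon$ in $r_\varepsilon$ comes from (your population-median construction would yield the different, tighter exponent $\log_2(1+\tfrac{1}{1+2L/\varepsilon})$, which is a sign the decomposition you chose is not the one that produces the stated constants, even though it would still imply the stated inequality). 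To complete your route you would need to add an explicit lemma bounding the $L_1$ distance between the two histograms on the population-median and empirical-median partitions; alternatively, moving the intermediate object onto the empirical partition collapses that difficulty and recovers the paper's argument.
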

When there are multiple functions and the median partition is performed on pooled data, the partition might not happen at the empirical median on each subset. However, as long as the partition quantiles are upper and lower bounded by $\alpha$ and $1 - \alpha$ for some $\alpha \in [1/2, 1)$, we can establish results similar to Theorem \ref{thm:2}. The result is provided in the supplementary materials.

\subsection{Posterior aggregation}
The previous section provides estimation error bounds on individual posterior densities, through which we can bound the distance between the true posterior conditional on the full data set and the aggregated density via \eqref{eqs:multiplcation}. Assume we have $m$ density functions $\{f^{(i)}, i = 1,2, \cdots, m\}$ and intend to approximate their aggregated density $f_I = \prod_{i \in I} f^{(i)} / \int \prod_{i\in I} f^{(i)}$, where $I = \{1, 2, \cdots, m\}$. Notice that for any $I'\subseteq I$, $f_{I'} = p(\theta | \bigcup_{i\in I'} X^{(i)})$. Let $D = \max_{I'\subseteq I} \|f_{I'}\|_\infty$, i.e., $D$ is an upper bound on all posterior densities formed by a subset of $X$. Also define $Z_{I'} = \int \prod_{i \in I'} f^{(i)}$. These quantities depend only on the model and the observed data (not posterior samples). We denote $\hat f_{ML}$ and $\hat f_{KD}$ by $\hat f$ as the following results apply similarly to both methods.
\par
The ``one-stage'' aggregation (Algorithm 2) first obtains an approximation for each $f^{(i)}$ (via either ML-partition or KD-partition) and then computes $\hat f_I = \prod_{i \in I} \hat f^{(i)} / \int \prod_{i\in I} \hat f^{(i)}$. 
\begin{theorem}[One-stage aggregation]\label{thm:one}
Denote the average total variation distance between $f^{(i)}$ and $\hat f^{(i)}$ by $\varepsilon$. Assume the conditions in Theorem \ref{thm:1} and \ref{thm:2} and for ML-partition
$$\sqrt N \geq 32c_0^{-1}\sqrt{2(p + 1)}K^{\frac{3}{2} + \frac{1}{2p}}\sqrt{\log\bigg(\frac{3e N}{K}\bigg)\log\bigg(\frac{8}{\delta}\bigg)}$$
and for KD-partition
$$N > 128e^2K(\log K)^2\log(K/\delta).$$ Then with high probability the total variation distance between $f_{I}$ and $\hat f_{I}$ is bounded by
\begin{align*}
 \|f_{I} - \hat f_{I}\|_1 \leq\frac{2}{Z_I} m(2D)^{m - 1}\varepsilon,
 \end{align*} 
where $Z_I$ is a constant that does not depend on the posterior samples.
\end{theorem}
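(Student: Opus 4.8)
The plan is to reduce the normalized aggregation error to an unnormalized product error, which can then be handled by a telescoping expansion. Write $g_I = \prod_{i\in I} f^{(i)}$ and $\hat g_I = \prod_{i\in I}\hat f^{(i)}$ for the unnormalized products, so that $f_I = g_I/Z_I$ with $Z_I = \int g_I$, and $\hat f_I = \hat g_I/\hat Z_I$ with $\hat Z_I = \int \hat g_I$. First I would establish a deterministic normalization lemma: for nonnegative integrable $g,\hat g$ with $Z=\int g$ and $\hat Z = \int\hat g$, one has $\|g/Z - \hat g/\hat Z\|_1 \le \frac{2}{Z}\|g-\hat g\|_1$. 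This follows by splitting $g/Z - \hat g/\hat Z = (g-\hat g)/Z + \hat g(\hat Z - Z)/(Z\hat Z)$, taking $L^1$ norms, using $\|\hat g\|_1 = \hat Z$, and bounding $|\hat Z - Z| = |\int(\hat g - g)| \le \|g-\hat g\|_1$. Applying this with $g=g_I$, $\hat g = \hat g_I$ reduces the problem to bounding $\|g_I - \hat g_I\|_1$.

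Second, I would control the unnormalized error through the telescoping identity $g_I - \hat g_I = \sum_{j=1}^m \big(\prod_{i<j}\hat f^{(i)}\big)\,(f^{(j)} - \hat f^{(j)})\,\big(\prod_{i>j} f^{(i)}\big)$, whose $L^1$ norm is at most $\sum_{j=1}^m \big\|\prod_{i<j}\hat f^{(i)}\big\|_\infty \big\|\prod_{i>j} f^{(i)}\big\|_\infty \, \|f^{(j)} - \hat f^{(j)}\|_1$. Since each $f^{(i)} = f_{\{i\}}$ satisfies $\|f^{(i)}\|_\infty \le D$ by the definition of $D$, the trailing factors contribute at most $D^{m-j}$. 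The leading factors require the sup-norm control $\|\hat f^{(i)}\|_\infty \le 2D$; granting this, each summand is at most $(2D)^{j-1}D^{m-j}\|f^{(j)}-\hat f^{(j)}\|_1 \le (2D)^{m-1}\|f^{(j)}-\hat f^{(j)}\|_1$. Summing over $j$ and writing $\varepsilon = \frac1m\sum_{j}\|f^{(j)}-\hat f^{(j)}\|_1$ for the average (total variation) distance gives $\|g_I - \hat g_I\|_1 \le m(2D)^{m-1}\varepsilon$, and combining with the normalization lemma yields the claimed $\frac{2}{Z_I} m(2D)^{m-1}\varepsilon$.

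The main obstacle, and the only genuinely probabilistic step, is justifying the uniform sup-norm bound $\|\hat f^{(i)}\|_\infty \le 2D$ for every subset $i$ simultaneously — this is precisely why Theorem~\ref{thm:one} imposes sample-size conditions strictly stronger than those of Theorems~\ref{thm:1} and~\ref{thm:2}. On any histogram block $A_k$ the estimate takes the value $n_k^{(i)}/(N|A_k|)$, whose expectation equals $|A_k|^{-1}\int_{A_k} f^{(i)} \le D$. I would show that under the strengthened sample size the empirical frequency $n_k^{(i)}/N$ does not exceed twice its mean on any block, using the same uniform (VC-type) concentration of block frequencies over axis-aligned $K$-block partitions already invoked in proving Theorems~\ref{thm:1}--\ref{thm:2}, together with the block-size lower bound $|A_k| \ge \rho/D$ that keeps the relative fluctuations small; a union bound over the $m$ subsets then puts all the events in force at once. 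Everything downstream of this bound — the telescoping and the normalization lemma — is deterministic, so the difficulty is concentrated entirely in upgrading an $L^1$ guarantee to a pointwise $\|\cdot\|_\infty \le 2D$ guarantee.
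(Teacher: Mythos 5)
Your proposal is correct and follows essentially the same route as the paper: the same telescoping expansion of $\prod_i f^{(i)} - \prod_i \hat f^{(i)}$ bounded via $\|\hat f^{(i)}\|_\infty \le 2D$, the same normalization step yielding the factor $2/Z_I$, and the same identification of the sup-norm control as the content of the strengthened sample-size conditions (the paper's Lemma on $\|\hat f\|_\infty \le 2D$, proved exactly by the uniform concentration of block frequencies you describe). No gaps.
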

\par
The approximation error of Algorithm 2 increases dramatically with the number of subsets. To ameliorate this, we introduce the {\em pairwise aggregation} strategy in Section 2, for which we have the following result.
\begin{theorem}[Pairwise aggregation]\label{thm:pairwise}
Denote the average total variation distance between $f^{(i)}$ and $\hat f^{(i)}$ by $\varepsilon$. Assume the conditions in Theorem \ref{thm:one}. Then with high probability the total variation distance between $f_{I}$ and $\hat f_{I}$ is bounded by
\begin{align*}
 \|f_{I} - \hat f_{I}\|_1 \leq (4C_0D)^{\log_2 m + 1}\varepsilon,
\end{align*}
where $C_0 = \max_{I''\subset I'\subseteq I} \frac{Z_{I''}Z_{I'\setminus I''}}{Z_{I'}}$ is a constant that does not depend on posterior samples.
\end{theorem}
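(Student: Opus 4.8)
The plan is to reduce the whole $m$-fold aggregation to repeated use of a single \emph{two-density combination lemma} and then propagate the error along the balanced binary tree of depth $\log_2 m$ that the pairwise scheme builds. For any split of a block $I'\subseteq I$ into disjoint $I''$ and $I'\setminus I''$, the exact aggregate factorizes as $f_{I'} = f_{I''}f_{I'\setminus I''}/\zeta_{I'}$ with normalizer $\zeta_{I'} = \int f_{I''}f_{I'\setminus I''} = Z_{I'}/(Z_{I''}Z_{I'\setminus I''})$, while the algorithm forms $\hat f_{I'} = \hat f_{I''}\hat f_{I'\setminus I''}/\!\int\! \hat f_{I''}\hat f_{I'\setminus I''}$. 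The lemma I would establish is: if $\|f_{I''}-\hat f_{I''}\|_1\le\eta_1$ and $\|f_{I'\setminus I''}-\hat f_{I'\setminus I''}\|_1\le\eta_2$, then $\|f_{I'}-\hat f_{I'}\|_1 \le 2C_0 D(\eta_1+\eta_2)$. Everything after that is bookkeeping on this recursion; note this is exactly the $m=2$ case of Theorem \ref{thm:one} recast so that the normalizer is expressed through $C_0$ rather than $Z_I$.

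To prove the lemma I would proceed in three steps. First, bound the \emph{unnormalized} product error by writing $f_{I''}f_{I'\setminus I''} - \hat f_{I''}\hat f_{I'\setminus I''} = f_{I'\setminus I''}(f_{I''}-\hat f_{I''}) + \hat f_{I''}(f_{I'\setminus I''}-\hat f_{I'\setminus I''})$ and applying H\"older, using $\|f_{I'\setminus I''}\|_\infty\le D$ together with a high-probability height bound $\|\hat f_{I''}\|_\infty\le D$; this gives $\|h-\hat h\|_1\le D(\eta_1+\eta_2)$ with $h=f_{I''}f_{I'\setminus I''}$ and $\hat h=\hat f_{I''}\hat f_{I'\setminus I''}$. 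Second, I would use the elementary renormalization-stability inequality
\[
\Big\| h/\|h\|_1 - \hat h/\|\hat h\|_1 \Big\|_1 \le 2\,\|h-\hat h\|_1 \big/ \|h\|_1 ,
\]
whose proof adds and subtracts $\hat h/\|h\|_1$ and uses $\big|\,\|h\|_1-\|\hat h\|_1\,\big|\le\|h-\hat h\|_1$. Third, I would identify $\|h\|_1 = \zeta_{I'}$ and invoke the definition $1/\zeta_{I'} = Z_{I''}Z_{I'\setminus I''}/Z_{I'}\le C_0$, so that combining the two steps yields the claimed $2C_0 D(\eta_1+\eta_2)$.

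With the lemma in hand I would set up the recursion on the aggregation tree: leaves carry $\eta_{\{i\}}\le\varepsilon$, and each internal node obeys $\eta_{I'}\le 2C_0D(\eta_{\mathrm{left}}+\eta_{\mathrm{right}})$. Combining two equal-error children multiplies the error by $4C_0 D$ per level, so after $L=\log_2 m$ levels the root satisfies $\|f_I-\hat f_I\|_1\le (4C_0D)^{\log_2 m}\varepsilon \le (4C_0D)^{\log_2 m + 1}\varepsilon$, the final slack factor absorbing the constant (using $4C_0D\ge 1$) and the $2D$ looseness in the height bound. The probabilistic statement comes from applying Theorems \ref{thm:1} and \ref{thm:2} to each of the $m$ subsets under the sample-size conditions inherited from Theorem \ref{thm:one}, and taking a union bound over the $O(m)$ nodes so that all per-subset bounds and all intermediate height bounds hold simultaneously with high probability.

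The hard part will be the \emph{uniform sup-norm control} of the intermediate aggregates, i.e.\ showing $\|\hat f_{I'}\|_\infty\le D$ (up to a constant) at every internal node. Closeness in $L_1$ does not imply closeness in $L_\infty$, and each $\hat f_{I'}$ is itself a renormalized product of histograms whose heights could a priori blow up through the recursion; I would control this inductively, carrying a parallel height bound that uses the block-area constraint $|A_k|\ge\rho/D$ from the partition rules together with the concentration of the bin counts underlying Theorems \ref{thm:1}–\ref{thm:2}. The second delicate point is that the normalizer ratios $Z_{I''}Z_{I'\setminus I''}/Z_{I'}$ must be bounded \emph{uniformly} over all admissible splits, which is precisely what the definition $C_0 = \max_{I''\subset I'\subseteq I} Z_{I''}Z_{I'\setminus I''}/Z_{I'}$ guarantees, so that a single constant $C_0$ suffices for every node of the tree.
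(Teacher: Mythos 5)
Your proposal follows essentially the same route as the paper: the two-density combination lemma you describe is exactly Theorem \ref{thm:one} specialized to $m=2$ with the normalizer rewritten via $C_0$, and the induction multiplying the error by $4C_0D$ per level of the binary aggregation tree is the paper's argument almost verbatim. The uniform sup-norm control of the intermediate aggregates $\hat f_{I'}$ that you flag as the hard part is indeed passed over in the paper's proof, which simply reuses the constant $2D$ at every level of the recursion.
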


\section{Experiments}

In this section, we evaluate the empirical performance of {\em PART} and compare the two algorithms {\em PART-KD} and {\em PART-ML} to the following posterior aggregation algorithms. 

\begin{enumerate}
\item \textbf{Simple averaging} (\emph{average}): each aggregated sample is an arithmetic average of $M$ samples coming from $M$ subsets. 

\item \textbf{Weighted averaging} (\emph{weighted}): also called \textbf{Consensus Monte Carlo} algorithm \cite{scott2013bayes}, where each aggregated sample is a weighted average of $M$ samples. The weights are optimally chosen for a Gaussian posterior. 

\item \textbf{Weierstrass rejection sampler} (\emph{Weierstrass}): subset posterior samples are passed through a rejection sampler based on the Weierstrass transform to produce the aggregated samples~\cite{wang2013parallel}. We use its R package\footnote{\url{https://github.com/wwrechard/weierstrass}} for experiments.

\item \textbf{Parametric density product} (\emph{parametric}): aggregated samples are drawn from a multivariate Gaussian, which is a product of Laplacian approximations to subset posteriors~\cite{neiswanger2013asymptotically}. 

\item \textbf{Nonparametric density product} (\emph{nonparametric}): aggregated posterior is approximated by a product of kernel density estimates of subset posteriors~\cite{neiswanger2013asymptotically}. Samples are drawn with an independent Metropolis sampler. 

\item \textbf{Semiparametric density product} (\emph{semiparametric}): similar to the \emph{nonparametric}, but with subset posteriors estimated semiparametrically~\cite{neiswanger2013asymptotically,hjort1995nonparametric}.

\end{enumerate}

All experiments except the two toy examples use adaptive MCMC \cite{haario2006dram,haario2001adaptive} \footnote{\url{http://helios.fmi.fi/~lainema/mcmc/}} for posterior sampling. For {\em PART-KD/ML}, one-stage aggregation (Algorithm \ref{alg:onestage}) is used only for the toy examples (results from pairwise aggregation are provided in the supplement). For other experiments,  pairwise aggregation is used, which draws 50,000 samples for intermediate stages and halves $\delta_{\rho}$ after each stage to refine the resolution (The value of $\delta_\rho$ listed below is for the final stage). The random ensemble of {\em PART} consists of 40 trees.

\subsection{Two Toy Examples}
The two toy examples highlight the performance of our methods in terms of (i) recovering multiple modes and (ii) correctly locating posterior mass when subset posteriors are heterogeneous. The {\em PART-KD/PART-ML} results are obtained from Algorithm \ref{alg:onestage} without local Gaussian smoothing. 

\paragraph{Bimodal Example}
Figure \ref{fig:example-bimodal} shows an example consisting of $m=10$ subsets. Each subset consists of 10,000 samples drawn from a mixture of two univariate normals $0.27 N(\mu_{i,1}, \sigma_{i,1}^2) + 0.73 N(\mu_{i,2}, \sigma_{i,2}^2)$, with the means and standard deviations slightly different across subsets, given by $\mu_{i,1} = -5 + \epsilon_{i,1}$, $ \mu_{i,2}=5+\epsilon_{i,2} $ and $\sigma_{i,1} = 1 + |\delta_{i,1}|$, $\sigma_{i,2} = 4+|\delta_{i,2}|$, where $\epsilon_{i,l} \sim N(0,0.5)$, $\delta_{i,l} \sim N(0, 0.1)$ independently for $m=1,\cdots,10$ and $l=1,2$. The resulting true combined posterior (red solid) consists of two modes with different scales. In Figure \ref{fig:example-bimodal}, the left panel shows the subset posteriors (dashed) and the true posterior; the right panel compares the results with various methods to the truth. A few are omitted in the graph: \emph{average} and \emph{weighted average} overlap with \emph{parametric}, and \emph{Weierstrass} overlaps with \emph{PART-KD/PART-ML}. 

\begin{figure}[!htb]
\centering
\includegraphics[width=0.44\textwidth]{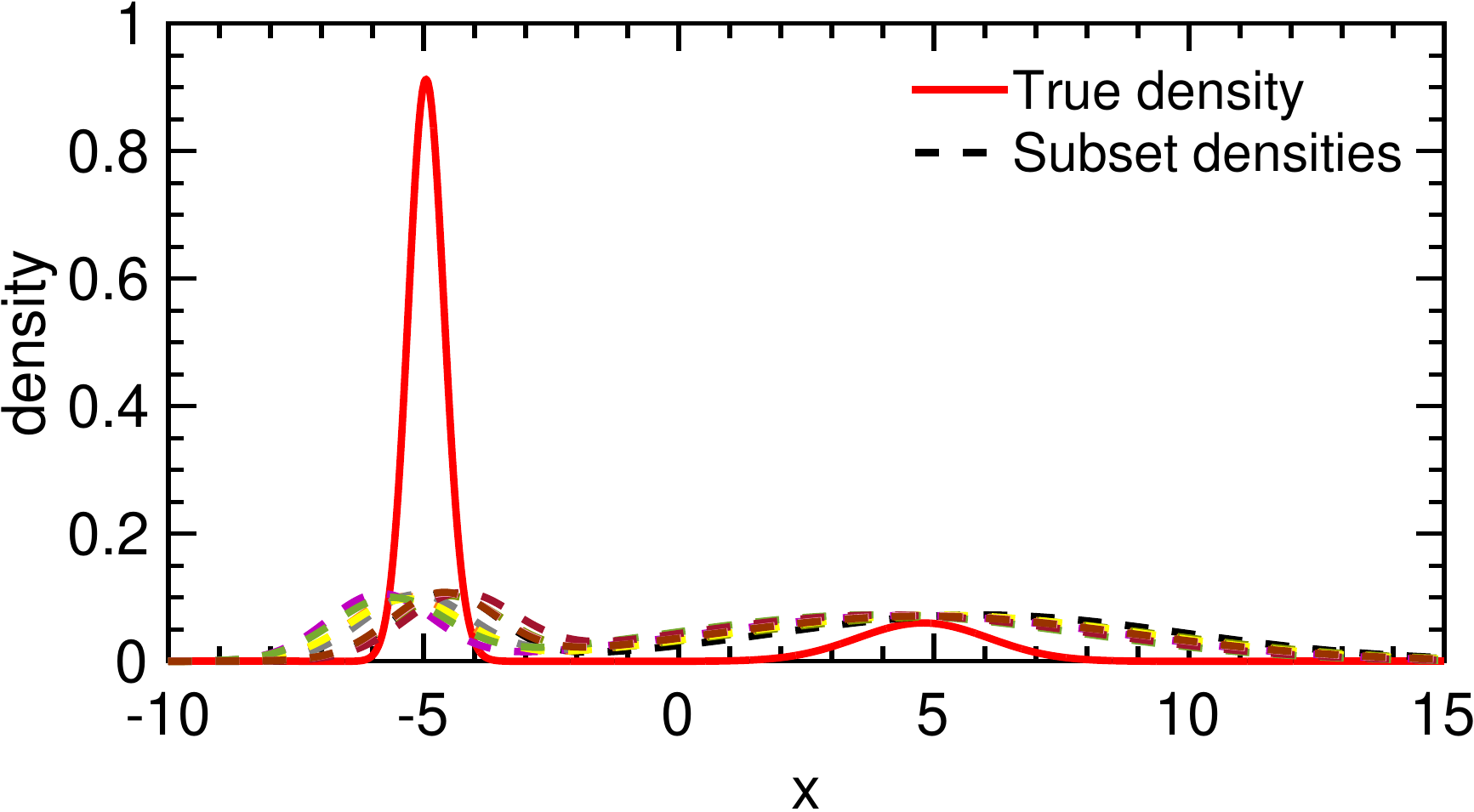}
\includegraphics[width=0.4\textwidth]{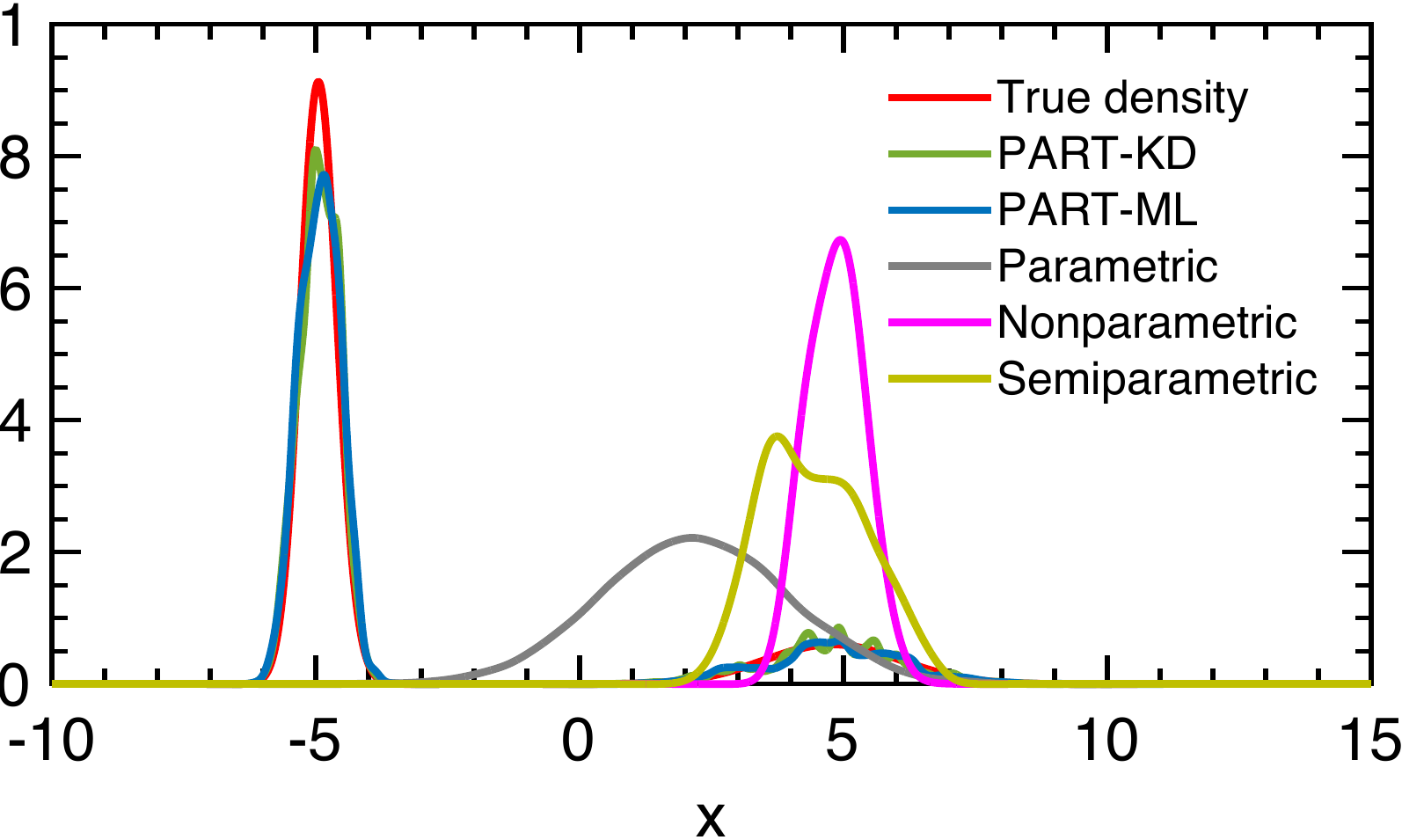}
\caption{Bimodal posterior combined from 10 subsets. Left: the true posterior and subset posteriors (dashed). Right: aggregated posterior output by various methods compared to the truth. Results are based on 10,000 aggregated samples.}
\label{fig:example-bimodal}
\end{figure}

\vspace{-1.5em}
\paragraph{Rare Bernoulli Example} We consider $N=10,000$ Bernoulli trials $x_i \overset{\text{iid}}{\sim} \text{Ber}(\theta)$ split into $m=15$ subsets. The parameter $\theta$ is chosen to be $2m/N$ so that on average each subset only contains 2 successes.  By random partitioning, the subset posteriors are rather heterogeneous as plotted in dashed lines in the left panel of Figure \ref{fig:example-rare}. The prior is set as $\pi(\theta) = \text{Beta}(\theta;2,2)$. The right panel of Figure \ref{fig:example-rare} compares the results of various methods. {\em PART-KD}, {\em PART-ML} and \emph{Weierstrass} capture the true posterior shape, while \emph{parametric}, \emph{average} and \emph{weighted average} are all biased. The \emph{nonparametric} and \emph{semiparametric} methods produce flat densities near zero (not visible in Figure \ref{fig:example-rare} due to the scale). 

\begin{figure}[!htb]
\centering
\includegraphics[width=0.45\textwidth]{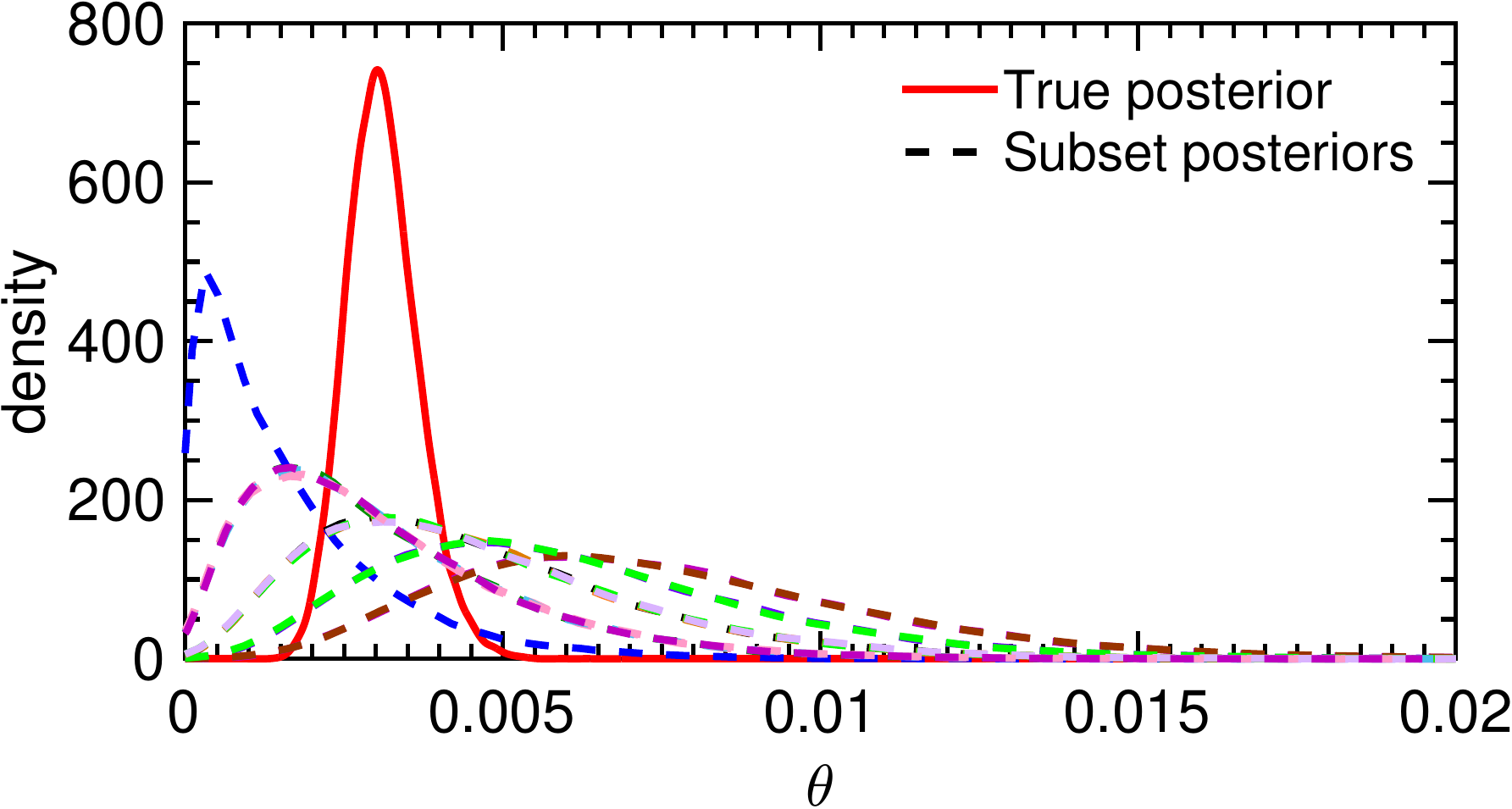}
\includegraphics[width=0.44\textwidth]{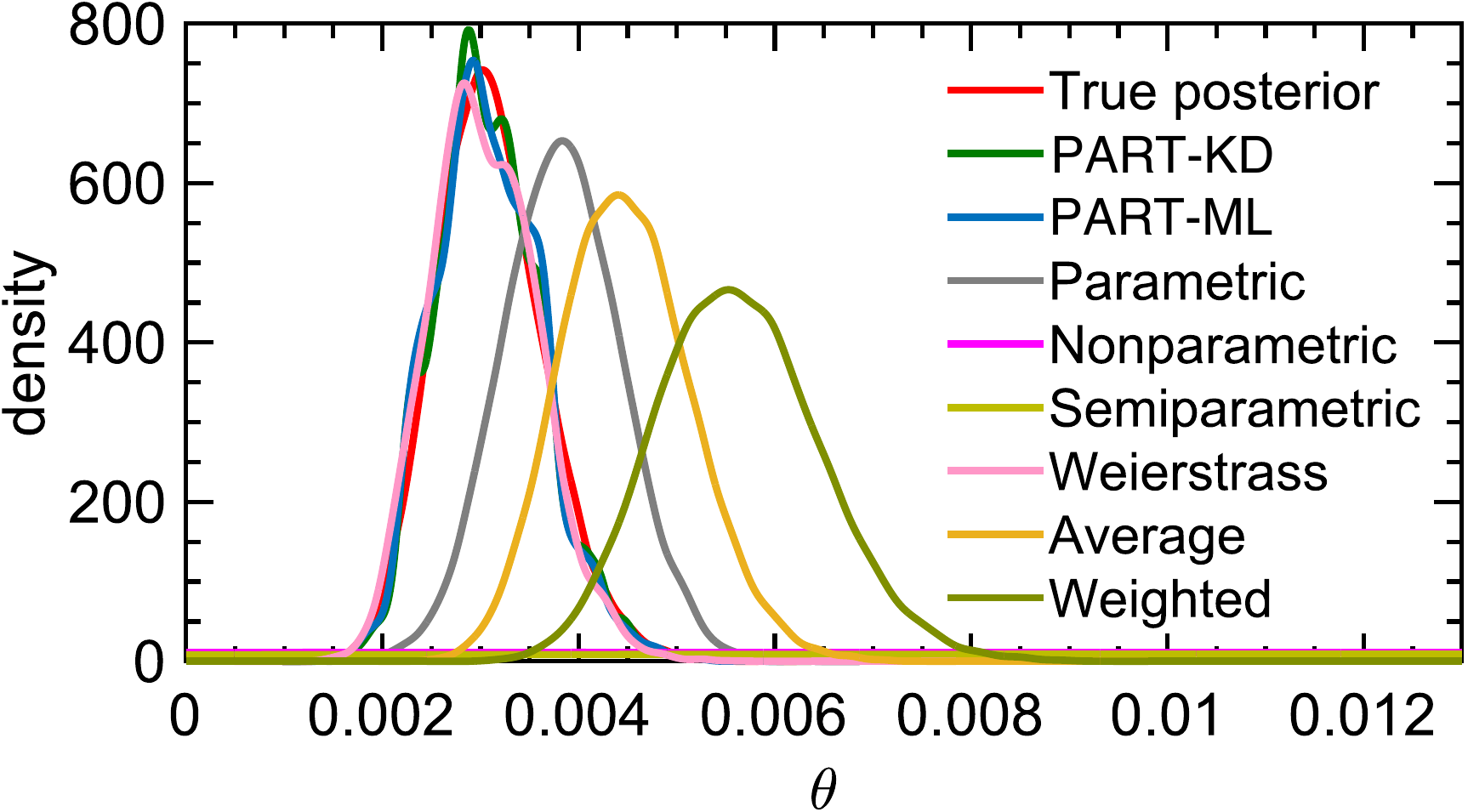}
\caption{The posterior for the probability $\theta$ of a rare event. Left: the full posterior (solid) and $m=15$ subset posteriors (dashed). Right: aggregated posterior output by various methods. All results are based on 20,000 aggregated samples.}
\label{fig:example-rare}
\end{figure}

\subsection{Bayesian Logistic Regression}
\paragraph{Synthetic dataset} The dataset $\{(\mathbf{x}_i, y_i)\}_{i=1}^{N}$ consists of $N=50,000$ observations in $p=50$ dimensions. All features $\mathbf{x}_i \in \mathbb{R}^{p-1}$ are drawn from $N_{p-1}(\mathbf{0}, \mathbf{\Sigma})$ with $p=50$ and $\Sigma_{k,l} = 0.9^{|k-l|}$. The model intercept is set to $-3$ and the other coefficient $\bm{\theta}^{\ast}_j$'s are drawn randomly from $N(0, 5^2)$. Conditional on $\mathbf{x}_i$, $y_i \in \{0,1\}$ follows $p(y_i=1) = 1/(1+\exp(-\bm{\theta}^{\ast T} [1, \mathbf{x}_i]))$. The dataset is randomly split into $m=40$ subsets. For both full chain and subset chains, we run adaptive MCMC for 200,000 iterations after 100,000 burn-in. Thinning by 4 results in 
$T=50,000$ samples. 

The samples from the full chain (denoted as $\{\bm{\theta}_j\}_{j=1}^{T}$) are treated as the ground truth. To compare the accuracy of different methods, we resample $T$ points $\{\hat{\bm{\theta}}_j\}$ from each aggregated posterior and then compare them using the following metrics: (1) RMSE of posterior mean $\| \frac{1}{pT}(\sum_{j} \hat{\bm{\theta}}_j - \sum_{j} \bm{\theta}_j)\|_2 $ (2) approximate KL divergence $ D_{\text{KL}}({p}(\bm{\theta}) \| \hat{p}(\bm{\theta})) $ and $ D_{\text{KL}}(\hat{p}(\bm{\theta}) \| p(\bm{\theta})) $, where $\hat{p}$ and $p$ are both approximated by multivariate Gaussians (3) the posterior concentration ratio, defined as $r = \sqrt{{\sum_j \| \hat{\bm{\theta}}_j - \bm{\theta}^{\ast}\|_2^2}/{\sum_j \| \bm{\theta}_j - \bm{\theta}^{\ast}\|_2^2}}$, which measures how posterior spreads out around the true value (with $r=1$ being ideal). The result is provided in Table~\ref{tab:logistic}. Figure \ref{fig:logisitic-vslength} shows the $D_{\text{KL}}(p \| \hat p)$ versus the length of subset chains supplied to the aggregation algorithm. The results of \emph{PART} are obtained with $\delta_{\rho}=0.001$, $\delta_{a}=0.0001$ and 40 trees. Figure \ref{fig:logistic-dist} showcases the aggregated posterior for two parameters in terms of joint and marginal distributions.

\begin{figure}[!htb]
\CenterFloatBoxes
\begin{floatrow}
\capbtabbox{%
\begin{tabular}{@{}lcccc@{}}
\toprule
Method           & RMSE & $D_\text{KL} (p \| \hat{p})$ & $D_\text{KL} (\hat{p} \| p)$ & $r$ \\ \midrule
PART (KD)        & \textbf{0.587}         & ${3.95 \times 10^2}$  & ${6.45 \times 10^2}$  & \textbf{3.94}   \\
PART (ML)        & 1.399                  & $\mathbf{8.05 \times 10^1}$               & $\mathbf{5.47 \times 10^2}$              & 9.17            \\
average          & 29.93                  & $2.53 \times 10^3$           & $5.41 \times 10^4$           & 184.62          \\
weighted  & 38.28                  & $2.60 \times 10^4$           & $2.53 \times 10^5$           & 236.15          \\
Weierstrass      & 6.47                   & $7.20 \times 10^2$           & $2.62 \times 10^3$           & 39.96           \\
parametric       & 10.07                  & $2.46 \times 10^3$           & $6.12 \times 10^3$           & 62.13           \\
nonparametric    & 25.59                  & $3.40 \times 10^4$           & $3.95 \times 10^4$           & 157.86          \\
semiparametric   & 25.45                  & $2.06 \times 10^4$           & $3.90 \times 10^4$           & 156.97          \\
\bottomrule
\end{tabular}
}{
\caption{Accuracy of posterior aggregation on logistic regression.}\label{tab:logistic}
}
\hspace{-4.2em}
\ffigbox{%
\includegraphics[width=0.35\textwidth]{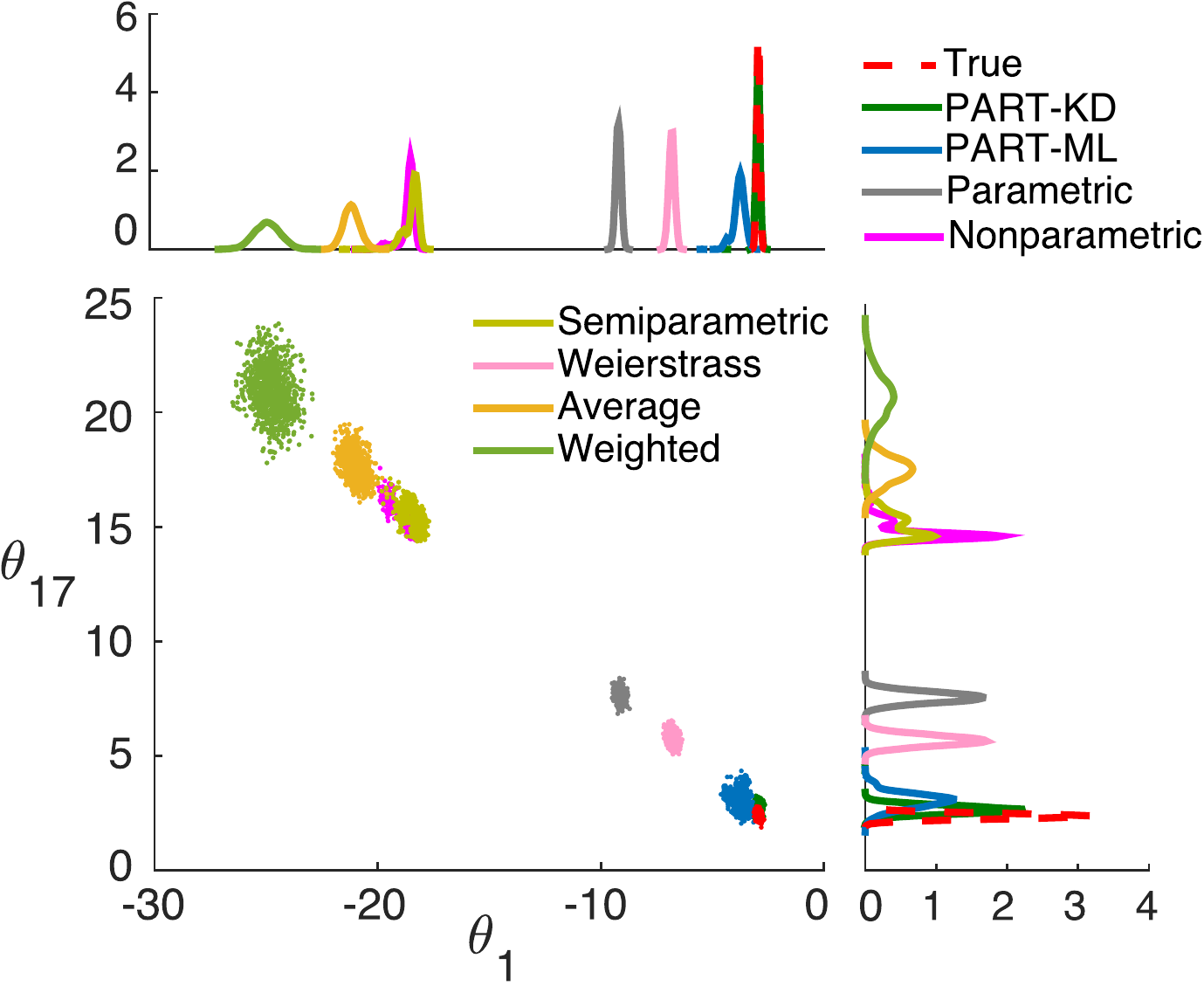}
}{%
\caption{Posterior of $\theta_{1}$ and $\theta_{17}$.}\label{fig:logistic-dist}%
}
\end{floatrow}
\end{figure}

\vspace{-1em}
\paragraph{Real datasets} We also run experiments on two real datasets: (1) the \emph{Covertype} dataset\footnote{\url{http://www.csie.ntu.edu.tw/~cjlin/libsvmtools/datasets/binary.html}}~\cite{blackard1998comparative} consists of 581,012 observations in 54 dimensions, and the task is to predict the type of forest cover with cartographic measurements; (2) the \emph{MiniBooNE} dataset\footnote{\url{https://archive.ics.uci.edu/ml/machine-learning-databases/00199}}~\cite{roe2005boosted,Lichman2013} consists of 130,065 observations in 50 dimensions, whose task is to distinguish electron neutrinos from muon neutrinos with experimental data. For both datasets, we reserve $1/5$ of the data as the test set. The training set is randomly split into $m=50$ and $m=25$ subsets respectively for \emph{covertype} and \emph{MiniBooNE}. Figure~\ref{fig:realdata} shows the prediction accuracy versus total runtime (parallel subset MCMC + aggregation time) for different methods. For each MCMC chain, the first 20\% iterations are discarded before aggregation as burn-in. The aggregated chain is required to be of the same length as the subset chains. As a reference, we also plot the result for the full chain and {\em lasso} \cite{tibshirani1996regression} run on the full training set. 

\begin{figure}[!htb]
    \begin{floatrow}
     \ffigbox[\FBwidth]
       {\includegraphics[width=0.33\textwidth]{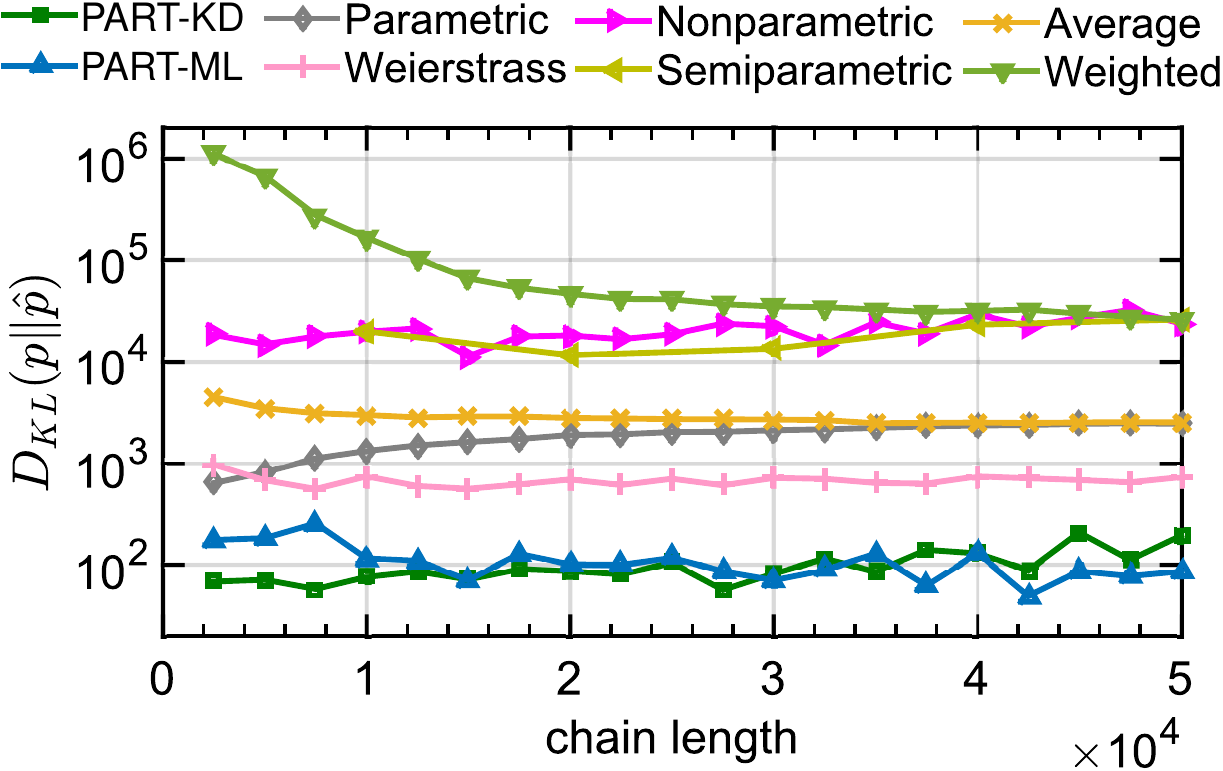}}{%
\caption{Approximate KL divergence between the full chain and the combined posterior versus the length of subset chains.}
\label{fig:logisitic-vslength} 
}
     \ffigbox[\Xhsize]
       {\includegraphics[width=0.64\textwidth]{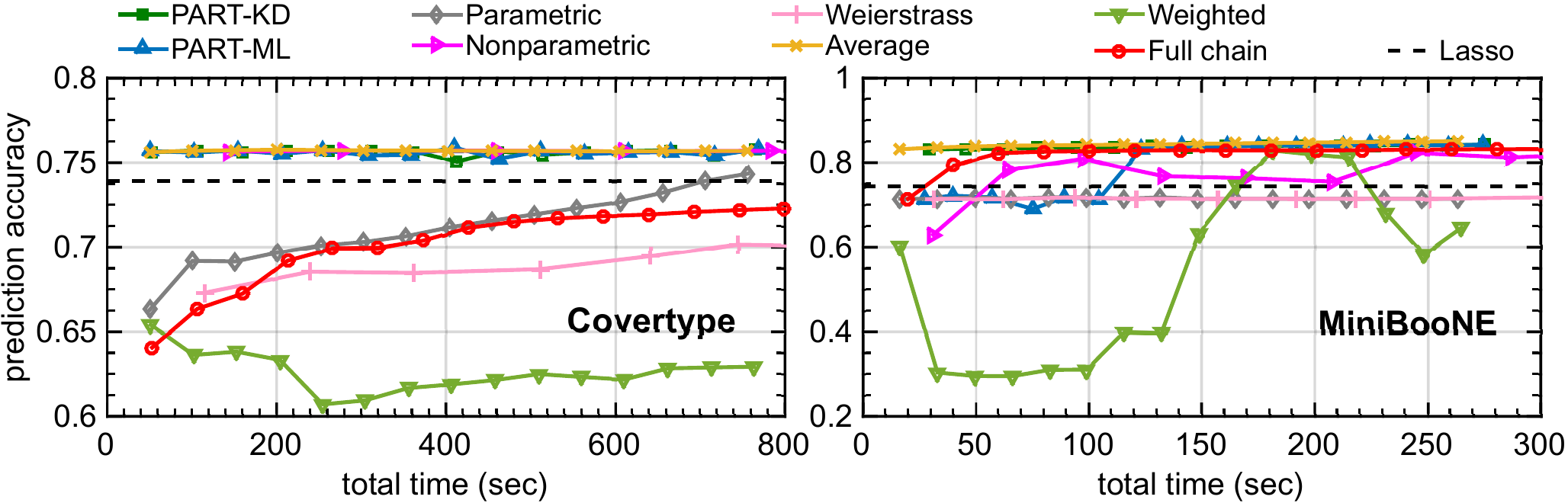}}{%
\caption{Prediction accuracy versus total runtime (running chain + aggregation) on \emph{Covertype} and \emph{MiniBooNE} datasets (\emph{semiparametric} is not compared due to its long running time). Plots against the length of chain are provided in the supplement.}
\label{fig:realdata}
}
    \end{floatrow}
\end{figure}

\vspace{-1em}
\section{Conclusion}
In this article, we propose a new embarrassingly-parallel MCMC algorithm {\em PART} that can efficiently draw posterior samples for large data sets. {\em PART} is simple to implement, efficient in subset combining and has theoretical guarantees. Compared to existing EP-MCMC algorithms, {\em PART} has substantially improved performance. 
Possible future directions include (1) exploring other multi-scale density estimators which share similar properties as partition trees but with a better approximation accuracy (2) developing a tuning procedure for choosing good $\delta_{\rho}$ and $\delta_{a}$, which are essential to the performance of \emph{PART}.

\bibliographystyle{unsrt}
\bibliography{tree15}

\clearpage
\setcounter{section}{0}
\setcounter{footnote}{0}

{
\hsize\textwidth
\linewidth\hsize \vskip 0.1in \toptitlebar \centering
{\LARGE\bf Supplementary Materials \par}  \bottomtitlebar % \vskip 0.1in %  minus
}

\section*{Appendix A: Schematic illustration of the algorithm}
Here we include a schematic illustration of the density aggregation in \text{PART} algorithm.
\begin{figure}[!h]
\centering
\includegraphics[width=0.8\textwidth]{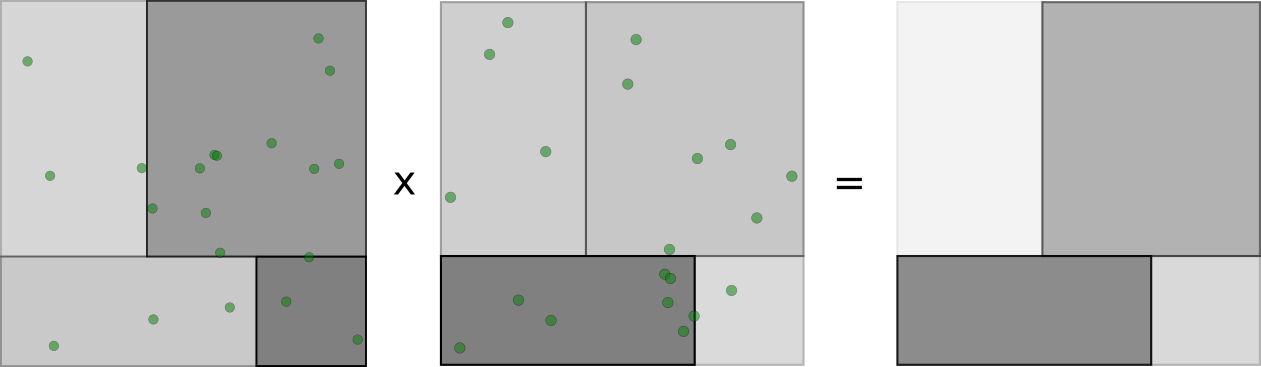}
\caption{A schematic figure illustrating the \textit{density aggregation} step of the algorithm. Two trees in the left share the same block structure and the aggregated histogram is obtained by block-wise multiplication and renormalization.}
\label{fig:schematic}
\end{figure}

\section*{Appendix B: Proof of Theorem \ref{thm:1}}
Let $\Gamma_{K,\rho}$ be a subset of $\Gamma_K$ defined as $\Gamma_{K,\rho} = \big\{f_0 \in \Gamma_K| ~\min_{A_k} \mathbbm{E}\tmmathbf{1}_{A_k} \geq \rho \big\} $. We prove a more general form of Theorem \ref{thm:1} here.
\begin{theorem}
For any $\delta > 0$, if the sample size satisfies that $N > \max\{K, ~\frac{2\log(2K/\delta)}{\rho(1-c_0)^2}\}$, then with probability at least $1 - \delta$, the optimal solution to \eqref{eq:prob3} satisfies that
\begin{align*}
D_{KL}(f\|\hat f_{ML}) \leq \min_{f_0\in \Gamma_{K,\rho}} D_{KL}(f\|f_0) + C_\rho\sqrt{\frac{K}{N}\log\bigg(\frac{3e N}{K}\bigg)\log\bigg(\frac{8}{\delta}\bigg)},
\end{align*}
where $C_\rho = 48\sqrt{p + 1}\max\big\{\log D, \log \rho^{-1}\big\}$. 
\par
Now choosing $\rho = 1/K^{1+1/(2p)}$, the condition becomes $N > 2(1-c_0)^{-2}K^{1+1/(2p)}\log(2K/\delta)$, then with probability at least $1 - \delta$ we have
\begin{align*}
D_{KL}(f\|\hat f_{ML}) \leq (C_1+2\log K)K^{-\frac{1}{2p}} + C_2\max\big\{\log D, 2\log K\big\}\sqrt{\frac{K}{N}\log\bigg(\frac{3e N}{K}\bigg)\log\bigg(\frac{8}{\delta}\bigg)},
\end{align*}
where $C_1 = 2\log D + 4pLD$ with $L = \|f'\|_\infty$ and $C_2 = 48\sqrt{p + 1}$.
\end{theorem}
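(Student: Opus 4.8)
The plan is to prove the bound through a standard two-term \emph{oracle decomposition} of the KL risk into an approximation (bias) term and an estimation (empirical-process) term, and then to specialize by substituting $\rho = K^{-1-1/(2p)}$. Throughout, write $P$ for expectation under $f$ and $P_N$ for the empirical measure of the $N$ samples, so that $D_{KL}(f\|g) = -H(f) - P\log g$ with $H(f)$ free of $g$; thus maximizing the empirical log-likelihood is exactly minimizing an empirical surrogate for KL. Let $f_0^\ast$ attain $\min_{f_0\in\Gamma_{K,\rho}}D_{KL}(f\|f_0)$ and let $\hat f = \hat f_{ML}$. The first step is to certify that the partition underlying $f_0^\ast$ is feasible for the empirical program: since each of its $K$ blocks has $P(A_k)\ge\rho$, a multiplicative Chernoff bound gives $\Pr(n_k/N < c_0\rho)\le e^{-N\rho(1-c_0)^2/2}$, and a union bound over the $K$ blocks is at most $\delta/2$ precisely under $N > 2\log(2K/\delta)/(\rho(1-c_0)^2)$. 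On this event $\hat f$ dominates in empirical likelihood the histogram with empirical block weights on $f_0^\ast$'s partition, which in turn dominates $f_0^\ast$ itself (the empirical weights are the per-partition MLE).

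With feasibility in hand, adding and subtracting $P_N$ yields
\[ D_{KL}(f\|\hat f) - D_{KL}(f\|f_0^\ast) = (P-P_N)(\log f_0^\ast - \log\hat f) + P_N(\log f_0^\ast - \log\hat f), \]
and the second term is $\le 0$ by the dominance just established. Both $\log f_0^\ast$ and $\log\hat f$ lie in the class $\mathcal G$ of log-densities of feasible $K$-block histograms, on which the constraints $n_k/N\ge c_0\rho$, $|A_k|\ge\rho/D$, and $P(A_k)\le 1$ force the envelope $|\log g|\le B := \max\{\log D,\log\rho^{-1}\}$. This gives the oracle inequality $D_{KL}(f\|\hat f)\le \min_{f_0\in\Gamma_{K,\rho}}D_{KL}(f\|f_0) + 2\sup_{g\in\mathcal G}|(P-P_N)\log g|$, which already has the shape of the claimed bound with $C_\rho = 48\sqrt{p+1}\,B$.

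The estimation term is where I expect the real work to lie. Since $\log g = \sum_k c_k\mathbf{1}_{A_k}$ with $|c_k|\le B$, I would bound $|(P-P_N)\log g|\le B\sum_k|P(A_k)-P_N(A_k)| = 2B\sup_{S}|(P-P_N)(\cup_{k\in S}A_k)|$, reducing everything to the uniform deviation of $P_N$ over unions of cells of $K$-block recursive (tree) partitions. The crux is a complexity bound for this set class: each cell is an axis-aligned rectangle (VC dimension $O(p)$) assembled from $K-1$ coordinate splits, so its growth/shatter function is polynomial of degree $O(pK)$. Feeding this into symmetrization, Dudley's entropy integral, and a bounded-differences concentration step should give $\sup_{\{A_k\}\in\mathcal F_K}\sum_k|P(A_k)-P_N(A_k)| \le 24\sqrt{p+1}\sqrt{(K/N)\log(3eN/K)\log(8/\delta)}$ with probability $\ge 1-\delta/2$. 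Extracting the \emph{explicit} constant $48\sqrt{p+1}$ and the precise logarithmic factors $\log(3eN/K)$ and $\log(8/\delta)$ from the chaining/VC machinery, rather than an unspecified universal constant, is the main obstacle.

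Combining the two events gives probability $\ge 1-\delta$. For the corollary, $\rho^{-1} = K^{1+1/(2p)}$ so $\log\rho^{-1} = (1+\tfrac1{2p})\log K \le 2\log K$, whence $C_\rho\le C_2\max\{\log D,2\log K\}$ with $C_2 = 48\sqrt{p+1}$, and the sample-size condition reduces to the stated one. It remains to bound the approximation term by $(C_1+2\log K)K^{-1/(2p)}$; here I would construct an explicit $f_0\in\Gamma_{K,\rho}$ from a near-regular grid, set each block value to $P(A_k)/|A_k|$, and control $D_{KL}(f\|f_0)$ by a first-order Taylor expansion using $f\in C^1$, $\|f'\|_\infty=L$, $\|f\|_\infty=D$. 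The delicate point is the low-density region: wherever $f$ is near $0$, a fine cell would violate $P(A_k)\ge\rho$, so such cells must be coarsened, and it is precisely this coarsening — together with KL's sensitivity to small densities — that caps the rate at $K^{-1/(2p)}$ and produces the additive $2\log K$ (i.e.\ $\log\rho^{-1}$) term and the constant $C_1 = 2\log D + 4pLD$.
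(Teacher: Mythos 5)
Your proposal follows essentially the same route as the paper's proof: certify feasibility of the KL-optimal histogram in $\Gamma_{K,\rho}$ via Chernoff plus a union bound, decompose the excess KL into an empirical-process term controlled by the uniform deviation $\sup_{\{A_k\}}\sum_k|P(A_k)-P_N(A_k)|$ over $K$-block rectangular partitions, and bound the approximation term by an explicit coarsened partition whose low-density blocks force the $K^{-1/(2p)}$ rate and the $\log K$ and $C_1$ terms. The only difference is that the paper imports the uniform deviation inequality with its explicit constants from an existing reference (Chen, 1987) rather than re-deriving it through symmetrization and chaining as you propose, which is precisely the step you flag as the main obstacle.
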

When multiple densities $f^{(1)}(\theta),\cdots, f^{(m)}(\theta)$ are presented, our goal of imposing the same partition on all functions requires solving a different problem \eqref{eq:prob4}.
As long as $\mathbbm{\hat E}\sum_{i = 1}^m \hat f_{ML}^{(i)} \geq \mathbbm{\hat E}\sum_{i = 1}^m f_{K, \rho}^{(i)}$ remains true, where $f_{K, \rho}^{(i)} = \argmin_{f_0\in \Gamma_{K, \rho}} D_{KL}(f^{(i)}\|f_0)$, the whole proof of Theorem \ref{thm:1} is also valid for \eqref{eq:prob4}. Therefore, we have the following Corollary.
\begin{corollary}[$m$ copies]\label{cor:1}
For any $\delta > 0$, if the sample size satisfies that $N > 2(1-c_0)^{-2}K^{1+1/(2p)}\log(2mK/\delta)$ and $\rho = 1/K^{1+1/(2p)}$, then with probability at least $1 - \delta$, the optimal solution to \eqref{eq:prob4} satisfies that
\begin{align*}
\frac{1}{m}\sum_{i = 1}^m D_{KL}(f^{(i)}\|\hat f_{ML}^{(i)}) &\leq (C_1+2\log K)K^{-\frac{1}{2p}} + C_2\max\big\{\log D, 2\log K\big\}\sqrt{\frac{K}{N}\log\bigg(\frac{3e N}{K}\bigg)\log\bigg(\frac{8m}{\delta}\bigg)}.
\end{align*}
\end{corollary}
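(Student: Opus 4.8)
The plan is to replay, term by term, the single-density argument of the general Theorem~\ref{thm:1} (Appendix~B), but summed over the $m$ subsets and with the confidence level sharpened to $\delta/m$ so that a union bound keeps the overall failure probability below $\delta$. Write $L_i(g)=\int f^{(i)}\log g$ for the population log-likelihood and $\hat L_i(g)=\tfrac1{N_i}\sum_j\log g(\theta^{(i)}_j)$ for its empirical counterpart (the $\hat{\mathbb{E}}$ of the text), so that $D_{KL}(f^{(i)}\|g)=L_i(f^{(i)})-L_i(g)$. For any common-partition reference family $(f^{(i)}_{\mathrm{ref}})_{i}$ one has the exact identity
\begin{align*}
D_{KL}(f^{(i)}\|\hat f_{ML}^{(i)})
= \underbrace{D_{KL}(f^{(i)}\|f^{(i)}_{\mathrm{ref}})}_{\text{(a) approximation}}
+ \underbrace{\big[\hat L_i(f^{(i)}_{\mathrm{ref}})-\hat L_i(\hat f_{ML}^{(i)})\big]}_{\text{(b) optimality}}
+ \underbrace{\big[(\hat L_i-L_i)(\hat f_{ML}^{(i)})-(\hat L_i-L_i)(f^{(i)}_{\mathrm{ref}})\big]}_{\text{(c) concentration}} .
\end{align*}
I would sum this over $i$, divide by $m$, and bound the three averaged pieces separately; the target bound then emerges by collecting (a), (b), (c) on a single good event.

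For (c) I would apply, for each fixed $i$, the uniform deviation bound over $\Gamma_{K,\rho}$ that produces the $C_\rho\sqrt{(K/N)\log(3eN/K)\log(8/\delta)}$ term in Appendix~B, run at level $\delta/m$; since each summand of (c) is at most $2\sup_{g\in\Gamma_{K,\rho}}|\hat L_i(g)-L_i(g)|$, a union bound over $i=1,\dots,m$ controls the whole average while turning $\log(8/\delta)$ into $\log(8m/\delta)$. The same count-concentration estimate, union-bounded over the $K$ blocks and the $m$ subsets, guarantees that every reference block carries empirical mass $n^{(i)}_k/N_i\ge c_0\rho$ for all $i$ simultaneously --- i.e.\ that $(f^{(i)}_{\mathrm{ref}})$ is empirically feasible for \eqref{eq:prob4} --- and this is exactly what forces the strengthened sample-size requirement $N>2(1-c_0)^{-2}K^{1+1/(2p)}\log(2mK/\delta)$. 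Term (a) is handled verbatim as in Appendix~B: choosing $\rho=K^{-1-1/(2p)}$ bounds each $D_{KL}(f^{(i)}\|f^{(i)}_{\mathrm{ref}})$, hence their average, by $(C_1+2\log K)K^{-1/(2p)}$, provided the reference attains the single-density approximation rate for every $i$.

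Both (a) and (b) therefore hinge on the same new ingredient, which I expect to be the main obstacle: exhibiting \emph{one} partition $\{A_k\}$, shared across all subsets, that is jointly feasible for \eqref{eq:prob4} and whose blockwise-average histograms $f^{(i)}_{\mathrm{ref}}$ achieve $D_{KL}(f^{(i)}\|f^{(i)}_{\mathrm{ref}})\le(C_1+2\log K)K^{-1/(2p)}$ for \emph{every} $i$ at once. This is subtle because the individually KL-optimal $f^{(i)}_{K,\rho}=\argmin_{f_0\in\Gamma_{K,\rho}}D_{KL}(f^{(i)}\|f_0)$ need not share a partition, so $(f^{(i)}_{K,\rho})_i$ is not a feasible point of \eqref{eq:prob4} and cannot be used directly. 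Once such a common reference is in hand, the optimality term (b) becomes nonpositive after summation: because $(f^{(i)}_{\mathrm{ref}})$ is a feasible competitor in the joint maximization \eqref{eq:prob4}, the maximizer satisfies $\sum_i\hat L_i(\hat f_{ML}^{(i)})\ge\sum_i\hat L_i(f^{(i)}_{\mathrm{ref}})$, which is precisely the inequality $\hat{\mathbb{E}}\sum_i\hat f_{ML}^{(i)}\ge\hat{\mathbb{E}}\sum_i f^{(i)}_{K,\rho}$ quoted in the text, so $\sum_i\text{(b)}_i\le 0$. To build the common reference I would lean on the fact that the Appendix~B construction depends on the target density only through the uniform smoothness constant $L=\|f'\|_\infty$ and the uniform envelope $D=\max_{I'\subseteq I}\|f_{I'}\|_\infty$ --- both defined uniformly over the subsets --- so a single sufficiently refined partition, pruned to keep every block above the mass threshold $\rho$ under all $m$ densities, serves all $f^{(i)}$ simultaneously at the stated rate; verifying that this common pruning does not degrade the per-subset rate is the delicate step. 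Assembling (a)--(c) on the good event and dividing by $m$ then yields Corollary~\ref{cor:1}.
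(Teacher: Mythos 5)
Your proposal takes the same route as the paper: Corollary~\ref{cor:1} is obtained by rerunning the proof of Theorem~\ref{thm:1} for each subset and union-bounding the feasibility event of Lemma~\ref{lemma:domain} and the uniform concentration bound \eqref{eq:concentration} over the $m$ subsets, which is exactly what turns $\log(2K/\delta)$ into $\log(2mK/\delta)$ in the sample-size condition and $\log(8/\delta)$ into $\log(8m/\delta)$ in the bound; your (a)/(b)/(c) split is the paper's decomposition summed over $i$. The one place you go beyond the paper is the ``main obstacle'' you flag. The paper's justification is literally the sentence that the corollary holds \emph{as long as} $\mathbbm{\hat E}\sum_i \hat f_{ML}^{(i)} \geq \mathbbm{\hat E}\sum_i f_{K,\rho}^{(i)}$, where the $f_{K,\rho}^{(i)}$ are the \emph{individually} KL-optimal approximations; it does not verify this, and you are right that since those references need not share a partition they are not a feasible competitor in \eqref{eq:prob4}, so the inequality does not follow from optimality of the joint maximizer. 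Your repair --- a single common-partition reference built from the midpoint-plus-minimal-adjustment construction of Appendix~B --- is the natural one, but note that the adjustment step there moves a cut to restore the mass constraint $P_f(B)\geq\rho$ for \emph{one} density; with $m$ densities the required adjustments can conflict, and showing that one cut keeps every block above mass $\rho$ under all $m$ densities simultaneously (or weakening $\rho$ by an $m$-dependent factor) is exactly the step that neither you nor the paper carries out. In short: same approach, and the residual gap you identify honestly is a gap the paper itself leaves open rather than a defect of your argument relative to it.
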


To prove Theorem \ref{thm:1} we need the following lemmas.
\begin{lemma}\label{lemma:opt}
The optimal solution of \eqref{eq:prob3} is also the optimal solution of the following problem,
\begin{align}
\hat f_{ML} &= \argmax_{f\in \Gamma_K} \frac{1}{N}\sum_{k = 1}^K n_k\log \hat\pi_k, \quad\mbox{s.t. } n_k\geq c_0\rho N, ~|A_k|\geq \rho/D \mbox{ and } \sum_{k = 1}^K \hat \pi_k|A_k| = 1. \label{eq:prob2}
\end{align}
\end{lemma}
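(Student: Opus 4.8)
The plan is to exploit the fact that \eqref{eq:prob3} and \eqref{eq:prob2} differ only in whether the block heights are pinned at their empirical values or left free, and to show that freely optimizing over the heights reproduces exactly those empirical values. First I would note that both problems range over the \emph{same} feasible set of partitions $\{A_k\}$: the counts $n_k$ and volumes $|A_k|$ are determined by the partition, so the admissibility conditions $n_k \geq c_0\rho N$ and $|A_k| \geq \rho/D$ are identical constraints on $\{A_k\}$ in both formulations. The only genuinely extra degree of freedom in \eqref{eq:prob2} is the choice of the heights $\hat\pi_k$, subject to the normalization $\sum_k \hat\pi_k|A_k| = 1$.

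Next I would fix an arbitrary admissible partition and solve the induced inner maximization over $\{\hat\pi_k\}$. Writing $p_k = n_k/N$ for the empirical mass in block $k$ and $q_k = \hat\pi_k|A_k|$ for the mass the candidate density assigns to that block, the normalization constraint becomes $\sum_k q_k = 1$ and the objective rewrites as
$$\frac{1}{N}\sum_{k=1}^K n_k\log\hat\pi_k = \sum_{k=1}^K p_k\log q_k - \sum_{k=1}^K p_k\log|A_k|.$$
Since the second sum is independent of the heights, maximizing over $\{\hat\pi_k\}$ reduces to maximizing the cross-entropy term $\sum_k p_k\log q_k$ over the simplex $\{q_k \geq 0,\ \sum_k q_k = 1\}$. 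By Gibbs' inequality this is uniquely maximized at $q_k = p_k$, i.e. at $\hat\pi_k = n_k/(N|A_k|)$; the constraint $n_k \geq c_0\rho N > 0$ guarantees every $p_k > 0$, so the logarithm is well defined and the maximizer is interior. Equivalently one may attach a single Lagrange multiplier to the normalization constraint and verify that it equals $1$.

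Finally I would substitute this optimal height back in. The resulting value is $\frac{1}{N}\sum_k n_k\log\!\big(n_k/(N|A_k|)\big)$, precisely the objective of \eqref{eq:prob3}, and the maximizing density is the empirical histogram, which satisfies $\sum_k \big(n_k/(N|A_k|)\big)|A_k| = 1$ and is therefore feasible for \eqref{eq:prob2}. Since the inner-optimal value of \eqref{eq:prob2} coincides with the \eqref{eq:prob3} objective for \emph{every} admissible partition, and both problems maximize over the same partitions, their optimizers agree. I do not anticipate a serious obstacle here: the core is a one-line Gibbs computation, and the only point requiring care is checking that the shared admissibility constraints make the two feasible regions literally the same set of partitions, so that the added height-freedom of \eqref{eq:prob2} cannot push the optimum above the empirical histogram.
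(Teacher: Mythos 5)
Your proposal is correct and matches the paper's own argument: the paper likewise decomposes the objective as $\sum_k (n_k/N)\log(\hat\pi_k|A_k|) - \sum_k (n_k/N)\log|A_k|$ and observes that, for any fixed admissible partition, the normalization constraint forces the maximizer $\hat\pi_k|A_k| = n_k/N$ (your Gibbs-inequality step, which the paper leaves implicit as ``one can easily see''). Your additional care in checking that the two problems share the same feasible set of partitions is a fair expansion of what the paper takes for granted, not a different route.
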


\begin{proof}
We write out the empirical log likelihood
\begin{align*}
\frac{1}{N}\sum_{k = 1}^K n_k\log \hat\pi_k = \sum_{k = 1}^K \frac{n_k}{N}\log \hat\pi_k|A_k| - \sum_{n_k \geq 0} \frac{n_k}{N}\log |A_k|.
\end{align*}
For any fixed partition $\{A_k\}$, under the constraint that $\sum_{k = 1}^K \hat\pi_k|A_k| = 1$, one can easily see that
\begin{align*}
\hat\pi_k|A_k| = \frac{n_k}{N}
\end{align*}
maximizes the result.
\end{proof}

Next, we show that the optimal approximation 
\begin{align*}
f_{K,\rho} = \argmin_{\hat f_{ML}\in \Gamma_{K, \rho}} D_{KL}(f\|\hat f_{ML})
\end{align*}
is a feasible solution to \eqref{eq:prob2} with a high probability. 

\begin{lemma}\label{lemma:domain}
 Let $f_{K, \rho}$ be the optimal approximation in $\Gamma_{K,\rho}$, then $f_{K, \rho}$ satisfies that $\min_k |A_k| \geq \rho/D$. In addition, with probability at least $1 - K\exp(- (1-c_0)^2\rho N/2)$, we have $n_k/N \geq c_0\rho$, i.e., $f_{K, \rho}$ is a feasible soluton of \eqref{eq:prob2}.
 \end{lemma}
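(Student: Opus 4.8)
The plan is to verify the two claims separately; the first is deterministic and the second is a routine concentration argument, after which feasibility follows immediately.

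For the \textbf{first claim}, I would argue directly from the definition of $\Gamma_{K,\rho}$. By construction every block $A_k$ of $f_{K,\rho}$ satisfies $\mathbb{E}\mathbf{1}_{A_k} = \int_{A_k} f(\theta)\,d\theta \geq \rho$. Since $f$ is bounded above by $D = \|f\|_\infty$, we also have $\int_{A_k} f(\theta)\,d\theta \leq D|A_k|$. Chaining these two inequalities gives $\rho \leq D|A_k|$, i.e. $|A_k| \geq \rho/D$ for every $k$, and hence $\min_k |A_k| \geq \rho/D$. This shows $f_{K,\rho}$ always meets the area constraint in \eqref{eq:prob2}, while the density constraint $\sum_k \hat\pi_k |A_k| = 1$ holds automatically because $f_{K,\rho}$ is itself a probability density.

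For the \textbf{second claim}, the only remaining constraint to check is $n_k \geq c_0\rho N$. I would fix a block $A_k$ and write $p_k = \int_{A_k} f \geq \rho$. Since the $N$ samples are i.i.d. draws from $f$, the count $n_k$ is $\text{Binomial}(N, p_k)$ with mean $Np_k$. Because $c_0 < 1$ and $p_k \geq \rho$, we have $c_0\rho \leq c_0 p_k < p_k$, so the bad event $\{n_k/N < c_0\rho\}$ is contained in $\{n_k < c_0 p_k N\}$. Applying the multiplicative Chernoff lower-tail bound with relative deviation $1 - c_0$ from the mean $Np_k$ yields $P(n_k < c_0 N p_k) \leq \exp(-Np_k(1-c_0)^2/2) \leq \exp(-N\rho(1-c_0)^2/2)$, where the last step uses $p_k \geq \rho$. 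A union bound over the $K$ blocks then gives $P(\exists k: n_k/N < c_0\rho) \leq K\exp(-(1-c_0)^2\rho N/2)$, which matches the claimed failure probability exactly.

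Combining the two parts, on the event of probability at least $1 - K\exp(-(1-c_0)^2\rho N/2)$ all three constraints of \eqref{eq:prob2} are satisfied by $f_{K,\rho}$, so it is a feasible solution. I do not expect a serious obstacle: the first part is a one-line consequence of the uniform bound $f \leq D$, and the second is a textbook binomial tail estimate. The only point requiring minor care is the reduction from the threshold $c_0\rho$ to $c_0 p_k$, since it is this step that lets the Chernoff bound be stated cleanly in terms of the mean $Np_k$ before being uniformized via $p_k \geq \rho$.
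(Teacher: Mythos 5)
Your proposal is correct and follows essentially the same route as the paper's proof: the area bound via $\rho \leq \int_{A_k} f \leq D|A_k|$, and the multiplicative Chernoff lower tail with relative deviation $1-c_0$ applied to the binomial counts $n_k$, followed by a union bound over the $K$ blocks. Your explicit reduction from the threshold $c_0\rho$ to $c_0 p_k$ is a small point of care that the paper states more tersely, but the argument is the same.
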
 
\begin{proof}
Let $n_k$ be the counts of data points on the partition of $f_{K, \rho}$. Notice $f_{K,\rho}$ is a fixed function that does not depend on the data. Therefore, each $n_k$ follows a binomial distribution. Define $P(A_k) = \mathbbm{E}\tmmathbf{1}_{A_k}$. According to the definition of $\Gamma_{K,\rho}$, we have $P(A_k) \geq \rho$. Using the Chernoff's inequality, we have for any $0 < \delta < 1$,
\begin{align*}
P\bigg(\frac{n_k}{N} \leq (1 - \delta)P(A_k)\bigg) \leq \exp\bigg(-\frac{\delta^2NP(A_k)}{2}\bigg).
\end{align*}
Taking $\delta = 1-c_0$ and union bounds we have
\begin{align*}
P\bigg(\min_{k} \frac{n_k}{N} \geq c_0\rho\bigg) \geq 1 - K\exp(- (1-c_0)^2\rho N/2).
\end{align*}
On the other hand, the following inequality shows the bound on $|A_k|$,
\begin{align*}
|A_k| = \int_{A_k} 1 \geq \int_{A_k} f/D \geq \rho/D.
\end{align*}
\end{proof}

Lemma \ref{lemma:domain} states that with a high probability we have $\mathbbm{\hat E}\log \hat f_{ML} \geq \mathbbm{\hat E}\log f_{K,\rho}$. This result will be used to prove our main theorem. 
\par
Although the actual partition algorithm selects the dimension for partitioning completely at random for each iteration, in the proof we will assume one predetermined order of partition (such as $\{1, 2, 3, \cdots, p, 1, 2, \cdots\}$) just for simplicity. The order of partitioning does not matter as long as every dimension receives sufficient number of partitions. When the selection is randomly taken, with high probability (increasing exponentially with $N$), the number of partitions in each dimension will concentrate around the average. Thus, it suffices to prove the result for the simple $\{1, 2, 3, \cdots, p, 1, 2, \cdots\}$ case.

\begin{proof}[\textbf{Proof of Theorem \ref{thm:1}}]
The proof consists of two parts, namely (1) bounding the excess loss compared to the optimal approximation $ f_{K,\rho} $ in $ \Gamma_{K,\rho} $ and (2) bounding the error between the optimal approximation and the true density. 

For the first part, using the fact that $\mathbbm{\hat E}\log f_{K,\rho}(\theta) \leq \mathbbm{\hat E}\log \hat f_{ML}(\theta)$, the excess loss can be expressed as
\begin{align*}
D_{KL} (f\|\hat{f}_{ML}) &- D_{KL}(f\|f_{K,\rho}) = \mathbbm{E}\log f_{K,\rho}(\theta) - \mathbbm{E}\log \hat f_{ML}(\theta)\\
&= \mathbbm{E}\log f_{K,\rho}(\theta) - \mathbbm{\hat E}\log f_{K,\rho}(\theta) + \mathbbm{\hat E}\log f_{K,\rho}(\theta) \\
&\quad - \mathbbm{\hat E}\log \hat f_{ML}(\theta) + \mathbbm{\hat E}\log \hat f_{ML}(\theta) - \mathbbm{E}\log \hat f_{ML}(\theta)\\
&\leq \mathbbm{E}\log f_{K,\rho}(\theta) - \mathbbm{\hat E}\log f_{K,\rho}(\theta)+ \mathbbm{\hat E}\log \hat f_{ML}(\theta) - \mathbbm{E}\log \hat f_{ML}(\theta).
\end{align*}
Assuming the partitions for $f_{K,\rho}$ and $\hat f_{ML}$ are $\{A_k\}$ and $\{\hat A_k\}$ respectively, we have
\begin{align}
\notag D_{KL} (f\|\hat{f}_{ML}) &- D_{KL}(f\|f_{K,\rho}) = \sum_{k = 1}^K \log \pi_k\big(\mathbbm{E}\tmmathbf{1}_{A_k} - \mathbbm{\hat E}\tmmathbf{1}_{A_k}\big) + \sum_{k = 1}^K \log \frac{n_k}{N|\hat A_k|} \big(\mathbbm{E}\tmmathbf{1}_{\hat A_k} - \mathbbm{\hat E}\tmmathbf{1}_{\hat A_k}\big)\\
&\leq \bigg(\max_{k}|\log \pi_k| + \max_k \big|\log \frac{n_k}{N|\hat A_k|}\big|\bigg) \sup_{\{A_k\}\in \mathcal{F}_k}\sum_{k = 1}^K |\mathbbm{E}\tmmathbf{1}_{A_k} - \mathbbm{\hat E}\tmmathbf{1}_{A_k}|. \label{eq:KL}
\end{align}
Following a similar argument as Lemma \ref{lemma:opt}, for $f_{K,\rho}$ we can prove that $\pi_k = \int_{A_k} f(\theta)d\theta/|A_k|$, thus we have $\rho_0 \leq \pi_k \leq D$ for any $1 \leq k \leq K$. Similarly for $\frac{n_k}{N|A_k|}$ we have $\rho\leq \frac{n_k}{N|A_k|} \leq D/\rho$. Therefore, the first term in \eqref{eq:KL} can be bounded as
\begin{align*}
\max_{k}|\log \pi_k| + \max_k \big|\log \frac{n_k}{N|\hat A_k|}\big|\leq 3\max\{\log D, ~\log \rho^{-1}\}.
\end{align*}
The second term in \eqref{eq:KL} is the concentration of the empirical measure over all possible K-rectangular partitions. Using the result from \cite{chen1987almost}, we have the following large deviation inequality. For any $\epsilon \in (0, 1)$, we have
\begin{align}
P\bigg(\sup_{\{A_k\}\in \mathcal{F}_k}\sum_{k = 1}^K |\mathbbm{E}\tmmathbf{1}_{A_k} - \mathbbm{\hat E}\tmmathbf{1}_{A_k}| > \epsilon \bigg) < 4\exp\bigg\{-\frac{\epsilon^2 N}{2^9}\bigg\},\label{eq:concentration}
\end{align}
if $N \geq \max\{K, (100\log 6)/\epsilon^2, 2^9(p + 1)K\log(3e N/K)/\epsilon^2\}$. For any $\delta > 0$, taking $\epsilon = 2^9(p + 1)K\log(3e N/K)/N \log(4/\delta)$, we have that
\begin{align*}
\sup_{\{A_k\}\in \mathcal{F}_k}\sum_{k = 1}^K |\mathbbm{E}\tmmathbf{1}_{A_k} - \mathbbm{\hat E}\tmmathbf{1}_{A_k}|\leq 16\sqrt{2(p + 1)}\sqrt{\frac{K}{N}\log\bigg(\frac{3e N}{K}\bigg)\log\bigg(\frac{8}{\delta}\bigg)},
\end{align*}
with probability at least $1 - \delta/2$. Define $C_\rho = 48\sqrt{2(p + 1)}\max\{\log D, \log \rho^{-1}\}$. When $N > \frac{2\log(2K/\delta)}{\rho(1-c_0)^2}$, Lemma \ref{lemma:domain} holds with probability at least $1 - \delta/2$. Taking the union bound, we have
\begin{align}
D_{KL}(f\|\hat f_{ML}) \leq \min_{f_0\in \Gamma_{K,\rho}} D_{KL}(f\|f_0) + C_\rho\sqrt{\frac{K}{N}\log\bigg(\frac{3e N}{K}\bigg)\log\bigg(\frac{8}{\delta}\bigg)}\label{eq:thm1.first}
\end{align}
holds with probability greater than $1 - \delta$.
\par
To prove the second part, we construct one reference density $\tilde f \in \Gamma_{K, \rho}$ that gives the error specified in the theorem. According to the argument provided in the paragraph prior to this proof, we assume the dimension that we cut at each iteration follows an order $\{1, 2, \cdots, p, 1, 2, \cdots\}$. We then construct $f_0$ in the following way. At iteration $i$, we check the probability on the whole region. 
\begin{itemize}
\item[i] If the probability is greater than $2\rho$, we then cut at the midpoint of the selected dimension. If the resulting two blocks $B_1$ and $B_2$ satisfy that $P_f(B_1) \geq \rho$ and $P_f(B_2) \geq \rho$, we continue to the next iteration. However, if any of them fails to satisfy the condition, we find the minimum-deviated cut that satisfies the probability requirement. 
\item[ii] If the probability on the whole region is less than $2\rho$, we stop cutting on this region and move to the next region for the current iteration. 
\end{itemize}
It is easy to show that as long as $\rho\leq 1/(2K)$, the above procedure is able to yield a $K$-block partition $\{\tilde A_k\}$ before termination. Finally, the reference density $\tilde f$ is defined as
\begin{align}
\tilde f(\theta) = \sum_{k = 1}^K \frac{\int_{\tilde A_k} f(\theta)d\theta}{|\tilde A_k|} \tmop{1}_{\tilde A_k}(\theta).
\end{align}
\par
The construction procedure ensures the following property for $\tilde f\in \Gamma_{K, \rho}$. Assuming $K \in [2^d, 2^{d+1})$ for some $d > 0$, then each $\tilde A_k$ must fall into either of the following two categories (could be both),
\begin{enumerate}
\item $\rho \leq P_f(\tilde A_k) \leq 2\rho$,
\item $P_f(\tilde A_k)\geq \rho$ and the longest edge of cube $\tilde A_k$ must be less than $2^{-\lfloor d/p\rfloor}\leq 2^{-d/p + 1}\leq 4k^{-1/p}$.
\end{enumerate}
We use $I_1$ and $I_2$ to denote the two different collections of sets. Now for any $b_0>0$, let $B = \{f < b_0\}\cup \{\tilde f < b_0\}$. We divide the KL-divergence between $f$ and $\tilde f$ into three regions and bound them accordingly.
\begin{align*}
D_{KL}(f\|\tilde f) = \int_\Omega f\log \frac{f}{\tilde f} &= \int_{B} f\log \frac{f}{\tilde f} + \int_{B^c\cap \big(\bigcup I_1\big)} f\log \frac{f}{\tilde f} + \int_{B^c\cap \big(\bigcup I_2\big)} f\log \frac{f}{\tilde f}\\
& = M_1 + M_2 + M_3.
\end{align*}

We first look at $M_1$.
Because $\tilde f$ is a block-valued function, $\{\tilde f < b_0\}$ must be the union of all the $\tilde A_k$ that satisfies $\int_{\tilde A_k} f(\theta)d\theta\leq b_0|\tilde A_k|$. Therefore, we have
\begin{align*}
\int_{\tilde f < b_0} f(\theta)d\theta = \sum_{\tilde A_k: ~\int_{\tilde A_k} f(\theta)d\theta\leq b_0|\tilde A_k|} \int_{\tilde A_k} f(\theta)d\theta \leq \sum_{\tilde A_k: ~\int_{\tilde A_k} f(\theta)d\theta \leq b_0|\tilde A_k|} b_0 |\tilde A_k| \leq b_0.
\end{align*}
Therefore, we have 
\begin{align*}
\int_B f(\theta)d\theta \leq \int_{\tilde f< b_0}f(\theta)d\theta + \int_{f<b_0}f(\theta)d\theta = b_0 + b_0|\Omega| = 2b_0.
\end{align*}
Because $\tilde f \geq \min_k P(A_k)/|A_k| \geq P(A_k) \geq \rho$, we have
\begin{align*}
M_1 = \int_{B} f\log \frac{f}{\tilde f} \leq \int_B f(\theta)\log \frac{b_0}{\rho} \leq 2b_0\big|\log\frac{b_0}{\rho}\big|.
\end{align*}
\par
Next, we look at $M_2$. It is clear that 
\begin{align*}
\int_{B^c\cap \big(\bigcup I_1\big)} f(\theta)d\theta \leq  \int_{\big(\bigcup I_1\big)} f(\theta)d\theta \leq card(I_1)2\rho,
\end{align*}
and hence we have
\begin{align*}
M_2 = \int_{B^c\cap \big(\bigcup I_1\big)} f\log \frac{f}{\tilde f} \leq \int_{B^c\cap \big(\bigcup I_1\big)} f(\theta) \log\frac{D}{b_0}\leq card(I_1)2\rho\big|\log\frac{D}{b_0}\big|\leq 2K\rho\big|\log\frac{D}{b_0}\big|.
\end{align*}
\par
Now for $M_3$, we first use the inequality that $\log x \leq x - 1$ for any $x > 0$,
\begin{align*}
M_3 = \int_{B^c\cap \big(\bigcup I_2\big)} f\log \frac{f}{\tilde f} \leq \int_{B^c\cap \big(\bigcup I_2\big)} f \bigg(\frac{f}{\tilde f} - 1\bigg)\leq \int_{B^c\cap \big(\bigcup I_2\big)} \frac{f}{\tilde f} (f - \tilde f)\leq \frac{D}{b_0}\int_{\bigcup I_2} |f - \tilde f|.
\end{align*}
Using the mean value theorem for integration, we have $\int_{\tilde A_k} f(\theta)/|\tilde A_k| = f(\theta_0)$ for some $\theta_0 \in \tilde A_k$. Also because $f\in C^1(\Omega)$, $\|f'\|_\infty$ is bounded, i.e., there exists some constant $L$ such that
$|f(x_1) - f(x_2)| \leq L\sum_{j = 1}^p |x_{1j} - x_{2j}|$.
Therefore, we have
\begin{align*}
\int_{\bigcup I_2} |f - \tilde f| = \sum_{\tilde A_k \in I_2} \int_{\tilde A_k} |f(\theta) - \tilde f(\theta)| = \sum_{\tilde A_k \in I_2} \int_{\tilde A_k} |f(\theta) - f(\theta_0)| \leq \sum_{\tilde A_k \in I_2} 4pLk^{-\frac{1}{p}}|\tilde A_k| \leq 4pLk^{-\frac{1}{p}},
\end{align*}
and thus
\begin{align*}
M_3 \leq \frac{4pLD}{b_0}K^{-\frac{1}{p}}.
\end{align*}
Putting all pieces together we have
\begin{align*}
D_{KL}(f\|\tilde f) \leq 2b_0\big|\log\frac{b_0}{\rho}\big| + 2K\rho\big|\log\frac{D}{b_0}\big| + \frac{4pLD}{b_0}K^{-\frac{1}{p}}.
\end{align*}
Now taking $b_0 = K^{-1/(2p)}$ and $\rho = K^{-1-1/(2p)}$, we have
\begin{align*}
D_{KL}(f\|\tilde f) &\leq (\log K )K^{-\frac{1}{2p}} + 2(\log D + \frac{\log K}{2p})K^{-\frac{1}{2p}} + 4pLDK^{-\frac{1}{2p}}\\
&\leq (2\log K + 2\log D + 4pLD) K^{-\frac{1}{2p}}.
\end{align*}
Now defining $C_1 = 2\log D + 4pLD$ and $C_2 = 48\sqrt{p + 1}$ and combining with \eqref{eq:thm1.first}, we have
\begin{align*}
D_{KL}(f\|\hat f_{ML}) \leq (C_1+2\log K)K^{-\frac{1}{2p}} + C_2\max\big\{\log D, 2\log K\big\}\sqrt{\frac{K}{N}\log\bigg(\frac{3e N}{K}\bigg)\log\bigg(\frac{8}{\delta}\bigg)}.
\end{align*}
\end{proof}

\section*{Appendix C: Proof of Theorem \ref{thm:2}}
The KD-tree $\hat f_{KD}$ always cuts at the empirical median for different dimensions, aiming to approximate the true density by equal probability partitioning. For $\hat f_{KD}$ we have the following result.
\begin{theorem}
For any $\varepsilon > 0$, define $r_\varepsilon = \log_2\bigg(1 + \frac{1}{2 + 3L/\varepsilon}\bigg)$. For any $\delta > 0$, if $N > 32e^2(\log K)^2K\log (2K/\delta)$, then with probability at least $1 - \delta$, we have
\begin{align*}
\|\hat f_{KD} - f_{KD}\|_1\leq \varepsilon + pLK^{-\frac{r_\varepsilon}{p}} + 4e\log K\sqrt{\frac{2K}{N}\log\bigg(\frac{2K}{\delta}\bigg)}.
\end{align*}
If the function is further lower bounded by some constant $b_0 > 0$, we can then obtain an upper bound on the KL-divergence. Define $r_{b_0} = \log_2\bigg(1 + \frac{1}{2 + 3L/b_0}\bigg)$ and we have
\begin{align*}
D_{KL}(f\|\hat f_{KD}) \leq \frac{pLD}{b_0}K^{-\frac{r_{b_0}}{p}} + 8e\log K\sqrt{\frac{2K}{N}\log\bigg(\frac{2K}{\delta}\bigg)}.
\end{align*}

\end{theorem}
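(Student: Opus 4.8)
The plan is to bound the histogram error $\|\hat f_{KD}-f\|_1$ (and its KL analogue) by a bias--variance split, exactly mirroring the two-part structure used in the proof of Theorem~\ref{thm:1}; the stated comparison to $f_{KD}$ — the equal-mass reference histogram — is then handled in passing by the triangle inequality. Writing $\{\hat A_k\}$ for the random leaf blocks produced by the \emph{empirical}-median cuts, I would introduce the intermediate histogram $g=\sum_k \frac{P_f(\hat A_k)}{|\hat A_k|}\mathbf 1_{\hat A_k}$ living on this same partition but carrying \emph{exact} block masses, and estimate $\|\hat f_{KD}-f\|_1\le \|\hat f_{KD}-g\|_1+\|g-f\|_1$. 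The approximation term $\|g-f\|_1$ will produce the $\varepsilon+pLK^{-r_\varepsilon/p}$ contribution, and the estimation term $\|\hat f_{KD}-g\|_1$ the $\sqrt{K/N}$ contribution.

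For the approximation term, $f\in C^1(\Omega)$ with $\|f'\|_\infty\le L$ gives $\|g-f\|_1=\sum_k\int_{\hat A_k}|f-\bar f_{\hat A_k}|\le \sum_k L\,\mathrm{diam}(\hat A_k)\,|\hat A_k|$, so everything reduces to controlling the longest edge of each block. The key geometric fact is that an equal-mass cut cannot be too skewed \emph{provided the local density is bounded below}: on a block where $f\ge\varepsilon$ with edge $w$ along the cut dimension, comparing $\int_{A_1}f=\int_{A_2}f$ against the pointwise estimates $f_{\min}w_1\le\int_{A_1}f$ and $\int_{A_2}f\le f_{\max}w_2$, together with $f_{\max}\le f_{\min}+Lw$, forces the larger child to retain at most a fixed fraction $\gamma<1$ of the edge, where $-\log_2\gamma=r_\varepsilon$. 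Cycling through the $p$ dimensions, after $\log_2K$ levels every edge has been cut about $(\log_2K)/p$ times, so the longest edge of a high-density block is $\le K^{-r_\varepsilon/p}$ and these blocks contribute $\le pLK^{-r_\varepsilon/p}$. On the complementary low-density region, where the cuts need not shrink edges at all, I would bound the $L_1$ mass crudely, using that both $f$ and $\bar f$ are small there, to get the leading $\varepsilon$.

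For the estimation term I would exploit the defining feature of empirical-median cuts: each leaf receives exactly $N/K$ samples, so $\|\hat f_{KD}-g\|_1=\sum_k|n_k/N-P_f(\hat A_k)|=\sum_k|1/K-P_f(\hat A_k)|$, reducing the whole problem to how far the \emph{true} mass of each empirically-defined block strays from $1/K$. Letting $\eta_v$ be the deviation of the true left-mass fraction from $\tfrac12$ at internal node $v$, one has $P_f(\hat A_k)=\frac1K\prod_{v\in\mathrm{path}(k)}(1+\eta_v)$, and each $\eta_v$ is precisely the gap between the empirical and population medians, bounded by a DKW/Chernoff quantile inequality as $|\eta_v|\lesssim\sqrt{\log(2K/\delta)/n_v}$ with $n_v=N/2^{\mathrm{depth}(v)}$, after a union bound over the $\le 2K$ nodes. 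Using $\prod(1+\eta_v)\le e^{\sum\eta_v}$ to pass from product to sum and then summing the geometric series $\sum_\ell 2^{\ell/2}$ over the $\log_2K$ depths yields $\sum_k|1/K-P_f(\hat A_k)|\lesssim e\log K\sqrt{(K/N)\log(2K/\delta)}$; the hypothesis $N>32e^2(\log K)^2K\log(2K/\delta)$ is exactly what keeps every $n_v$ large enough for these node-wise bounds to hold simultaneously and to keep $\sum|\eta_v|$ small.

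The KL bound follows the same skeleton. The extra assumption $f\ge b_0$ eliminates the low-density region altogether, so the shrinkage analysis applies uniformly with exponent $r_{b_0}$, and to convert $\int f\log(f/\hat f)$ into an $L_1$-type quantity I would reuse the $M_3$-step device from the proof of Theorem~\ref{thm:1}, namely $\log x\le x-1$ combined with $f/\hat f\le D/b_0$, which produces the $D/b_0$ prefactor in $\frac{pLD}{b_0}K^{-r_{b_0}/p}$ and the doubled constant $8e$. The main obstacle is the approximation step: in contrast to the fixed dyadic midpoint cuts, equal-probability cuts can be arbitrarily skewed wherever $f$ is small, so edges are not guaranteed to shrink there at all; the entire purpose of the threshold $\varepsilon$ (respectively the lower bound $b_0$) and of the delicate exponent $r_\varepsilon=\log_2\!\big(1+\frac{1}{2+3L/\varepsilon}\big)$ is to quarantine that region and to quantify the guaranteed per-cut edge shrinkage on its complement. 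Pinning down the constant inside $r_\varepsilon$ — tracking how the $C^1$ variation of $f$ across a block limits the skewness of the mass-median cut — is the technical heart of the argument.
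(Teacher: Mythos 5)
Your proposal follows essentially the same route as the paper: your intermediate histogram $g$ is exactly the paper's $f_{KD}$ (same empirical partition, true block masses), your median-concentration analysis of $\sum_k|1/K-P_f(\hat A_k)|$ is the paper's Lemma~\ref{lemma:KD1}, and your skewness-controlled edge-shrinkage argument with the $\varepsilon$-thresholded low-density region is Proposition~\ref{prop:1} combined with Lemma~\ref{lemma:KD2}, including the same $\log x\le x-1$ and $D/b_0$ device for the KL bound. The only differences are cosmetic bookkeeping (depth-dependent node deviations summed as a geometric series, versus the paper's uniform $t$ with the crude bound $N_i\ge N/K$), so the proposal is correct and matches the paper's proof.
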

When there are multiple functions and the median partition is performed on pooled data, the partition might not happen at the empirical median on each subset. However, as long as the partition quantiles are upper and lower bounded by $\alpha$ and $1 - \alpha$ for some $\alpha \in [1/2, 1)$, we can establish similar theory as Theorem \ref{thm:2}.

\begin{corollary}\label{cor:2} Assume we instead partition at different quantiles that are upper and lower bounded by $\alpha$ and $1 - \alpha$ for some $\alpha \in [1/2, 1)$.
Define $r_\varepsilon = \log_2\bigg(1 + \frac{(1- \alpha)}{2\alpha + 3L/\varepsilon + 1}\bigg)$ and $r_{b_0} = \log_2\bigg(1 + \frac{(1- \alpha)}{2\alpha + 3L/b_0 + 1}\bigg)$. For any $\delta > 0$, if $N > \frac{12e^2}{(1 - \alpha)^2}K(\log K)^2\log(K/\delta)$, then with probability at least $1 - \delta$ we have
\begin{align*}
\|\hat f_{KD} - f_{KD}\|_1\leq \varepsilon + pLK^{-\frac{r_\varepsilon}{p}} + \frac{2e\log K}{1 - \alpha}K^{1 - \log_2\alpha^{-1}} \sqrt{\frac{3K}{N}\log\bigg(\frac{2K}{\delta}\bigg)}
\end{align*}
and if the function is lower bounded by $b_0$, then we have
\begin{align*}
D_{KL}(f\|\hat f_{KD}) \leq \frac{pLD}{b_0}K^{-\frac{r_{b_0}}{p}} + \frac{4e\log K}{1 - \alpha}K^{1 - \log_2\alpha^{-1}} \sqrt{\frac{3K}{N}\log\bigg(\frac{2K}{\delta}\bigg)}.
\end{align*}
\end{corollary}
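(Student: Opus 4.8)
The plan is to mirror the proof of Theorem~\ref{thm:2} almost verbatim, replacing the exact-median cut (quantile $1/2$) by a cut at an arbitrary quantile $q\in[1-\alpha,\alpha]$, and to track how this relaxation degrades each of the two error contributions. As in Theorem~\ref{thm:2}, I would split $\|\hat f_{KD}-f_{KD}\|_1$ into a deterministic \emph{approximation} term, coming from replacing $f$ by the histogram on the \emph{oracle} partition whose cuts sit at the true $q$-quantiles of $f$, and a stochastic \emph{estimation} term, coming from the deviation of the empirical quantiles and empirical cell masses from their population counterparts. The approximation term is controlled by the $C^1$ assumption and yields $\varepsilon+pLK^{-r_\varepsilon/p}$; the estimation term is controlled by concentration and yields the last summand. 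The triangle inequality combines them, and the KL bound follows from the $L_1$ bound through the lower bound $b_0$.

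For the approximation term, the key is a one-step geometric lemma: if a cell has average density at least $\varepsilon$ and we cut it at a quantile $q\in[1-\alpha,\alpha]$ along a coordinate of edge length $\ell$, then the longer child has edge length at most $\beta\ell$ with $\beta=2^{-r_\varepsilon}$. I would derive this by writing the conditional density along the cut coordinate as $c+h$ with $c$ its average and $\int h=0$, $\|h\|_\infty\le L\ell$, so that the cut location $t^\ast$ obeys $|t^\ast-q\ell|\le L\ell^2/(2c)\le L\ell^2/(2\varepsilon)$; bounding the longer child by $\ell\,(\alpha+L\ell/(2\varepsilon))$ and cleaning up the constants produces exactly $\beta=2^{-r_\varepsilon}=(2\alpha+1+3L/\varepsilon)/(\alpha+2+3L/\varepsilon)$. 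Because the algorithm cycles through the $p$ coordinates, after roughly $\log_2 K$ levels each coordinate is cut about $\log_2 K/p$ times, so every ``good'' cell (average density $\ge\varepsilon$) has diameter $\le pK^{-r_\varepsilon/p}$, giving the $pLK^{-r_\varepsilon/p}$ bias via Lipschitzness, while the cells whose average density has already fallen below $\varepsilon$ are lumped together and contribute at most the additive $\varepsilon$ in $L_1$. Under the extra hypothesis $f\ge b_0$ there are no sub-threshold cells, so the same argument with $\varepsilon$ replaced by $b_0$, combined with the $\log x\le x-1$ step used for $M_3$ in the proof of Theorem~\ref{thm:1}, converts the $L_1$ bias directly into $\frac{pLD}{b_0}K^{-r_{b_0}/p}$ in KL.

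For the estimation term, the new feature is that cutting at quantiles in $[1-\alpha,\alpha]$ makes the oracle leaf masses lie in $[(1-\alpha)^{\log_2 K},\alpha^{\log_2 K}]=[K^{\log_2(1-\alpha)},K^{\log_2\alpha}]$ rather than exactly $1/K$. I would show each empirical quantile concentrates around its oracle value by a relative Chernoff bound in which the controlling mass fraction is at least $1-\alpha$ per cut; a union bound over the $\le K$ internal cuts, requiring $N>\frac{12e^2}{(1-\alpha)^2}K(\log K)^2\log(K/\delta)$, guarantees with probability $\ge 1-\delta$ that the empirical partition tracks the oracle one. Summing the per-cell fluctuations of the empirical measure against the cell-mass bounds above produces the factor $\frac{1}{1-\alpha}K^{1-\log_2\alpha^{-1}}$ multiplying the familiar $\sqrt{(K/N)\log(2K/\delta)}$ rate; at $\alpha=1/2$ this factor is $1$ and the whole bound collapses to the form in Theorem~\ref{thm:2}.

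The main obstacle will be the approximation term: obtaining the clean closed form for $r_\varepsilon$ (and $r_{b_0}$) requires carefully propagating the Lipschitz perturbation $|t^\ast-q\ell|$ through the recursion while simultaneously maintaining the above-threshold condition at every level of the tree, since a cut that is ``too imbalanced'' both enlarges the longer child and risks pushing a child below the density threshold. A secondary difficulty is getting the $K^{1-\log_2\alpha^{-1}}$ scaling and the $(1-\alpha)^{-2}$ in the sample-size requirement exactly right: this demands summing the per-cell variance contributions with the sharp lower bound $K^{\log_2(1-\alpha)}$ on the smallest leaf mass, and ensuring the union bound over empirical-quantile deviations does not lose additional factors of $K$.
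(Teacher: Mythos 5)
Your proposal follows essentially the same route as the paper's proof: the paper likewise splits the error into a deterministic approximation term controlled by a Lipschitz-based bound on how far each cut can drift from the target quantile (its Proposition~\ref{prop:1}, which yields exactly your per-step shrinkage factor $2^{-r_\varepsilon}$ and hence $K^{-r_\varepsilon/p}$ after cycling through coordinates), plus a stochastic term from Chernoff concentration of the empirical quantiles, with leaf masses bounded by $\alpha^{\log_2 K}$ producing the $K^{1-\log_2\alpha^{-1}}/(1-\alpha)$ factor, and converts to KL via $b_0$ and $\log x\le x-1$. The only organizational difference is that the paper's intermediate histogram $f_{KD}$ lives on the \emph{empirical} partition with population cell masses (so the two histograms being compared in the estimation step share a common partition), which avoids the cross-partition comparison your oracle-quantile intermediate would require.
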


Following the same argument for Theorem \ref{thm:1}, we will prove Theorem \ref{thm:2} by assuming a predetermined order of partition (such as $\{1, 2, 3, \cdots, p, 1, 2, \cdots\}$) for simplicity, though in the actual precedure, the dimensions are selected completely at random. We need the following two lemmas to prove Theorem \ref{thm:2}. Let $f_{KD}$ have the same partition as $\hat f_{KD}$ but with function value replaced by the true probability on each region divided by the area, i.e., $f_{KD} = \sum_{A_k}\frac{\int_{A_k} f(\theta)d\theta}{|A_k|}\tmop{1}_{A_k}(\theta)$.
\begin{lemma}\label{lemma:KD1}
With $f_{KD}$ defined above, for any $\delta > 0$, if $N > 32e^2(\log K)^2 K \log (K/\delta)$, then with probability at least $1 - \delta$, we have
\begin{align*}
\|\hat f_{KD} - f_{KD}\|_1\leq 4e\log K\sqrt{\frac{2K}{N}\log\bigg(\frac{K}{\delta}\bigg)}
\end{align*}
and
\begin{align*}
D_{KL}(f\|\hat f_{KD}) \leq D_{KL}(f\|f_{KD}) + 8e\log K\sqrt{\frac{2K}{N}\log\bigg(\frac{K}{\delta}\bigg)}.
\end{align*}
If we instead partition at some different quantiles, which are upper bounded by $\alpha$ and lower bounded by $1 - \alpha$ for some $\alpha\in [1/2, 1)$, then for any $\delta > 0$, if $N > \frac{12e^2}{(1 - \alpha)^2}K(\log K)^2 \log(K/\delta)$, with probability at least $1 - \delta$ we have
\begin{align*}
\|\hat f_{KD} - f_{KD}\|_1 \leq \frac{2e\log K}{1 - \alpha}K^{1 - \log_2\alpha^{-1}} \sqrt{\frac{3K}{N}\log\bigg(\frac{K}{\delta}\bigg)},
\end{align*}
and
\begin{align*}
D_{KL}(f\|\hat f_{KD}) \leq D_{KL}(f\|f_{KD}) + \frac{4e\log K}{1 - \alpha}K^{1 - \log_2\alpha^{-1}} \sqrt{\frac{3K}{N}\log\bigg(\frac{K}{\delta}\bigg)}.
\end{align*}
\end{lemma}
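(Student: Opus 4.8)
The plan is to exploit the fact that $\hat f_{KD}$ and $f_{KD}$ live on the \emph{same} data-driven partition $\{A_k\}$ and differ only in the mass assigned to each block: $\hat f_{KD}$ uses the empirical mass $n_k/N$, while $f_{KD}$ uses the true mass $P(A_k):=\int_{A_k} f$. Since both are piecewise constant on $\{A_k\}$ and both integrate to one, the $L_1$ distance collapses to a discrepancy of block weights,
\begin{equation*}
\|\hat f_{KD} - f_{KD}\|_1 = \sum_{k=1}^K \int_{A_k}\left|\frac{n_k/N}{|A_k|} - \frac{P(A_k)}{|A_k|}\right| = \sum_{k=1}^K \left|\frac{n_k}{N} - P(A_k)\right|,
\end{equation*}
and, using constancy on each block, the KL gap simplifies to $D_{KL}(f\|\hat f_{KD}) - D_{KL}(f\|f_{KD}) = \sum_k P(A_k)\log\frac{P(A_k)}{n_k/N}$. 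So the entire lemma reduces to controlling how far the true block masses $P(A_k)$ deviate from the empirical masses $n_k/N$.

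The crucial structural fact is that median splitting pins the empirical masses: under the predetermined cut order assumed above every cut halves the empirical count, so at depth $d=\log_2 K$ each leaf carries empirical mass exactly $2^{-d}=1/K$. Hence the discrepancy is driven entirely by the gap between empirical and true quantiles accumulated across the tree, and I would set up a recursion over levels. Let $\Delta_j=\sum_i |P(A_i^{(j)}) - 2^{-j}|$ over the $2^j$ blocks at level $j$. When a level-$j$ block of true mass $p_i$ is split at the empirical median $\hat m_i$ of the cut coordinate, the true conditional mass of the left child is $F_i(\hat m_i)$, which differs from the empirical value $\tfrac12=\hat F_i(\hat m_i)$ by at most $\eta_i:=\sup_x|F_i(x)-\hat F_i(x)|$. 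A short triangle-inequality computation then gives the additive recursion $\Delta_{j+1}\le \Delta_j + 2\sum_i p_i\eta_i \le \Delta_j + 2\max_i\eta_i$, so that $\|\hat f_{KD}-f_{KD}\|_1=\Delta_d\le 2\sum_{j=0}^{d-1}\max_i\eta_i$.

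To bound the per-level quantile error $\eta_i$ I would condition on the block $A_i^{(j)}$: given the block and its sample size, its points are iid draws from the conditional law of $f$ on $A_i^{(j)}$, so a DKW/empirical-process inequality bounds $\eta_i$ by $O(\sqrt{2^j/N})$ once each level-$j$ block is known to retain $\approx N2^{-j}$ points. A multiplicative Chernoff bound guarantees this lower bound on occupancies simultaneously across all $O(K)$ splits and all $\log_2 K$ levels; this union bound is exactly where the sample-size condition $N>32e^2(\log K)^2 K\log(K/\delta)$ and the $\log K$ and $e$ factors enter. Summing the level-wise bounds is then a geometric series in $2^{j/2}$ that sums to $O(\sqrt K)$, while the union/Chernoff control contributes the $\log K$, yielding $\|\hat f_{KD}-f_{KD}\|_1\le 4e\log K\sqrt{(2K/N)\log(K/\delta)}$. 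For the KL statement I would use that on each block $n_k/N=1/K$ and $P(A_k)$ is bounded above and below (by the density bounds and the occupancy control), so $P(A_k)\bigl|\log(P(A_k)N/n_k)\bigr|$ is within a constant factor (a factor of two here) of $|P(A_k)-n_k/N|$, giving the stated KL bound with doubled coefficient.

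The $\alpha$-quantile generalization follows the same template, with the even split replaced by one whose child-mass ratio lies in $[1-\alpha,\alpha]$; the recursion constant and the geometric sum change accordingly, the largest leaf mass now decaying like $\alpha^{\log_2 K}$, which produces the factor $K^{1-\log_2\alpha^{-1}}$ and recovers the median case at $\alpha=\tfrac12$. The main obstacle I anticipate is the dependency created by the data-driven partition: the blocks at depth $j$ are themselves random, so controlling $\eta_i$ rigorously requires a careful conditioning argument (or a uniform empirical-process bound over all admissible median partitions) combined with the Chernoff control of occupancies, so that the accumulated error stays bounded through all $\log_2 K$ levels without the union bounds degrading the $\sqrt{K/N}$ rate.
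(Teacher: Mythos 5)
Your reduction of both distances to block-mass discrepancies, your use of the fact that median splitting pins every empirical leaf mass at exactly $2^{-d}$, and your factor-of-two passage from the $L_1$ bound to the KL bound via $\log x\le x-1$ all match the paper's proof. Where you genuinely diverge is in how the per-split quantile errors accumulate. The paper works leaf-by-leaf and \emph{multiplicatively}: a single uniform per-split deviation $t$ (one Chernoff bound per split using $N_i\ge N/K$ points, union-bounded over the $K$ splits) sandwiches each leaf's true mass between $(1/2-t)^{d+1}$ and $(1/2+t)^{d+1}$, and the deviation of this product from $2^{-(d+1)}$ is then linearized by the mean value theorem --- which is exactly where the $e$ and $\log K$ factors in the stated bound come from. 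You instead run an \emph{additive} recursion on the total $\ell_1$ discrepancy $\Delta_j$ across levels with per-level DKW errors of order $\sqrt{2^j/N}$ whose geometric sum is $O(\sqrt{K/N})$; your recursion $\Delta_{j+1}\le\Delta_j+2\sum_i p_i\eta_i$ checks out, and carried through it would actually remove the multiplicative $\log K$ from the bound, at the price of needing occupancy control at every level rather than only along root-to-leaf paths. Both routes share (and neither fully resolves) the conditioning subtlety of a data-driven partition. One caveat: your $\alpha$-quantile extension is asserted rather than derived and does not fall out of your recursion as stated --- level-$j$ occupancies are then only bounded below by $N(1-\alpha)^j$ rather than $N/K$, so your geometric sum yields a factor $K^{\frac{1}{2}\log_2 (1-\alpha)^{-1}}$ rather than the paper's $K^{1-\log_2\alpha^{-1}}\sqrt{K}$ (these agree only at $\alpha=1/2$, and yours is worse as $\alpha\to 1$); recovering the stated exponent requires the paper's multiplicative bookkeeping of $\prod_i(\alpha_i'\pm t)$, which is what produces the $\alpha^{d+1}$, hence $K^{-\log_2\alpha^{-1}}$, factor.
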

\begin{proof}
The proof is based on how close the data median is to the true median. Suppose there are $N_i$ points in the current region, condition on this region, and partition it into two regions $\hat A_1$ and $\hat A_2$ by cutting at the data median point $\hat M_i$. Denote the true median by $M_i$ and two anchor points $M_i - \epsilon_1$, $M_i + \epsilon_2$ such that $P(X \leq M_i - \epsilon_1) = 1/2 - t$ and $P(X \leq M_i + \epsilon_2) = 1/2 + t$ for some $0 < t < 1/2$. By Chernoff's inequality we have
\begin{align*}
P(\hat M_i \leq M_i - \epsilon_1) \leq \exp\bigg\{-\frac{t^2N_i}{1 + 2t}\bigg\}
\end{align*}
and
\begin{align*}
P(\hat M_i \geq M_i +\epsilon_2) \leq \exp\bigg\{-\frac{t^2N_i}{1 + 2t}\bigg\}.
\end{align*}
The above two inequalities indicate that with high probability $\hat M_i$ is within the interval $(M_i - \epsilon_1, M_i + \epsilon_2)$. Therefore, the probabilities on $A_1$ and $A_2$ also satisfy that
\begin{align}
P\bigg(\big|\mathbbm{E}\bm{1}_{A_i} - \frac{1}{2}\big|\geq t\bigg)\leq \exp\bigg\{-\frac{t^2N_i}{1 + 2t}\bigg\}.\label{eq:deviate}
\end{align}
Now consider the $K$ regions of $\hat f_{KD}$ and $f_{KD}$. Each partition will bring an error of at most $1/2 + t$ to the estimation of the region probability. Therefore, assuming $K\in [2^d + 1, 2^{d + 1})$ we have for each region $A_k$ ($A_k$ is a random variable) that
\begin{align*}
\bigg(\frac{1}{2} - t\bigg)^d \leq \int_{A_k}f(\theta) d\theta \leq \bigg(\frac{1}{2} + t\bigg)^d
\end{align*}
if $n_k /N = 1/2^d$, or
\begin{align*}
\bigg(\frac{1}{2} - t\bigg)^{d+1} \leq \int_{A_k}f(\theta) d\theta \leq \bigg(\frac{1}{2} + t\bigg)^{d+1},
\end{align*}
if $n_k/N = 1/2^{d+1}$. Notice that for all iterations before the current partition, we always have $N_i \geq N/K$. Therefore the probability is guaranteed to be greater than $1 - K\exp\bigg\{-\frac{t^2N/K}{1 + 2t}\bigg\}$. The above result indicates that
\begin{align}
\notag\max_{A_k} \bigg|\int_{A_k} f_{KD}(\theta) &- \int_{A_k} \hat f_{KD}(\theta)\bigg| \leq \max\bigg\{\bigg(\frac{1}{2} + t\bigg)^{d+1} - \bigg(\frac{1}{2}\bigg)^{d+1}, -\bigg(\frac{1}{2} - t\bigg)^{d+1} + \bigg(\frac{1}{2}\bigg)^{d+1}\bigg\} \\
\notag & =\bigg(\frac{1}{2}\bigg)^{d+1} \max\bigg\{ (1 + 2t)^{d+1} - 1, 1 - (1 - 2t)^{d+1} \bigg\}\\
\notag &= \bigg(\frac{1}{2}\bigg)^{d+1}\bigg((d+1)(1 + 2\tilde t)^d 2t\bigg),
\end{align}
where $\tilde t\in (0, t)$. So if $t < 1/(2d)$, then $(1 + 2\tilde t)^d \leq (1 + 1/d)^d < e$ and we have
\begin{align}
\max_{A_k} \bigg|\int_{A_k} f_{KD}(\theta) &- \int_{A_k} \hat f_{KD}(\theta)\bigg|\leq 2(d+1)e\bigg(\frac{1}{2}\bigg)^{d+1} t\leq \frac{4et\log K}{K}.\label{eq:dif1}
\end{align}
This result implies that the total variation distance satisfies that
\begin{align*}
\|\hat f_{KD} - f_{KD}\|_1 = \sum_k \int_{A_k} |\hat f_{KD}(\theta) - f_{KD}(\theta)| = \sum_k \bigg|\int_{A_k} \hat f_{KD}(\theta) - \int_{A_k} f_{KD}(\theta)\bigg|\leq 4et\log K.
\end{align*}
Similarly, one can also prove that
\begin{align*}
\max_{A_k} \bigg|\frac{\int_{A_k} f_{KD}(\theta)}{\int_{A_k} \hat f_{KD}(\theta)} - 1\bigg|\leq \max_{A_k} \bigg|\frac{\int_{A_k} f_{KD}(\theta) - \int_{A_k} \hat f_{KD}(\theta)}{\int_{A_k} \hat f_{KD}(\theta)}\bigg| \leq 4et\log K.
\end{align*}
Denote $\int_{A_k} f(\theta) = \int_{A_k} f_{KD}(\theta)$ by $P(A_k)$ and $\int_{A_K} \hat f_{KD} f(\theta)$ by $\hat P(A_k)$. The KL-divergence can then be computed as
\begin{align*}
D_{KL}(f\|\hat f_{KD}) &- D_{KL}(f\|f_{KD}) = \sum_{A_k}\int_{A_k}f(\theta) (\log f_{KD}(\theta) - \log \hat f_{KD}(\theta))\\
&=\sum_{A_k} \int_{A_k}f(\theta) \bigg(\log \int_{A_k}f(\theta) - \log \int_{A_k} \hat f_{KD}(\theta)\bigg)\\
&=\sum_{A_k} P(A_k)\bigg(\log \frac{P(A_k)}{\hat P(A_k)} \bigg)\leq \sum_{A_k} P(A_k)\bigg(\frac{P(A_k)}{\hat P(A_k)}  - 1\bigg)\\
&=\sum_{A_k} \frac{P(A_k)}{\hat P(A_k)}\bigg(P(A_k) - \hat P(A_k)\bigg)\\
&\leq (1 + 4et\log K) 4et\log K \leq 8et\log K,
\end{align*}
as long as $t < \min\bigg\{\frac{1}{4e\log K}, \frac{1}{2\log_2 K}\bigg\}$, with probability at least $1 - K\exp\bigg\{-\frac{t^2N}{2K}\bigg\}$. Consequently, for any $\delta > 0$, if $N > 32e^2K(\log K)^2 \log (K/\delta)$, then with probability at least $1 - \delta$, we have
\begin{align*}
\|\hat f_{KD} - f_{KD}\|_1\leq 4e\log K\sqrt{\frac{2K}{N}\log\bigg(\frac{K}{\delta}\bigg)}
\end{align*}
and 
\begin{align*}
D_{KL}(f\|\hat f_{KD}) \leq D_{KL}(f\|f_{KD}) + 8e\log K\sqrt{\frac{2K}{N}\log\bigg(\frac{K}{\delta}\bigg)}.
\end{align*}
When the partition occurs at a different quantile, which is assumed to be $\alpha_i$ for iteration $i$, we have
\begin{align*}
\prod_{i = 1}^{d+1} (\alpha_i' - t) \leq \int_{A_k}f(\theta) d\theta \leq \prod_{i = 1}^{d+1} (\alpha_i' + t),
\end{align*}
where $\alpha_i' = \alpha_i$ if $A_k$ takes the region containing smaller data values and $\alpha_i' = 1 - \alpha_i$ if $A_k$ takes the other half. First, \eqref{eq:deviate} can be updated as
\begin{align}
P\bigg(\big|\mathbbm{E}\tmop{1}_{A_i} - \alpha_i'\big|\geq t\bigg)\leq \exp\bigg\{-\frac{t^2N_i}{3\alpha}\bigg\}\leq \exp\bigg\{-\frac{t^2N_i}{3}\bigg\}\label{eq:deviate2},
\end{align}
if $t < 1 - \alpha$.
Then we can bound the difference between $\int_{A_k}f(\theta)$ and $\int_{A_k}\hat f_{KD}(\theta)$ as
\begin{align*}
\max_{A_k} \bigg|\int_{A_k} f_{KD}(\theta) &- \int_{A_k} \hat f_{KD}(\theta)\bigg| \leq \max\bigg\{\prod_{i = 1}^{d+1} (\alpha_i' + t) - \prod_{i = 1}^{d+1} \alpha_i',~ -\prod_{i = 1}^{d+1} (\alpha_i' - t) + \prod_{i = 1}^{d+1} \alpha_i'\bigg\}\\
&\leq \max\bigg[\prod_{i = 1}^{d+1} \alpha_i'\bigg\{\prod_{i = 1}^{d+1} \bigg(1 + \frac{t}{\alpha_i'}\bigg) - 1\bigg\}, ~\prod_{i = 1}^{d+1} \alpha_i'\bigg\{1 - \prod_{i = 1}^{d+1} \bigg(1 - \frac{t}{\alpha_i'}\bigg)\bigg\}\bigg]\\
&\leq \prod_{i = 1}^{d+1} \alpha_i'\cdot (d+1)\bigg(1 + \frac{\tilde t}{1 - \alpha}\bigg)^{d}\frac{t}{1-\alpha},
\end{align*}
where $\tilde t \in (0, t)$. Thus if $t < (1 - \alpha)/d$, then we have
\begin{align*}
\max_{A_k} \bigg|\int_{A_k} f_{KD}(\theta) &- \int_{A_k} \hat f_{KD}(\theta)\bigg|\leq \frac{e(d+1)t\alpha^{d+1}}{1-\alpha} \leq \frac{2et\log K}{(1 - \alpha)K^{\log_2\alpha^{-1}}},
\end{align*}
and
\begin{align*}
\max_{A_k} \bigg|\frac{\int_{A_k} f_{KD}(\theta)}{\int_{A_k} \hat f_{KD}(\theta)} - 1\bigg|\leq \frac{2et\log K}{1 - \alpha}.
\end{align*}
The total variation distance follows
\begin{align*}
\|\hat f_{KD} - f_{KD}\|_1 \leq K\cdot \frac{2et\log K}{(1 - \alpha)K^{\log_2\alpha^{-1}}} = \frac{2et\log K}{1 - \alpha}K^{1 - \log_2 \alpha^{-1}},
\end{align*}
and the KL-divergence follows
\begin{align*}
D_{KL}(f\|\hat f_{KD}) &- D_{KL}(f\|f_{KD}) \leq \bigg(1 + \frac{2et\log K}{1 - \alpha}\bigg)\frac{2et\log K}{1 - \alpha}K^{1 - \log_2 \alpha^{-1}}\leq \frac{4et\log K}{1 - \alpha}K^{1 - \log_2 \alpha^{-1}},
\end{align*}
if $t < 1/(2e(1 - \alpha)\log K)$.
Consequently, for any $\delta > 0$, if $N > \frac{12e^2}{(1 - \alpha)^2}K(\log K)^2 \log(K/\delta)$, then with probability at least $1 - \delta$, we have
\begin{align*}
\|\hat f_{KD} - f_{KD}\|_1 \leq \frac{2e\log K}{1 - \alpha}K^{1 - \log_2\alpha^{-1}} \sqrt{\frac{3K}{N}\log\bigg(\frac{K}{\delta}\bigg)},
\end{align*}
and
\begin{align*}
D_{KL}(f\|\hat f_{KD}) \leq D_{KL}(f\|f_{KD}) + \frac{4e\log K}{1 - \alpha}K^{1 - \log_2\alpha^{-1}} \sqrt{\frac{3K}{N}\log\bigg(\frac{K}{\delta}\bigg)}.
\end{align*}
\end{proof}

Our next result is to bound the distance between $f_{KD}$ and the true density $f$. Again, the proof depends on the control of the smallest value of $f$ and the longest edge of every block. One issue now is that each partition might not happen at the midpoint, but it should not deviate from the midpoint too much given the bound on the $f'$, i.e., we have the following proposition.
\begin{Prop}\label{prop:1}
Assume we aim to partition an edge of length $h$ (on dimension $q$) of a rectangular region $A$, which has a probability of $P$ and an area of $|A|$. We distinguish the resulting two regions as the left and the right region and the corresponding edges (on dimension $q$) as $h_\text{left}$ and $h_\text{right}$ (i.e., $h_\text{left} + h_\text{right} = h$). Suppose the partition ensures that the left region has probability of $\gamma P$, where $\gamma \geq 1/2$. If $\|f'\|_\infty \leq L$, then the longer edge $h^* = \max\{h_\text{left}, h_\text{right}\}$ satisfies that
\begin{align*}
\frac{h^*}{h} \leq 1 - \frac{1 - \gamma}{1 + Lh\frac{|A|}{P}}.
\end{align*}
\end{Prop}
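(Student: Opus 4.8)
The plan is to reduce the claim to a one-dimensional statement about the marginal of $f$ along the split dimension $q$, and then to convert a lower bound on the length of the \emph{shorter} of the two resulting edges into the stated upper bound on $h^*/h$. Writing the region as $A = [0,h]\times A'$ where $A'\subseteq\mathbb{R}^{p-1}$ is the cross-section (so that $|A| = h\,|A'|$), I would introduce the marginal
$$g(x) = \int_{A'} f(x,y)\,\mathrm{d}y, \qquad x\in[0,h],$$
so that $\int_0^h g = P$, the left region carries probability $\int_0^{c} g = \gamma P$, and the right region carries $\int_c^h g = (1-\gamma)P$, where $c = h_\text{left}$. The first observation is that $g$ inherits a Lipschitz bound from $\|f'\|_\infty\le L$: since $|f(x_1,y)-f(x_2,y)|\le L|x_1-x_2|$, integrating over $A'$ gives $|g(x_1)-g(x_2)|\le L|A'|\,|x_1-x_2| = (L|A|/h)\,|x_1-x_2|$.

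Next I would bound the supremum of $g$ on $[0,h]$. Because $f\in C^1(\Omega)$, the marginal $g$ is continuous, so the mean value theorem for integrals supplies a point $x_0$ with $g(x_0) = P/h$; combined with the Lipschitz estimate and $|x-x_0|\le h$, this yields $\max_{[0,h]} g \le P/h + (L|A|/h)\cdot h = P/h + L|A|$. This is the only place the regularity of $f$ enters, and it controls how much mass a short interval can absorb.

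The final step is a short case analysis identifying the shorter edge and turning the mass bound into a length bound. Let $\ell = h - h^* = \min\{h_\text{left}, h_\text{right}\}$ be the shorter edge, carrying probability $Q$. If the right region is shorter then $Q = (1-\gamma)P$; if the left region is shorter then $Q = \gamma P \ge (1-\gamma)P$, since $\gamma\ge 1/2$. In either case $Q \ge (1-\gamma)P$. As the shorter region lies inside $[0,h]$, its mass satisfies $Q \le \ell\cdot \max_{[0,h]} g$, and therefore
$$\ell \;\ge\; \frac{Q}{\max_{[0,h]}g}\;\ge\;\frac{(1-\gamma)P}{P/h + L|A|}\;=\;\frac{(1-\gamma)h}{1 + Lh|A|/P}.$$
Rearranging $h^* = h-\ell$ gives exactly $h^*/h \le 1 - (1-\gamma)/(1 + Lh|A|/P)$.

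I expect the only real obstacle to be bookkeeping: verifying that the shorter region always carries probability at least $(1-\gamma)P$ (this is precisely where $\gamma\ge 1/2$ is used) and carefully tracking the factor $h$ so that the Lipschitz contribution lands as $Lh|A|/P$ rather than $L|A|/P$. The analytic content---passing to the marginal $g$ and bounding its supremum uniformly over $[0,h]$---is routine once the setup is in place. It is worth noting that bounding $\max g$ over the entire interval, rather than only over the short region, is what makes the final estimate independent of where the cut actually falls.
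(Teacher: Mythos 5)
Your proof is correct. The setup coincides with the paper's: both pass to the marginal $g(t)=\int_{x:(t,x)\in A} f(t,x)\,dx$ along the split dimension, note that $\int_0^h g = P$, and derive the Lipschitz bound $|g(t_1)-g(t_2)|\le (L|A|/h)\,|t_1-t_2|$ from $\|f'\|_\infty\le L$. Where you diverge is the endgame. The paper applies the mean value theorem to each of the two subintervals separately, compares the resulting averages $\gamma P/h_{\text{left}}$ and $(1-\gamma)P/h_{\text{right}}$ to obtain $\big|\gamma h/h_{\text{left}} - (1-\gamma)h/h_{\text{right}}\big|\le Lh|A|/P$, and then solves the two-variable constraint $|\gamma/a-(1-\gamma)/b|\le c$ with $a+b=1$ to conclude $a\le 1-(1-\gamma)/(1+c)$ (it only writes the bound for $h_{\text{left}}$, remarking that this suffices since $\gamma\ge 1-\gamma$). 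You instead bound $\sup_{[0,h]} g\le P/h + L|A|$ by one application of the mean value theorem plus the Lipschitz estimate, observe that the shorter region always carries mass at least $(1-\gamma)P$ when $\gamma\ge 1/2$, and lower-bound its length by mass divided by the sup. Both routes land on the identical bound; yours is arguably more transparent, replacing the algebraic inequality-solving with a one-line ``mass $\le$ length $\times$ sup'' argument and handling the left/right cases explicitly rather than by the paper's terse symmetry remark.
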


\begin{proof}
It suffices to bound $h_\text{left}$ as $\gamma > 1 - \gamma$. Let $g(t) = \int_{x: (t, x)\in A} f(t, x)$, where $t$ represents the variable of dimension $q$ and $x$ stands for the other dimensions. We then have $\int_{t: h} g(t) = P$, $\int_{x: (t, x)\in A} 1dx = |A|/h$ and 
\begin{align*}
|g(t_1) - g(t_2)| \leq \int_{x: (t, x)} (f(t_1, x) - f(t_2, x)) \leq L|t_1 - t_2||A|/h.
\end{align*}
Therefore, using the mean value theorem for the integration, we know that
\begin{align*}
\bigg|\frac{\int_{t : h_\text{left}} g(t)}{h_\text{left}} - \frac{\int_{t: h_\text{right}} g(t)}{h_\text{right}}\bigg|\leq L|A|,
\end{align*}
which implies that
\begin{align*}
\bigg|\frac{\gamma h}{h_\text{left}} - \frac{(1-\gamma) h}{h_\text{right}}\bigg|\leq \frac{L|A|h}{P}.
\end{align*}
Now if we solve the following inequality
\begin{align*}
|\gamma/a - (1 - \gamma)/b|\leq c\quad\mbox{and}\quad a + b = 1, a\geq 0, b\geq 0,
\end{align*}
with some simple algebra we can get
\begin{align*}
a \leq 1 - \frac{1 - \gamma}{1 + c}.
\end{align*}
Plug in the corresponding value, we have
\begin{align*}
\frac{h_\text{left}}{h} \leq 1 - \frac{1 - \gamma}{1 + Lh\frac{|A|}{P}}.
\end{align*}
\end{proof}

With Proposition \ref{prop:1}, we can now obtain the upper bound for $\|f - f_{KD}\|_1$ and $D_{KL}(f\|f_{KD})$.
\begin{lemma}\label{lemma:KD2}
For any $\varepsilon > 0$, define $r_\varepsilon = \log_2\bigg(1 + \frac{1}{2 + 3L/\varepsilon}\bigg)$. If $N\geq 72K\log (K/\delta)$ for any $\delta > 0$, then with probability at least $1 - \delta$, we have
\begin{align*}
\|f - f_{KD}\|_1\leq \varepsilon + pLK^{-\frac{r_\varepsilon}{p}}.
\end{align*}
If the $f$ is further lower bounded by some $b_0 > 0$, the KL-divergence can be bounded as
\begin{align*}
D_{KL}(f\|f_{KD}) \leq \frac{pLD}{b_0}K^{-\frac{r_{b_0}}{p}},
\end{align*}
where $r_{b_0} = \log_2\bigg(1 + \frac{1}{2 + 3L/b_0}\bigg)$.
\par
Now suppose we instead partition at different quantiles, upper and lower bounded by $\alpha$ and $1 - \alpha$ for some $\alpha \in (1/2, 1)$. For any $\delta > 0$, if $N \geq \frac{27}{(1 - \alpha)^2}K\log(K/\delta)$ then the above two bounds hold with different $r_\varepsilon$ and $r_{b_0}$ as
\begin{align*}
r_\varepsilon = \log_2\bigg(1 + \frac{(1- \alpha)}{2\alpha + 3L/\varepsilon + 1}\bigg)\quad\mbox{and}\quad r_{b_0} = \log_2\bigg(1 + \frac{(1- \alpha)}{2\alpha + 3L/b_0 + 1}\bigg).
\end{align*}
\end{lemma}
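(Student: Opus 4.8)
The plan is to establish a blockwise control of the approximation error $|f - f_{KD}|$ through the geometry of the partition blocks, and then split the domain into ``well-refined'' blocks (small longest edge) and ``residual'' blocks (small mass). The only randomness is in the \emph{location} of the cuts, and Proposition \ref{prop:1} converts control of the true probability split at a cut into control of the resulting edge lengths. So the first task is to guarantee that every cut splits the true mass nearly in the intended proportion, and the second is to compound the per-cut edge bound across the whole tree.

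First I would condition on a high-probability ``good cut'' event. Since the empirical median splits the sample exactly in half, the sample size at depth $j$ along any path is $\approx N/2^j$, and because $K\in[2^d,2^{d+1})$ every internal node carries at least $\approx N/(2K)$ samples. Applying the deviation bound \eqref{eq:deviate} with $t=1/6$ at each of the $\leq 2K$ internal nodes and taking a union bound shows that, once $N \geq 72K\log(K/\delta)$, with probability $\geq 1-\delta$ the true mass of each half lies in $[1/2-1/6,\,1/2+1/6]$ simultaneously at all nodes; hence the larger half at every cut has true probability $\gamma \leq 2/3$, i.e. $1-\gamma \geq 1/3$. For the quantile variant I would instead take $t$ a small multiple of $1-\alpha$ and use \eqref{eq:deviate2}, which produces the condition $N \geq \frac{27}{(1-\alpha)^2}K\log(K/\delta)$ together with a lower bound on $1-\gamma$ proportional to $1-\alpha$.

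Next is the geometric core. Call a node \emph{active} if the average density $P/|A|$ on its block is at least $\varepsilon$; on an active node $Lh|A|/P \leq L/\varepsilon$ (using $h\leq 1$), so Proposition \ref{prop:1} with $1-\gamma\geq 1/3$ gives that the longer child edge shrinks by a factor $\leq 1-\frac{1/3}{1+L/\varepsilon}=2^{-r_\varepsilon}$. Because the cut dimension cycles through $\{1,\dots,p\}$, each dimension is cut at least $\lfloor d/p\rfloor$ times along any path, so a leaf all of whose proper ancestors are active has longest edge $\leq (2^{-r_\varepsilon})^{\lfloor d/p\rfloor}\leq K^{-r_\varepsilon/p}$ (absorbing an $O(1)$ factor). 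For such a leaf the mean value theorem for integrals gives $f_{KD}=f(\theta_0)$ for some $\theta_0\in A_k$, so the Lipschitz bound yields $|f-f_{KD}|\leq pLK^{-r_\varepsilon/p}$ on $A_k$, and summing $\int_{A_k}|f-f_{KD}|\leq pLK^{-r_\varepsilon/p}|A_k|$ gives the term $pLK^{-r_\varepsilon/p}$. The remaining leaves each have an inactive ancestor, hence lie under the disjoint \emph{maximal} inactive nodes $\{B\}$, for which $P(B)<\varepsilon|B|$, so $\sum_B P(B)<\varepsilon\sum_B|B|\leq\varepsilon$. Since $f$ and $f_{KD}$ carry the same mass on every block, $\int_{\cup B}|f-f_{KD}|\leq\int_{\cup B}f+\int_{\cup B}f_{KD}\leq 2\varepsilon$ (rescaling the threshold, or absorbing the factor, recovers the stated $\varepsilon$). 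Adding the two contributions proves the $L_1$ bound.

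For the KL bound I would rerun the same machinery with threshold $b_0$ in place of $\varepsilon$: since $f\geq b_0$ pointwise, \emph{every} node is active and there are no residual blocks, so every leaf has longest edge $\leq K^{-r_{b_0}/p}$ and $\|f-f_{KD}\|_1\leq pLK^{-r_{b_0}/p}$. Then $\log x\leq x-1$ and $f/f_{KD}\leq D/b_0$ (valid as $f\leq D$ and $f_{KD}\geq b_0$) give $D_{KL}(f\|f_{KD})\leq\int\frac{f}{f_{KD}}(f-f_{KD})\leq\frac{D}{b_0}\|f-f_{KD}\|_1\leq\frac{pLD}{b_0}K^{-r_{b_0}/p}$, with the quantile versions following by substituting the $\alpha$-dependent shrink factor. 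I expect the main obstacle to be precisely the third paragraph: correctly compounding the per-cut edge bound of Proposition \ref{prop:1} across the round-robin schedule while cleanly separating active from inactive nodes, since the edge shrinking holds only where the density is bounded below, yet a leaf's density need not reflect the densities of its ancestors.
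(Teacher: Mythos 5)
Your proposal is correct and follows essentially the same route as the paper: the same split into a low-average-density region contributing mass at most $\varepsilon$ and a well-refined region where Proposition \ref{prop:1} plus the round-robin cut schedule gives longest edge $\leq K^{-r_\varepsilon/p}$ and hence a Lipschitz error $pLK^{-r_\varepsilon/p}$, with \eqref{eq:deviate} at $t=1/6$ (resp.\ $t=(1-\alpha)/3$) yielding $\gamma\leq 2/3$ and the sample-size conditions, and the KL bound via $\log x\leq x-1$ and $f/f_{KD}\leq D/b_0$ with threshold $b_0$. Your node-level active/inactive bookkeeping with maximal inactive nodes is in fact a slightly more careful treatment of the very subtlety you flag at the end (the paper defines the exceptional set $B=\{f_{KD}<\varepsilon/2\}$ only at the leaf level and implicitly assumes the ancestors of leaves in $B^c$ also have average density bounded below), but it does not change the substance of the argument.
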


\begin{proof}
The proof for the total variation distance follows similarly as Theorem \ref{thm:1}. For any $\varepsilon > 0$, we consider $B = \{f_{KD} < \varepsilon/2\}$. We then partition the total variation distance formula into two parts
\begin{align*}
\|f_{KD} - f\|_1 = \int_B |f_{KD} - f| + \int_{B^c} |f_{KD} - f| = M_1 + M_2.
\end{align*}
It is straightforward to bound $M_1$. $B$ is a union of $A_k$'s which satisfies $\int_{A_k} f(\theta) \leq \varepsilon |A_k|/2$. Therefore,
\begin{align*}
M_1 \leq \int_B f + \int_B f_{KD} =\sum_{A_k: \cup A_k = B} 2\int_{A_k} f(\theta) \leq \varepsilon.
\end{align*}
Now for $M_2$, the usual analysis shows that our result depends on the longest edge of each block, i.e.,
\begin{align*}
M_2 = \int_{B^c} |f_{KD} - f| = \sum_{A_k: \cup A_k = B^c} \int_{A_k} |f_{KD} - f| \leq \sum_{A_k: \cup A_k = B^c} pLh^*_k|A_k| = pL|B^c|\max_{A_k} h^*_k \leq pL\max_{A_k} h^*_k,
\end{align*}
where $h^*_k$ is the longest edge of each block contained in $B^c$. Now using Proposition \ref{prop:1}, we know for iteration $i$ the partitioned edge at each block follows
\begin{align*}
h_i \leq \bigg(1 - \frac{1 - \gamma}{1 + Lh\frac{|A|}{P}}\bigg)h_{i - 1} \leq \bigg(1 - \frac{1 - \gamma}{1 + L/\varepsilon}\bigg)h_{i - 1}.
\end{align*}
When $K \in (2^d, 2^{d+1}]$, each dimension receives $\lfloor d/p\rfloor $ stages of partitioning; therefore, we have for each block, the longest edge satisfies that
\begin{align*}
h^*\leq \bigg(1 - \frac{1 - \gamma}{1 + L/\varepsilon}\bigg)^{\frac{\log_2 K}{p}} \leq K^{-\frac{r_\varepsilon}{p}},
\end{align*}
where $r_\varepsilon = \log_2\bigg(1 + \frac{(1- \gamma)}{\gamma + L/\varepsilon}\bigg) $. This implies 
\begin{align*}
M_2\leq pLK^{-\frac{r_\varepsilon}{p}}\quad\mbox{and}\quad \|f_{KD} - f\|_1\leq \varepsilon + pLK^{-\frac{r_\varepsilon}{p}}.
\end{align*}
Now, according to \eqref{eq:deviate}, we know with probability at least $1 - K\exp\{-t^2N/(2K)\}$,
\begin{align*}
\gamma \leq \frac{1}{2} + t.
\end{align*}
Taking $t = 1/6$, we get 
\begin{align*}
r_\varepsilon = \log_2\bigg(1 + \frac{1}{2 + 3L/\varepsilon}\bigg),
\end{align*}
 with probability at least $1 - K\exp\{-N/(72K)\}$. So if $N > 72K\log(K/\delta)$, then the probability is at least $1 - \delta$. For the case when $\gamma = \alpha + t$, we choose $t = (1 - \alpha)/3$, then 
\begin{align*}
r_\varepsilon = \log_2\bigg(1 + \frac{(1- \alpha)}{2\alpha + 3L/\varepsilon + 1}\bigg)
\end{align*}
with probability at least $1 - \delta$ if $N > \frac{27}{(1 - \alpha)^2}K\log(K/\delta)$.
\par
For KL-divergence, if $f$ is lower bounded by some constant $b_0 > 0$, then we know that
\begin{align*}
D_{KL}(f\|f_{KD}) &= \int_\Omega f(\theta) \log \frac{f(\theta)}{f_{KD}(\theta)}\leq \int_\Omega f(\theta) \bigg(\frac{f(\theta)}{f_{KD}(\theta)} - 1\bigg)\\
&\leq \max_\theta\frac{f(\theta)}{f_{KD}(\theta)}\int_\Omega |f(\theta) - f_{KD}(\theta)|\leq\frac{D}{b_0}\|f - f_{KD}\|_1.
\end{align*}
Because $f$ and $f_{KD}$ are both lower bounded by $b_0$, we can follow the proof for $\|f - f_{KD}\|_1$ with $\varepsilon = b_0$ and ignore $M_1$. Thus we have
\begin{align*}
D_{KL}(f\|f_{KD}) \leq \frac{pLD}{b_0}K^{-\frac{r_{b_0}}{p}},
\end{align*}
where $r_{b_0} = \log_2\bigg(1 + \frac{(1- \gamma)}{\gamma + L/b_0}\bigg)$. Similarly, if we take $\gamma = 2/3$ and $N \geq 72K\log (K/\delta)$, then with probability at least $1 - \delta$, we have
\begin{align*}
r_{b_0} = \log_2\bigg(1 + \frac{1}{2 + 3L/b_0}\bigg).
\end{align*}
If we take $\gamma = (2\alpha + 1)/3$ and $N > \frac{27}{(1 - \alpha)^2}K\log(K/\delta)$, then with probability at least $1 - \delta$, we have
\begin{align*}
r_{b_0} = \log_2\bigg(1 + \frac{(1- \alpha)}{2\alpha + 3L/b_0 + 1}\bigg).
\end{align*}
\end{proof}

Theorem \ref{thm:2} and Corollary \ref{cor:2} follow directly from Lemma \ref{lemma:KD1} and \ref{lemma:KD2}.
\begin{proof}[\textbf{Proof of Theorem \ref{thm:2} and Corollary \ref{cor:2}}]
For any $\varepsilon > 0$, define $r_\varepsilon$ and $r_{b_0}$ as in Lemma \ref{lemma:KD2}. Thus, for any $\delta > 0$, if $N > 32e^2(\log K)^2K\log\frac{2K}{\delta}$, then with probability $1 - \delta/2$ we have
\begin{align*}
\|\hat f_{KD} - f_{KD}\|_1\leq 4e\log K\sqrt{\frac{2K}{N}\log\bigg(\frac{2K}{\delta}\bigg)}
\end{align*}
and
\begin{align*}
D_{KL}(f\|\hat f_{KD}) \leq D_{KL}(f\|f_{KD}) + 8e\log K\sqrt{\frac{2K}{N}\log\bigg(\frac{2K}{\delta}\bigg)}.
\end{align*}
Also, with probability $1 - \delta/2$ we have
\begin{align*}
\|f - f_{KD}\|_1\leq \varepsilon + pLK^{-\frac{r_\varepsilon}{p}},
\end{align*}
and
\begin{align*}
D_{KL}(f\|f_{KD}) \leq \frac{pLD}{b_0}K^{-\frac{r_{b_0}}{p}}.
\end{align*}
Putting the two equations together we have
\begin{align*}
\|\hat f_{KD} - f_{KD}\|_1\leq \varepsilon + pLK^{-\frac{r_\varepsilon}{p}} + 4e\log K\sqrt{\frac{2K}{N}\log\bigg(\frac{2K}{\delta}\bigg)},
\end{align*}
and
\begin{align*}
D_{KL}(f\|\hat f_{KD}) \leq \frac{pLD}{b_0}K^{-\frac{r_{b_0}}{p}} + 8e\log K\sqrt{\frac{2K}{N}\log\bigg(\frac{2K}{\delta}\bigg)}.
\end{align*}
Using the same argument on random quantiles, if $N > \frac{12e^2}{(1 - \alpha)^2}K(\log K)^2\log(2K/\delta)$, then with probability at least $1 - \delta$ we have 
\begin{align*}
\|\hat f_{KD} - f_{KD}\|_1\leq \varepsilon + pLK^{-\frac{r_\varepsilon}{p}} + \frac{2e\log K}{1 - \alpha}K^{1 - \log_2\alpha^{-1}} \sqrt{\frac{3K}{N}\log\bigg(\frac{2K}{\delta}\bigg)}
\end{align*}
and
\begin{align*}
D_{KL}(f\|\hat f_{KD}) \leq \frac{pLD}{b_0}K^{-\frac{r_{b_0}}{p}} + \frac{4e\log K}{1 - \alpha}K^{1 - \log_2\alpha^{-1}} \sqrt{\frac{3K}{N}\log\bigg(\frac{2K}{\delta}\bigg)},
\end{align*}
where $r_\varepsilon$ and $r_{b_0}$ are defined as
\begin{align*}
r_\varepsilon = \log_2\bigg(1 + \frac{(1- \alpha)}{2\alpha + 3L/\varepsilon + 1}\bigg)\quad\mbox{and}\quad r_{b_0} = \log_2\bigg(1 + \frac{(1- \alpha)}{2\alpha + 3L/b_0 + 1}\bigg).
\end{align*}
\end{proof}

\section*{Appendix E: Proof of Theorem \ref{thm:one} and \ref{thm:pairwise}}
\begin{lemma}\label{lemma:D}
Assume $\|f\|_\infty \leq D$. Under the same condition as Theorems \ref{thm:1} and \ref{thm:2}, if 
\begin{align*}
\sqrt N \geq 32c_0^{-1}\sqrt{2(p + 1)}K^{\frac{3}{2} + \frac{1}{2p}}\sqrt{\log\bigg(\frac{3e N}{K}\bigg)\log\bigg(\frac{8}{\delta}\bigg)},
\end{align*}
then we have $\|\hat f_{ML}\|_\infty \leq 2D$ and if 
\begin{align*}
N > 128e^2K(\log K)^2\log(K/\delta),
\end{align*}
we have $\|\hat f_{KD}\|_\infty \leq 2D$.
\end{lemma}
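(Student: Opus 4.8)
The plan is to use that both estimators are block-constant, so $\|\hat f\|_\infty = \max_k \frac{n_k}{N|A_k|}$, and to factor each block value as
\[
\frac{n_k}{N|A_k|} = \frac{P(A_k)}{|A_k|}\cdot\frac{n_k/N}{P(A_k)}, \qquad P(A_k)=\mathbbm{E}\bm{1}_{A_k}=\int_{A_k}f.
\]
The first factor is a local average of $f$ over $A_k$, hence at most $\|f\|_\infty\le D$; the whole argument then reduces to showing the empirical-to-true probability ratio $\frac{n_k/N}{P(A_k)}$ is at most $2$ on every block with high probability, under the stated sample sizes. I would treat ML and KD separately because the source of control on this ratio differs.

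For the ML case I would first invoke the uniform deviation bound \eqref{eq:concentration}, which holds over \emph{all} $K$-rectangular partitions and therefore over the data-chosen $\{A_k\}$, to get $\max_k|n_k/N - P(A_k)|\le\varepsilon$ with $\varepsilon = 16\sqrt{2(p+1)}\sqrt{(K/N)\log(3eN/K)\log(8/\delta)}$, exactly as in the proof of Theorem~\ref{thm:1}. The feasibility constraint $n_k/N\ge c_0\rho$ in \eqref{eq:prob3} forces $P(A_k)\ge c_0\rho-\varepsilon$; imposing $\varepsilon\le c_0\rho/2$ then gives both $P(A_k)\ge c_0\rho/2>0$ and $n_k/N - P(A_k)\le\varepsilon\le P(A_k)$, so $\frac{n_k/N}{P(A_k)}\le 2$ and $\|\hat f_{ML}\|_\infty\le 2D$. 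Substituting $\rho=K^{-1-1/(2p)}$ into $\varepsilon\le c_0\rho/2$ and solving for $N$ reproduces the displayed condition $\sqrt N\ge 32c_0^{-1}\sqrt{2(p+1)}K^{3/2+1/(2p)}\sqrt{\log(3eN/K)\log(8/\delta)}$.

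For the KD case the ratio is controlled directly by Lemma~\ref{lemma:KD1}: the median-cutting analysis there already yields $\max_k|P(A_k)/\hat P(A_k)-1|\le 4e t\log K$ whenever each split deviates from an exact median by at most $t$, on an event of probability at least $1-K\exp\{-t^2N/(2K)\}$. Choosing $t\le 1/(8e\log K)$ forces this deviation below $1/2$, hence $\hat P(A_k)/P(A_k)\le 2$ and $\|\hat f_{KD}\|_\infty\le 2D$. Converting $t=\sqrt{(2K/N)\log(K/\delta)}\le 1/(8e\log K)$ into a bound on $N$ gives precisely $N>128e^2K(\log K)^2\log(K/\delta)$.

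The main obstacle is that in both cases the cells $\{A_k\}$ are random (data-dependent), so the ratio cannot be bounded blockwise by a naive binomial tail on a fixed set; that work is carried by the uniform concentration over partitions \eqref{eq:concentration} for ML and by the median-deviation control of Lemma~\ref{lemma:KD1} for KD. A secondary subtlety is keeping $P(A_k)$ bounded away from $0$, which is handled for ML by the explicit mass constraint $n_k/N\ge c_0\rho$ (producing the $c_0^{-1}$ factor) and for KD by the near-equal $1/K$ mass of each cell under median splitting.
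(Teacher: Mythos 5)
Your proposal is correct and follows essentially the same route as the paper's proof: the paper likewise reduces the bound to showing $\max_k|P(A_k)-\hat P(A_k)|\leq \tfrac{1}{2}\min_k\hat P(A_k)$, using the uniform concentration over $K$-rectangular partitions \eqref{eq:concentration} together with the feasibility constraint $n_k/N\geq c_0\rho$ for ML, and the median-deviation bound \eqref{eq:dif1} from Lemma~\ref{lemma:KD1} for KD, arriving at the same two sample-size conditions. Your multiplicative ratio $\hat P(A_k)/P(A_k)\leq 2$ versus the paper's additive comparison $\|\hat f-\tilde f\|_\infty\leq D$ is only a presentational difference.
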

\begin{proof}
Assume $\|f\|_\infty \leq D$. We want to bound $\|\hat f_{ML}\|_\infty$ and $\|\hat f_{KD}\|_\infty$. Define 
$$\tilde f = \sum_{A_K} \frac{\int_{A_k}f(\theta)d\theta}{|A_k|} \tmop{1}_{A_k}(\theta),$$
which clearly satisfies $\tilde f\leq D$.
Notice that if there exists some $\epsilon$ such that
\begin{align*}
\max_{A_k} |P(A_k) - \hat P(A_k)| \leq \epsilon,
\end{align*}
where $P(A_k) = \mathbbm{E}\tmop{1}_{A_k}$ and $\hat P(A_k) = \mathbbm{\hat E}\tmop{1}_{A_k}$,
then we have
\begin{align*}
\|\tilde f - \hat f\|_\infty &= \max_{A_k} \bigg|\frac{P(A_k)}{|A_k|} - \frac{\hat P(A_k)}{|A_k|}\bigg| = \max_{A_k} \frac{1}{|A_k|} |P(A_k) - \hat P(A_k)|\\
&\leq \max_{ A_k} \frac{\epsilon \tilde f(\theta)}{P(A_k)} \leq \max_{A_k}\frac{\epsilon D}{\hat P(A_k) - \epsilon}.
\end{align*}
Now if we can pick $\epsilon = \min_{A_k}\hat P(A_k)/2$, then the upper bound becomes $2D$. We deduce the corresponding condition for ML-cut and KD-cut respectively. For maximum likelihood partition, plug in $\epsilon = K^{-1-1/(2p)}/2$ into \eqref{eq:concentration}. Under the condition of Theorem \ref{thm:1}, if 
\begin{align*}
\sqrt N \geq 32c_0^{-1}\sqrt{2(p + 1)}K^{\frac{3}{2} + \frac{1}{2p}}\sqrt{\log\bigg(\frac{3e N}{K}\bigg)\log\bigg(\frac{8}{\delta}\bigg)},
\end{align*}
then with probability at least $1 - \delta/2$, we have 
\begin{align*}
\|\hat f_{ML}\|_\infty \leq 2D.
\end{align*}
For median partition, choose $\epsilon = K^{-1}/2$ and apply \eqref{eq:dif1}. Under the condition of Theorem \ref{thm:2}, if
\begin{align*}
N > 128e^2K(\log K)^2\log(K/\delta),
\end{align*}
then with probability at least $1 - \delta$, we have
\begin{align*}
\|\hat f_{KD}\|_\infty \leq 2D.
\end{align*}
\end{proof}

\begin{proof}[\textbf{Proof of Theorem \ref{thm:one}}]
Assume the average total variation distance between $\hat f^{(i)}$ and $f^{(i)}$ is $\varepsilon$. It can be calculated directly as
\begin{align*}
 \int \left| \prod_{i \in I} f^{(i)} (\theta) - \prod_{ i \in I}^m \hat{f}^{(i)} (\theta) \right| d\theta & \leq \sum_{i = 1}^m \int | f^{(i)} (\theta) - \hat{f}^{(i)} (\theta) | \prod_{j = 1}^{i - 1} f^{(j)} (\theta) \prod_{l = i + 1}^m \hat{f}^{(l)}(\theta)\\
 & \leq (2D)^{m - 1} \sum_{i = 1}^m \int | f^{(i)} (\theta) - \hat{f}^{(i)} (\theta) | d\theta\\
 & \leq m(2D)^{m - 1}\varepsilon.
\end{align*}
Letting $\hat Z_I = \int\prod_{i \in I} f^{(i)}$, we have
\begin{align*}
|Z_I - \hat{Z_I} | = \left| \int \left(\prod_{i = 1}^m f^{(i)} (\theta) - \prod_{i = 1}^m \hat{f}^{(i)} (\theta) \right) d\theta \right| \leq\int\left| \prod_{i = 1}^m f^{(i)} (\theta) - \prod_{i = 1}^m \hat{f}^{(i)} (\theta)\right| d\theta \leq m(2D)^{m - 1}\varepsilon.
\end{align*}
Thus
\begin{align*}
\|f_I - \hat f_I\|_1 &= \int\left|\frac{1}{Z_I} \prod_{i = 1}^m f^{(i)}(\theta) - \frac{1}{\hat{Z_I}}\prod_{i = 1}^m \hat{f}^{(i)} (\theta) \right| d x = \int \left|\frac{\hat{Z_I} \prod_{i\in I} f^{(i)} - Z_I \prod_{i\in I} \hat{f}^{(i)}}{Z_I \hat{Z_I}} \right|d\theta\\
 & =  \int \left|\frac{\hat{Z_I} \prod_{i\in I} f^{(i)} - \hat{Z_I} \prod_{i\in I}\hat{f}^{(i)} + \hat{Z_I} \prod_{i\in I} \hat{f}^{(i)} - Z_I \prod_{i\in I} \hat{f}^{(i)}}{Z_I\hat{Z_I}} \right| d\theta\\
 & \leq \frac{1}{Z_I} \int \left| \prod_{i\in I} f^{(i)} - \prod_{i\in I}\hat{f}^{(i)} \right| d\theta + \frac{1}{Z_I} | \hat{Z_I} - Z_I |\\
 & \leq\frac{2}{Z_I} m(2D)^{m - 1}\varepsilon.
\end{align*}

\end{proof}

\begin{proof}[\textbf{Proof of Theorem \ref{thm:pairwise}}]
Assuming $m \in [2^s, 2^{s + 1})$, then after $s + 1$ iterations, we will obtain our final aggregated density. At iteration $l$, each true density is some aggregation of the original $m$ densities, which can be represented by $f_{I'}$, where $I'$ is the set of indices of the original densities. Let $\varepsilon_l^{(I_1,I_2)}$ be the total variation distance between the true density and the approximation for the pair $(I_1,I_2)$ caused by combining. For example, when $l = 1$, $I_1, I_2$ contain only a single element, i.e., $I_1 = \{i_1\}$ and $I_2 = \{i_2\}$. Recall that $C_0 = \max_{I''\subset I'\subseteq I} Z_{I''}Z_{I'\setminus I''}/Z_{I'}$, using the result from Theorem \ref{thm:one}, we have
\begin{align*}
\varepsilon_1^{(I_1,I_2)} = \bigg\|\frac{f^{(i_1)}f^{(i_2)}}{\int f^{(i_1)}f^{(i_2)}} - \frac{\hat f^{(i_1)}\hat f^{(i_2)}}{\int \hat f^{(i_1)}\hat f^{(i_2)}} \bigg\|_1 \leq \frac{2}{\int f^{(i_1)}f^{(i_2)}} 2D \varepsilon = \frac{2\int f^{(i_1)} \int f^{(i_2)}}{\int f^{(i_1)}f^{(i_2)}} 2D \varepsilon \leq 4C_0D\varepsilon.
\end{align*}
We prove the result by induction. Assuming we are currently at iteration $l+1$, and the paired two densities are $f_{I_1}$ and $f_{I_2}$ where $I_1, I_2\subseteq I$. By induction, the approximation obtained at iteration $l$ are $\hat f_{I_1}$ and $\hat f_{I_2}$ which satisfies that 
\begin{align*}
\|f_{I_1} - \hat f_{I_1}\|_1\leq (4C_0D)^{l}\varepsilon,\quad \|f_{I_2} - \hat f_{I_2}\|_1\leq (4C_0D)^{l}\varepsilon.
\end{align*}
Using Theorem \ref{thm:one} again, we have that
\begin{align*}
\varepsilon_{l + 1}^{(I_1, I_2)} &= \bigg\|\frac{f_{I_1}f_{I_2}}{\int f_{I_1}f_{I_2}} - \frac{\hat f_{I_1}\hat f_{I_2}}{\int \hat f_{I_1}\hat f_{I_2}}  \bigg\|_1 \leq \frac{2}{\int f_{I_1}f_{I_2}} (2D) \bigg\{\frac{\|f_{I_1} - \hat f_{I_1}\|_1 + \|f_{I_2} - \hat f_{I_2}\|_1}{2}\bigg\} \\
&\leq \frac{\int \prod_{i\in I_1} f^{(i)}\int \prod_{i\in I_2} f^{(i)}}{\int \prod_{i\in I_1\cup I_2}f^{(i)}} (4D)\cdot (4C_0D)^{l}\varepsilon \leq (4C_0D)^{l+1}\varepsilon
\end{align*}
Consequently, the final approximation satisfies that
\begin{align*}
\|f_I - \hat f_I\|_1 \leq (4C_0D)^{s+1}\varepsilon \leq (4C_0D)^{\log_2 m + 1}\varepsilon.
\end{align*}
\end{proof}

\section*{Appendix D: Supplement to Two Toy Examples}
\paragraph{Bimodal Example}
Figure \ref{fig:example-bimodal} compares the aggregated density of \emph{PART-KD/PART-ML} for several alternative combination schemes to the true density. This complements the results from one-stage combination with uniform block-wise distribution presented in Figure 1 of the main text. 

\begin{figure}[!htb]
\centering
\includegraphics[width=0.5\textwidth]{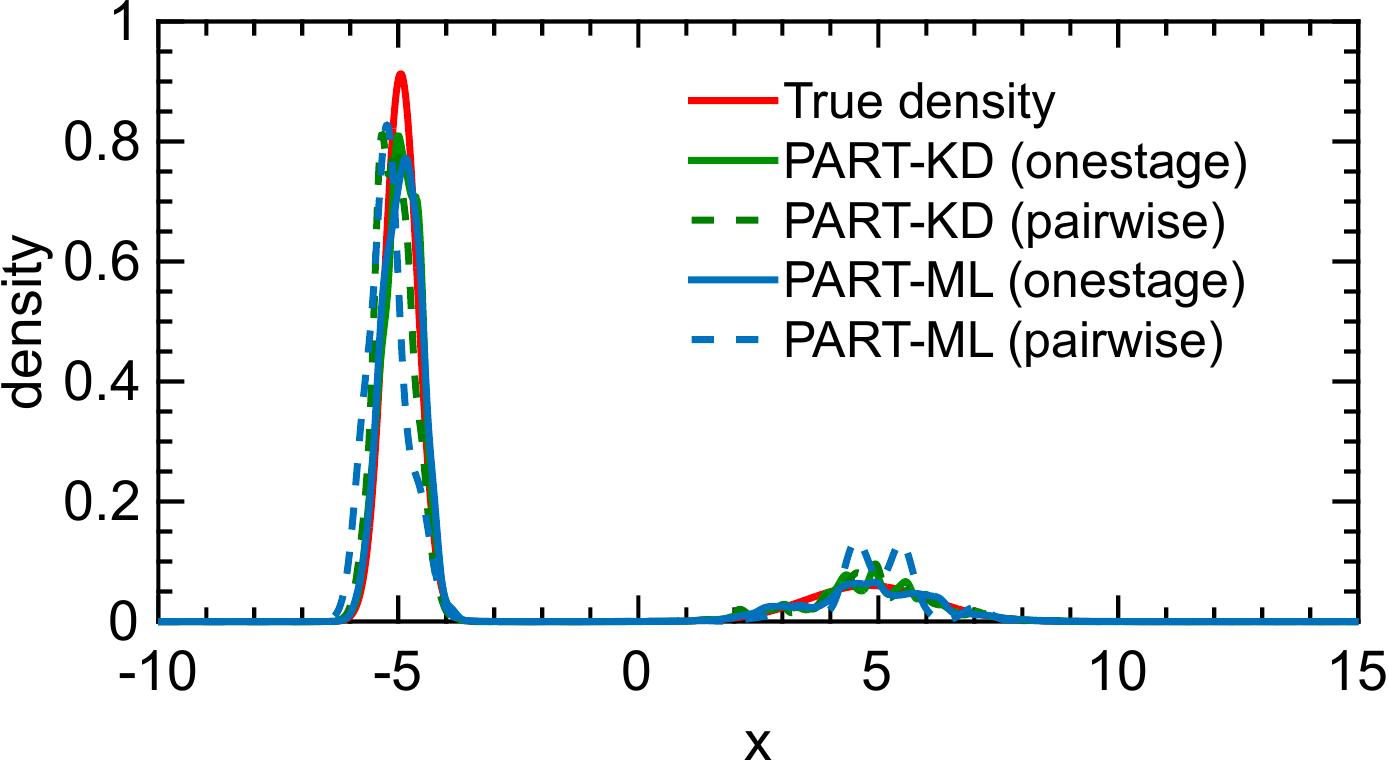}
\caption{Bimodal posterior combined from 10 subsets. The results from \emph{PART-KD/PART-ML} multiscale histograms are shown for (1) one-stage combination with local Gaussian smoothing (2) pairwise combination with local Gaussian smoothing.}
\label{fig:example-bimodal}
\end{figure}

\paragraph{Rare Bernoulli Example} The left panel of Figure \ref{fig:example-rare} shows additional results of posteriors aggregated from \emph{PART-KD/PART-ML} random tree ensemble with several alternative combination strategies, which complement the results presented in Figure 2 of the main text. All of the produced posteriors correctly locate the posterior mass despite the heterogeneity of subset posteriors. The fake ``ripples'' produced by pairwise ML aggregation are caused by local Gaussian smoothing. 

Also, the right panel of Figure \ref{fig:example-rare} shows that the posteriors produced by nonparametric and semiparametric methods miss the right scale. 

\begin{figure}[!htb]
\centering
\includegraphics[width=0.44\textwidth]{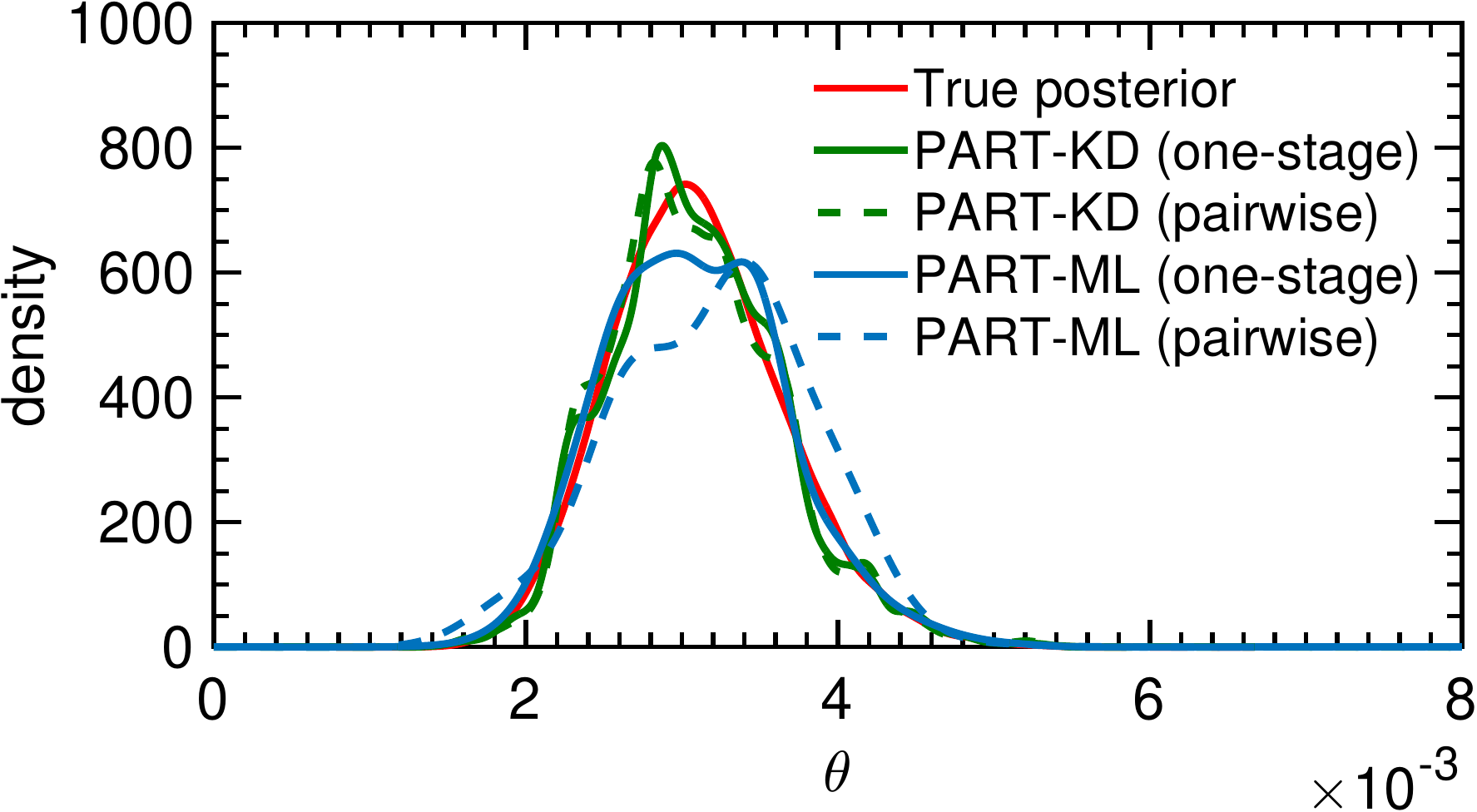}
\includegraphics[width=0.44\textwidth]{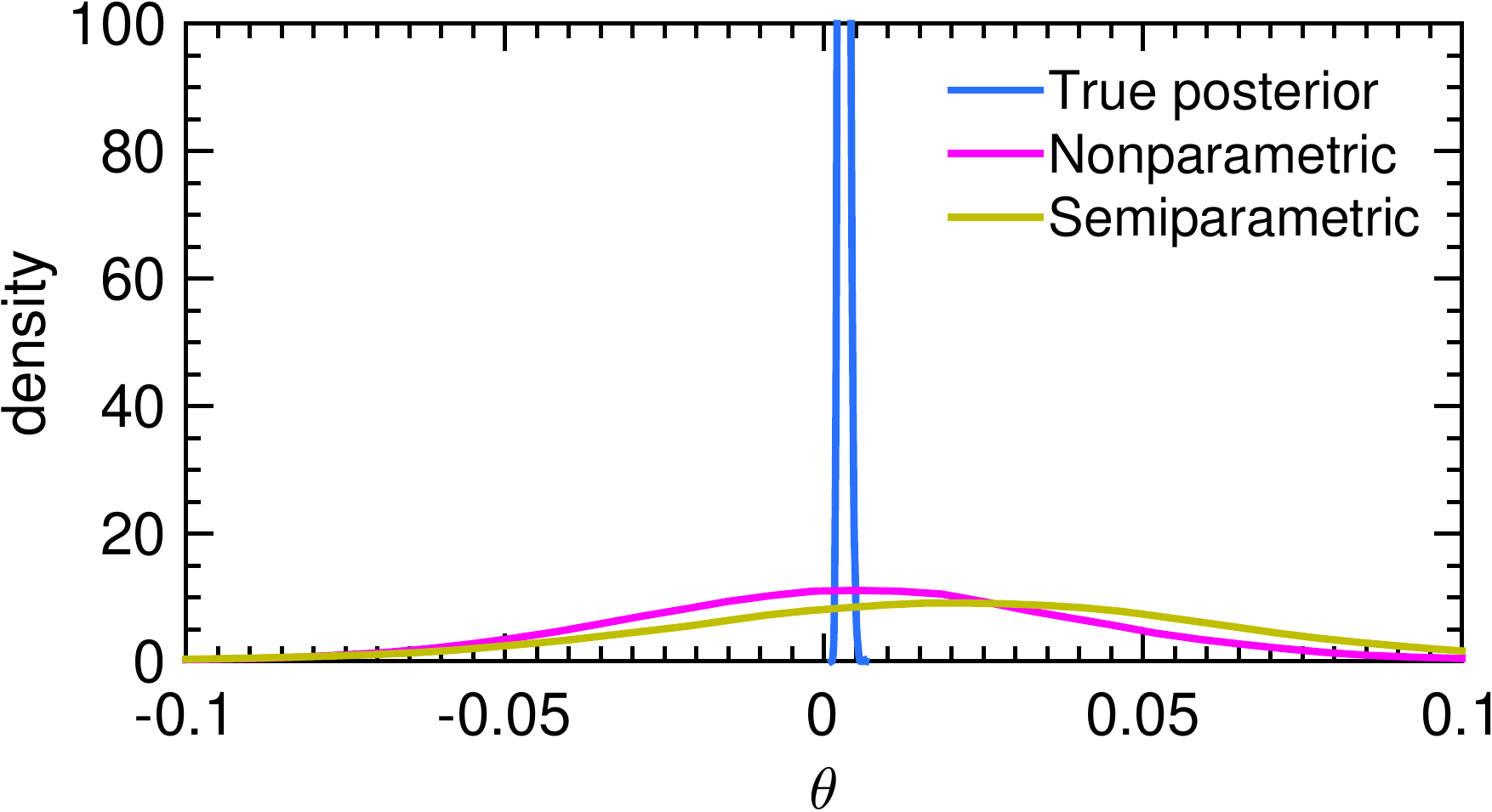}
\caption{Posterior of the probability $\theta$ of a rare event combined from $M=15$ subsets of independent Bernoulli trials. Left: the results from KD/ML multiscale histograms are shown for (1) one-stage combination with local Gaussian smoothing (2) pairwise combination with local Gaussian smoothing. Right: posterior aggregated from nonparametric and semiparametric methods. }
\label{fig:example-rare}
\end{figure}

\section*{Appendix F: Supplement to Bayesian Logistic Regression}
Figure \ref{fig:realdata} additionally plots the prediction accuracy against the length of subset chains supplied to the aggregation algorithms, for Bayesian logistic regression on two real datasets. For simplicity, the same number of posterior samples from all subset chains are aggregated, with the first 20\% discarded as burn-in. As a reference, we also show the result for running the full chain. As can be seen from Figure \ref{fig:realdata}, the performance of \emph{PART-KD/ML} agrees with that of the full chain as the number of posterior samples increase, validating the theoretical results presented in Theorem 1 and Theorem 2 in the main text.

\begin{figure}[!htb]
\centering
\includegraphics[width=0.47\textwidth]{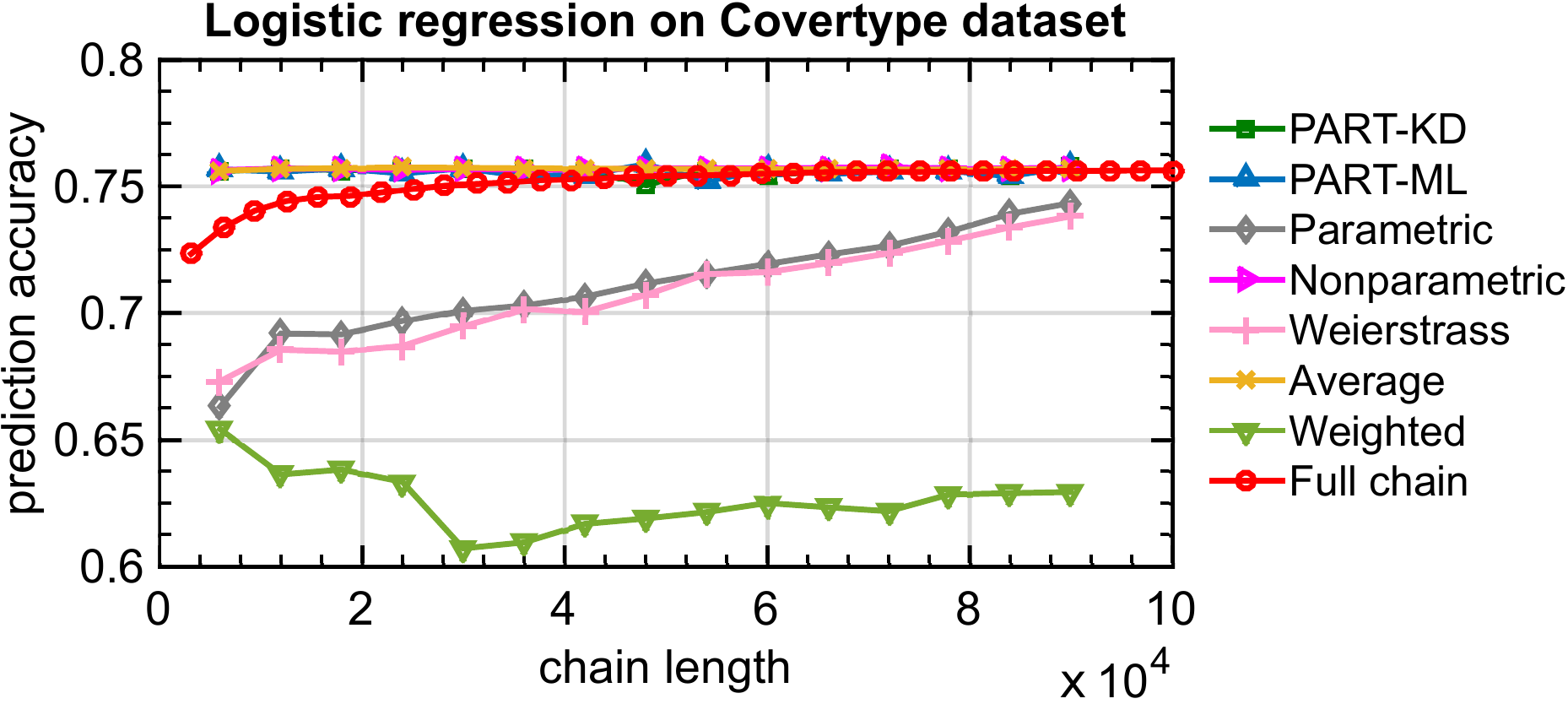}
\includegraphics[width=0.47\textwidth]{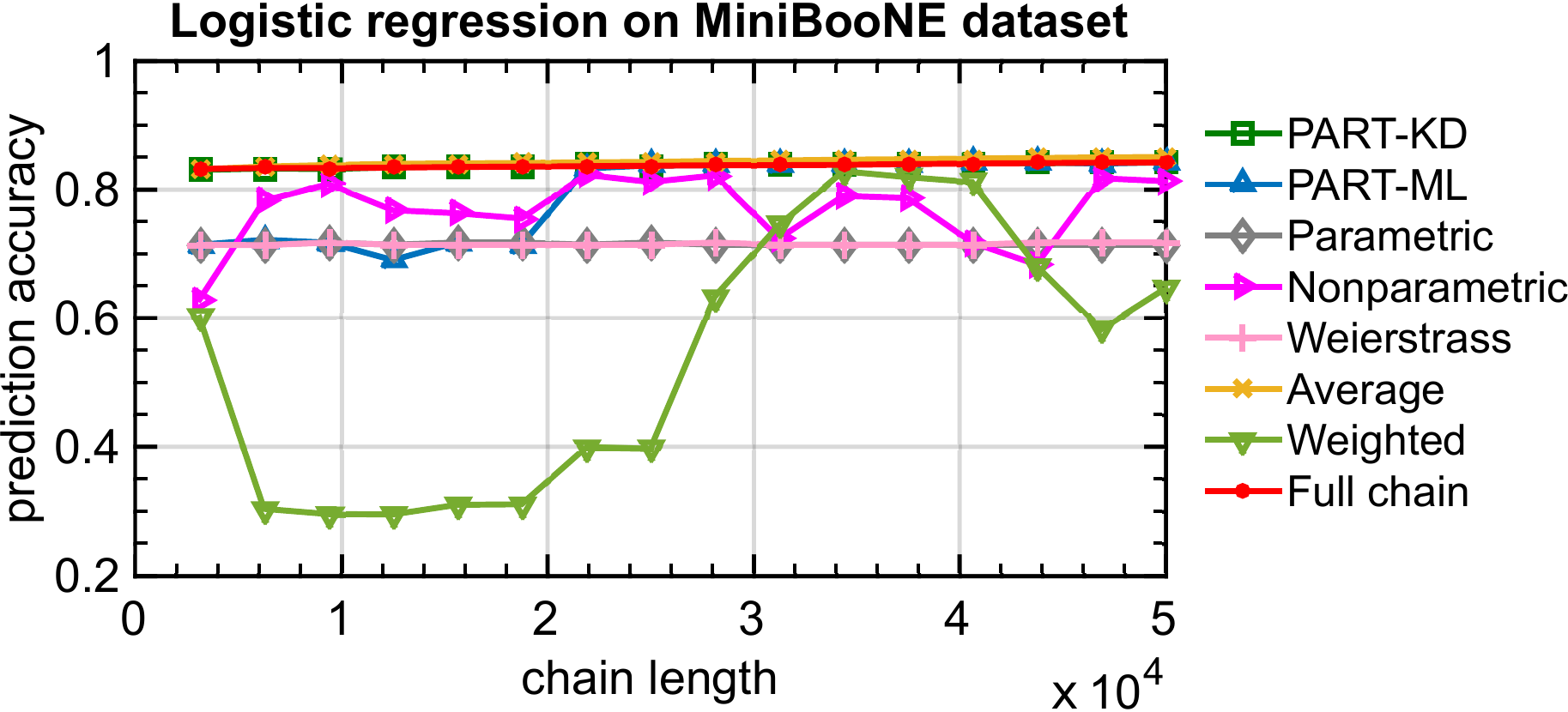}
\caption{Prediction accuracy versus the length of subset chains on the \emph{covertype} and the \emph{MiniBooNE} dataset.}
\label{fig:realdata}
\end{figure}

\newpage
\bibliographystyle{unsrt}
\bibliography{tree15}

\end{document}